\pdfoutput=1
\documentclass[10pt,twocolumn]{article}
\usepackage[letterpaper,textwidth=6.75in,textheight=9.25in,columnsep=0.25in,centering]{geometry}
\usepackage[round]{natbib}

\PassOptionsToPackage{hyphens}{url}
\usepackage{amsmath,amssymb,amsthm,mathtools,graphicx,booktabs,microtype,tikz}
\usepackage{hyperref,url}
\hypersetup{hidelinks,pdftitle={Verification with Transfer: Exact Information Frontiers and Their Price in Calls},pdfauthor={Hazar Yueksel}}
\usepackage[capitalise,noabbrev]{cleveref}
\usepackage{alphalph}
\newtheorem{theorem}{Theorem}
\newtheorem{proposition}[theorem]{Proposition}
\newtheorem{lemma}[theorem]{Lemma}
\newtheorem{corollary}[theorem]{Corollary}
\theoremstyle{definition}
\theoremstyle{remark}
\makeatletter
\@ifl@t@r\fmtversion{2021-06-01}{%
  \AddToHook{env/proposition/begin}{\crefalias{theorem}{proposition}}%
  \AddToHook{env/lemma/begin}{\crefalias{theorem}{lemma}}%
  \AddToHook{env/corollary/begin}{\crefalias{theorem}{corollary}}%
  \AddToHook{env/definition/begin}{\crefalias{theorem}{definition}}%
  \AddToHook{env/remark/begin}{\crefalias{theorem}{remark}}%
  \AddToHook{cmd/appendix/after}{\crefalias{section}{appendix}\crefalias{subsection}{subappendix}\crefalias{subsubsection}{subsubappendix}}%
}{}
\makeatother
\newcommand{\E}{\mathbb E}
\newcommand{\Pp}{\mathbb P}
\newcommand{\KL}{D_{\mathrm{KL}}}
\newcommand{\aux}{B_{\mathrm{aux}}}
\newcommand{\Ls}{\mathcal L_n}
\newcommand{\cc}{c_{\theta L}}
\newcommand{\R}{\mathcal R_n}
\newcommand{\ind}{\mathbf 1}
\newcommand{\given}{\mid}
\newcommand{\acc}{\mathrm{acc}}
\newcommand{\F}{\mathbb F_2}
\newcommand{\NeuralParams}{58{,}816}
\newcommand{\NoiseUp}{0.9900}
\newcommand{\NoiseAfter}{0.0793}

\newcommand{\QSPLargeTime}{5.15}

\newcommand{\VFiveIIDExchange}{0.734}
\newcommand{\VFiveIIDRegret}{2.83}
\newcommand{\VFiveIIDPortfolio}{0.750}
\newcommand{\VFiveIIDPortfolioRegret}{2.28}
\newcommand{\VFiveIIDRegressions}{1}
\newcommand{\VFiveClusterExchange}{0.812}
\newcommand{\VFiveClusterRegret}{2.36}
\newcommand{\VFiveClusterPortfolio}{0.875}
\newcommand{\VFiveClusterPortfolioRegret}{1.10}
\newcommand{\VFiveClusterRegressions}{2}

\newcommand{\VFiveNoiseSingle}{0.402}
\newcommand{\VFiveNoiseTriple}{0.9979}
\newcommand{\VFiveNoiseConditional}{0.877}
\newcommand{\VFiveCleanCap}{9.25}
\newcommand{\VFiveTripleCap}{25.25}
\newcommand{\VFiveBiasedUniform}{0.642}
\newcommand{\VFiveBiasedMatched}{0.876}
\newcommand{\VFiveTestCount}{199}
\newcommand{\VFivePlannedCells}{559}
\newcommand{\VFiveTotalCells}{720}
\newcommand{\VFiveDecodedContexts}{286,208}
\newcommand{\BayesLossDEight}{0.48}
\newcommand{\BayesLossDEleven}{0.50}
\newcommand{\BayesLossDFive}{0.44}
\newcommand{\BayesLossDSeven}{0.48}
\newcommand{\BayesLossDSix}{0.48}

\newcommand{\CurEightBayesQsp}{1.8}

\newcommand{\CurEightElimNone}{34.8}
\newcommand{\CurEightElimQsp}{20.9}

\newcommand{\CurEightSpreadLearner}{1.4}
\newcommand{\CurEightSpreadRef}{24.0}

\newcommand{\CurElevenRhoGreedy}{6.58}

\newcommand{\CurElevenRhoQsp}{7.00}

\newcommand{\CurFiveBlindNone}{16.0}
\newcommand{\CurFiveBlindQsp}{2.7}
\newcommand{\CurFiveBlindRandom}{7.8}

\newcommand{\CurFiveBlindRefQsp}{2.7}
\newcommand{\CurFiveBlindRefRandom}{5.3}

\newcommand{\ICexactDFive}{12}

\newcommand{\ICfracDEight}{14\%}
\newcommand{\ICfracDEightMax}{20\%}
\newcommand{\ICfracDEightMin}{0\%}
\newcommand{\ICfracDEleven}{0\%}

\newcommand{\ICfracDSeven}{49\%}
\newcommand{\ICfracDSevenMax}{78\%}
\newcommand{\ICfracDSevenMin}{0\%}
\newcommand{\ICfracDSix}{84\%}
\newcommand{\ICfracDSixMax}{100\%}
\newcommand{\ICfracDSixMin}{49\%}

\newcommand{\ICrunsDFive}{12}

\newcommand{\IWgrpoRuns}{27}

\newcommand{\IWgrpoStuckCallsPct}{51\%}
\newcommand{\IWgrpoStuckRuns}{16}

\newcommand{\IWklStuckCallsPct}{47\%}
\newcommand{\IWklStuckRuns}{13}

\newcommand{\SPbigSixDEightfracQsp}{91\%}

\newcommand{\SeedMaxEight}{12\%}
\newcommand{\SeedMaxSeven}{26\%}
\newcommand{\SeedMaxSix}{66\%}

\newcommand{\SeedMinEight}{0\%}
\newcommand{\SeedMinSeven}{1\%}
\newcommand{\SeedMinSix}{49\%}

\newcommand{\CanonicalSentence}{The plain decoder, trained on that same bank, uses 40\% with the planner's sources (37\% over the three curricula).}

\newcommand{\CtrlResultsParagraph}{On the low banks the decoder used 66\%--69\% of the delivered bits, and on the high banks 42\%--67\%.}
\newcommand{\DegHighMax}{78\%}

\newcommand{\DegLowMin}{99\%}

\newcommand{\DegreeConclusion}{P1 failed: every low-degree run lost between 0.92 and 1.03 of its 3 delivered bits, so low degree is not sufficient. In ten of the twelve runs, low- and high-degree alike, the decoder used between 1.97 and 2.08 of the 3 bits. P2 held on the controlled banks, but its bound fails on two of the 15 banks above: bank 0 at $d=6$ converted 1.98 and 1.90 bits under two further training seeds with only one bit of degree at most two, and bank 2 at $d=8$ converted 0.77 bits with no bit of degree at most two. Degree therefore neither guarantees nor bounds use.}
\newcommand{\DegreeTestSentence}{To separate degree from dimension, we fixed $(k,h,J)=(4,2,12)$, chose three new banks of each kind from the bank alone, and recorded two predictions before training: (P1) the decoder uses at least $95\%$ of the delivered bits on every low-degree bank and less on every high-degree bank; (P2) on every high-degree bank, the bits it uses are at most the number of determined bits of degree at most two, plus $0.5$. Over two training seeds each, it used 66\%--69\% of the delivered bits on the low-degree banks and 42\%--67\% on the high-degree ones.}
\newcommand{\FrontierGapInter}{0.28}
\newcommand{\FrontierGapSF}{0.35}
\newcommand{\LongSentence}{Training four times longer (24,000 steps, same seeds): on bank 202609500 the final loss moved from 0.579 to 0.541 nats (Bayes-optimal 0.471) and the bits used from 2.17 to 3.00 of 3; on bank 202609503 the final loss moved from 0.490 to 0.482 nats (Bayes-optimal 0.445) and the bits used from 2.00 to 3.00 of 3, at the frontier for every source budget; on bank 202609512 the final loss moved from 0.549 to 0.519 nats (Bayes-optimal 0.463) and the bits used from 2.00 to 2.00 of 3; on bank 202609506 the final loss moved from 0.621 to 0.527 nats (Bayes-optimal 0.480) and the bits used from 1.16 to 2.76 of 3; on bank 202609518 the final loss moved from 0.568 to 0.530 nats (Bayes-optimal 0.466) and the bits used from 2.00 to 2.01 of 3; on bank 202609534 the final loss moved from 0.564 to 0.540 nats (Bayes-optimal 0.477) and the bits used from 2.00 to 2.00 of 3.}

\newcommand{\LongTestSentence}{Scored on all 64 secrets with one training seed (the $6{,}000$-step ranges of \cref{sec:neuralexp} use four secrets and two seeds), the low-degree banks used 67\%--100\% of the delivered bits and the high-degree banks 67\%--92\%; prediction (L1) failed and (L2) failed.}
\newcommand{\PerBitN}{six}
\newcommand{\PerBitUsedMin}{0.99}
\newcommand{\PerBitLostMax}{0.14}
\newcommand{\PerBitLostList}{$a_0$ on two banks (degrees 2, 2); $a_1$ on four banks (degrees 2, 3, 4, 4); $a_2$ on one bank (degree 4)}

\newcommand{\PerBitSecretsGap}{0.21}
\newcommand{\PerBitFullByPos}{$a_0$ on no bank, $a_1$ on two, $a_2$ on four and $a_3$ on all six banks}
\begin{document}
\twocolumn[{%
\centering
{\LARGE\bfseries Verification with Transfer:\\[2pt] Exact Information Frontiers and Their Price in Calls\par}
\vspace{1.2em}
{\large Hazar Yueksel\par}
\vspace{0.3em}
\texttt{hazar@hazaryueksel.com}\par
\vspace{1.8em}
}]
{\renewcommand{\thefootnote}{}\footnotetext{The Lean~4 formalization referred to in the text is available as ancillary
files on arXiv. The code archive referred to in the text is not included in this version.}}

\begin{abstract}
A verifier that accepts or rejects whole answers reveals little: under a flat prior over $k$-bit answers, zero error needs $2^k-1$ verifications. The usual remedy is to solve related source tasks, either all first, as a curriculum does, or interleaved with verification. We price this remedy in information and in calls. With an exact verifier, the least causal information that any interleaving of source calls and $n$ verifications needs to succeed with probability $s$ is a list rate--distortion function, attained by one observation before any verification. It lower-bounds the expected number of binary source calls, which designed sources meet within $1+\log_25$ calls for unique answers and within a logarithmic term in general, where no additive constant suffices. With an exact verifier and fixed sources, moving every call before the first verification preserves all hard caps on calls, although interleaving can save unboundedly many expected calls; under a noisy verifier, source-first protocols can lose unbounded factors in information and in error. For linear banks over $\F$, optimal accuracy has a closed form, and after a polynomial-time reduction the budget profile is computable in time $2^{O(h^2)}\operatorname{poly}(J,k+h)$ for $J$ sources and nuisance dimension $h$. In these banks, for zero error under a hard cap, the calls beyond the rounded-up information price are exactly those spent on nuisance. Every numbered result apart from two clauses about the planner is machine-checked in Lean~4, assuming two published results. Used as a ruler, the frontier shows a small transformer using all delivered bits at latent dimension $5$ and none at $11$ within fixed training budgets; in a test with predictions recorded before training, low XOR degree of the target bits did not suffice for their use.
\end{abstract}
\section{Introduction}
A verifier that accepts or rejects a complete answer reveals little per call. A learner that has made $n$ verifications succeeds only if it has tried the answer or can output it as a final guess, so its success probability is at most the prior mass of the $n+1$ answers it could have covered (Proposition~\ref{prop:coverage}). For a $k$-bit answer with a flat prior, zero error takes $2^k-1$ verifications. The same limit appears in exact learning with equivalence queries \citep{angluin1988queries}, in guessing \citep{massey1994,arikan1996}, and in reinforcement learning from verifiable rewards, where a policy that has never been rewarded receives no gradient \citep{shao2024}.

The usual remedy is transfer: related, easier source tasks whose answers exclude many target answers at once. Curricula and autocurricula exploit exactly this structure \citep{bengio2009curriculum,rajaraman2026}. They also raise a scheduling question. Should a learner work through its source tasks first and only then turn to the target, as a curriculum does, or interleave, choosing each source in light of what the verifier has already said? We answer this and the questions around it, exactly for information and, with fixed sources, for hard caps. A learner has a latent world, a target verifier, and source tasks that reveal information about the world, and it may interleave source calls and verifications in any order. How much source information does it need to succeed with probability $s$ within $n$ verifications? How many source calls? When does putting all source calls first lose nothing? For a concrete family of banks, what does each call buy, and how is the best set of calls found?

The answers separate three regimes for \emph{source-first} protocols, which make every source call before the first verification, as a curriculum does. Under exact verification they lose nothing in information when the sources can be designed, nothing in hard caps on calls when the sources are fixed, and, for unique answers, only a constant number of expected calls with designed sources. Against a given set of sources they can lose an unbounded number of expected calls, and more information. Under a noisy verifier they can lose unbounded factors in information and in error.

\paragraph{Contributions.}
\begin{itemize}
\item \emph{Information frontier} (\cref{sec:model}). The least causal auxiliary information over all interleaved protocols equals a list rate--distortion function, and one observation before any verification attains it (\cref{thm:exact}). The converse is a change of measure that keeps exact verifications in the reference law, so they cancel and are never charged; the dual is the classical rate--distortion dual in list-coverage coordinates.
\item \emph{Call frontier} (\cref{sec:model}). Binary source calls cost at least this information in expectation. For unique answers the bound is tight to an additive constant at every success level ($1+\log_25$, and $2$ at zero error), whereas for general acceptance sets the gap between chromatic and graph entropy makes the excess unbounded, though at most logarithmic (\cref{thm:calls,prop:general}, \cref{fig:frontiers}). Against a given set of sources, interleaving can save an unbounded number of expected calls, but for unique answers it saves only a constant over designed source-first protocols (\cref{prop:interleave}). With fixed sources, moving every source call before the first verification preserves hard caps on calls (\cref{thm:fixed}). Under a noisy verifier the converse survives and is exact on our instance, but source-first protocols can lose unbounded factors in information and in error (\cref{prop:noisy}).
\item \emph{Linear banks} (\cref{sec:planner}). When the target and the sources are linear over $\F$ and the prior is uniform, the optimal accuracy under any interleaving is $\min\{1,(n+1)2^{-k+\rho_b}\}$, where $\rho_b$ is the number of target dimensions the best $b$ sources resolve (\cref{thm:rank}). The profile $b\mapsto\rho_b$ inverts a relative generalized Hamming weight hierarchy, and after a polynomial-time reduction a planner computes it for $J$ sources in time $2^{O(h^2)}\operatorname{poly}(J,k+h)$ in the nuisance dimension $h$ (\cref{thm:qsp}). For zero error under a hard cap, the calls a bank needs beyond the rounded-up information price are exactly the calls its optimal subset spends on nuisance (\cref{cor:waste}).
\item \emph{A ruler for learned decoders} (\cref{sec:neuralexp}). On such banks we measure how much of the delivered information a small transformer turns into target probability, and what the rest costs in verifications.
\end{itemize}
Several ingredients are classical and credited where used (list-Fano and rate--distortion duals, guessing with side information, interactive change of measure, relative weight hierarchies; \cref{sec:related}). What is new is the accounting when verification and transfer interleave: exact verifications are never charged, so the interleaved frontier equals the static one (\cref{thm:exact}); reordering is free under hard caps with fixed sources and an exact verifier, but not in expected calls or under noise (\cref{thm:fixed,prop:interleave,prop:noisy}); and in a linear bank at zero error the calls beyond the rounded-up information price are exactly the nuisance an optimal subset pays for (\cref{cor:waste}). Every numbered result apart from two clauses about the planner is machine-checked in Lean~4; the scope is stated at the start of the appendix.
\section{Verification with side information}\label{sec:model}
\begin{figure}[t]
\centering
\begin{tikzpicture}[x=0.84cm,y=1cm,font=\footnotesize,>=latex]
 \node[draw,rounded corners,fill=gray!10,minimum width=1.3cm,minimum height=.9cm,align=center] (world) at (0,0) {world\\$\Theta\sim q$};
 \node[draw,rounded corners,minimum width=2.6cm,minimum height=.75cm,align=center] (src) at (2.95,0.62) {auxiliary call $c$\\[-1pt]\scriptsize $Z\sim K_t(\cdot\mid\Theta,\mathfrak h,c)$};
 \node[draw,rounded corners,minimum width=2.6cm,minimum height=.75cm,align=center] (tgt) at (2.95,-0.62) {verification of $y$\\[-1pt]\scriptsize $f_\Theta(y)\in\{0,1\}$};
 \node[draw,rounded corners,fill=blue!6,minimum width=1.45cm,minimum height=1.9cm,align=center] (lrn) at (5.95,0) {learner\\[1pt]\scriptsize $Q$ calls\\\scriptsize $N\le n$\\\scriptsize output $\widehat y$};
 \draw[->] (world.east) -- (src.west); \draw[->] (world.east) -- (tgt.west);
 \draw[<->] (src.east) -- (lrn.west|-src.east); \draw[<->] (tgt.east) -- (lrn.west|-tgt.east);
\end{tikzpicture}
\caption{\textbf{Verification with transfer.} Auxiliary calls reveal information about the world; a verification accepts or rejects a whole response, and an accepted response may be output. We price the auxiliary information $\aux$ (defined in~\eqref{eq:budget}) and the calls $Q$ needed to succeed with probability $s$ within $n$ verifications.}\label{fig:protocol}
\end{figure}
\paragraph{Model.}
A latent world $\Theta\in[M]$ has a full-support prior $q$. A finite response set $\mathcal Y$ and known maps $f_\theta:\mathcal Y\to\{0,1\}$ define the target task. Every world has an acceptable response, and acceptance sets may overlap; \emph{unique answers} means $\mathcal Y=[M]$ and $f_\theta(y)=\ind\{\theta=y\}$. At each step the learner does one of three things. It \emph{verifies} a response $y$ and observes $f_\Theta(y)$. It makes an \emph{auxiliary call} $c$ and observes $Z\sim K_t(\cdot\given\Theta,\mathfrak h,c)$, a finite-alphabet kernel that may depend on the whole history $\mathfrak h$. Or it stops and outputs $\widehat y$ (\cref{fig:protocol}). A request $c$ is any label from a finite set and may encode the learner's coins; the history records every request and reply. Actions depend on the history and a private seed $\xi$ independent of $\Theta$; $C_t$ is the $t$th action and $H_{t-1}$ the history before it. Every input correlated with $\Theta$, including an informed source's choice of what to reveal, enters through a charged auxiliary call. We write $N$ for the number of verifications, $Q$ for the number of auxiliary calls, and $\acc=f_\Theta(\widehat y)$. The horizon is finite. Sources are \emph{designed} when the learner may choose every kernel $K_t$, and \emph{fixed} when the kernels are given in advance and do not depend on verifications (made precise before \cref{thm:fixed}).

Verification differs from twenty questions because an accepted response is itself a valid output, so a learner without auxiliary calls is effectively a list. Let $\Ls$ be the lists of at most $n+1$ responses, and let
\begin{equation}\label{eq:coverage}
\begin{aligned}
 \cc&=\ind\{\exists y\in L:f_\theta(y)=1\},\\
 V_n(r)&=\max_{L\in\Ls}\textstyle\sum_\theta r_\theta\cc .
\end{aligned}
\end{equation}
\begin{proposition}[Local benchmark]\label{prop:coverage}
Without auxiliary calls, the best accuracy under $N\le n$ with prior $r$ is $V_n(r)$, and every stopping rule has
\begin{equation}\label{eq:stopped}
 \Pp_r(\acc=1,N\le n)\le V_n(r).
\end{equation}
\end{proposition}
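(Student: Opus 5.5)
Both directions come from the fact that, without auxiliary calls, the only information about $\Theta$ is the sequence of verification outcomes, and such a learner behaves like a list. For achievability, fix a maximizer $L^\star\in\Ls$ of~\eqref{eq:coverage}, say $L^\star=\{y_1,\dots,y_m\}$ with $m\le n+1$. The learner verifies $y_1,\dots,y_{m-1}$ in order. It stops and outputs the first accepted response. If none is accepted, it outputs $y_m$ without verifying it. This uses $N\le m-1\le n$, and it succeeds exactly when some $y\in L^\star$ is acceptable for $\Theta$. Its accuracy is therefore $\sum_\theta r_\theta c_{\theta L^\star}=V_n(r)$. The converse, the best accuracy being at most $V_n(r)$, follows from~\eqref{eq:stopped}, since any protocol with $N\le n$ almost surely has $\Pp_r(\acc=1)=\Pp_r(\acc=1,N\le n)$.

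To prove~\eqref{eq:stopped}, first condition on the private seed $\xi$. It is independent of $\Theta$, so the prior given $\xi$ is still $r$. If the bound holds for every deterministic learner, averaging over $\xi$ gives it for randomized ones. So fix a deterministic learner. Because there are no auxiliary calls, every history is a string of verification bits. A deterministic learner's next action depends only on that string, so the learner is a finite binary decision tree. Each internal node is a verified response, and each leaf is a stop with an output. Now follow the \emph{all-reject} path, which answers $0$ to every verification. Let $y_1,y_2,\dots$ be the responses verified along it, up to the point where either the learner stops, with output $\widehat y_0$, or the $(n+1)$st verification is about to be made. Let $L$ consist of the first $n$ of the $y_i$ together with $\widehat y_0$ if the path stops within $n$ verifications. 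Then $|L|\le n+1$, so $L\in\Ls$.

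The key claim, and the step that needs the most care, is the inclusion $\{\acc=1,N\le n\}\subseteq\{c_{\Theta L}=1\}$ on every $\theta$. We split by whether world $\theta$ accepts some response on the all-reject path within the first $n$ verifications.
\begin{itemize}
\item If $\theta$ does accept one, let $y_i$ be the first. Then $y_i\in L$, so $c_{\theta L}=1$.
\item If $\theta$ rejects all of them, the realized trajectory under $\theta$ coincides with the all-reject path through its first $n$ verifications. There are two sub-cases.
\begin{itemize}
\item If the path stops within $n$ verifications, the output is $\widehat y_0$. Then $\acc=1$ forces $f_\theta(\widehat y_0)=1$, and $\widehat y_0\in L$, so again $c_{\theta L}=1$.
\item Otherwise the learner makes an $(n+1)$st verification, so $N>n$ and the event fails.
\end{itemize}
\end{itemize}
Taking $\Pp_r$ of both sides gives $\Pp_r(\acc=1,N\le n)\le\sum_\theta r_\theta c_{\theta L}\le V_n(r)$.

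The remaining obstacle is to make sure the stopping rule really is arbitrary. The learner may adapt to accepts, and it may verify after an accept. The argument handles both, because it only uses the all-reject path and the first accept on it. It may repeat responses, which only shrinks $L$. And the event $N\le n$ may fail on some worlds, which the inclusion above handles. Since the horizon is finite, the tree is finite and the path is well defined.
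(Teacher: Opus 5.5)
Your proof is correct and follows essentially the same route as the paper's. Both fix the seed, run the deterministic learner along the all-reject trajectory, take the first $n$ verified responses plus the output if the learner stops within $n$ verifications, show that $\{\acc=1,N\le n\}$ is covered by this list, average over the seed, and attain $V_n(r)$ by verifying all but the last entry of a maximizing list. The differences are minor: you split cases on whether $\theta$ accepts a listed response rather than on whether the real trajectory sees a positive reply, and the paper folds any history-only stochastic kernels into a complete seed so that the same bound applies directly to the reference law in \cref{lem:selective}.
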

Fix the seed and follow the trajectory on which every verification is rejected. Succeeding within $n$ verifications requires an accepted response on that trajectory or a correct final output, and these form a list of at most $n+1$ responses (\cref{app:coverage}). For unique answers, $V_n$ is the mass of the $n+1$ most probable answers. In general it is weighted maximum coverage \citep{nemhauser1978}.

\paragraph{Causal information.}
Let $\overline K_t(z\given\mathfrak h,c)=\sum_\theta P(\theta\given\mathfrak h,c)K_t(z\given\theta,\mathfrak h,c)$ be the posterior-predictive kernel, and charge
\begin{equation}\label{eq:budget}
 \aux=\textstyle\sum_t\E\big[\ind\{C_t\text{ auxiliary}\}\,\mathcal D_t\big],
\end{equation}
\[
 \mathcal D_t=\KL\big(K_t(\cdot\given\Theta,H_{t-1},C_t)\,\|\,\overline K_t(\cdot\given H_{t-1},C_t)\big),
\]
bits. This is the directed information from the world to the auxiliary observations, conditioned causally on everything else the learner has seen \citep{massey1990causality}. It discounts whatever verifications and earlier calls already revealed.

\subsection{The information frontier}\label{sec:main}
With reproduction alphabet $\Ls$ and distortion $1-\cc$, define the list-coverage rate--distortion function
\begin{equation}\label{eq:RD}
 \R(q,s)=\min\big\{I(\Theta;L):\ L\sim W(\cdot\given\Theta),\ \E\,c_{\Theta L}\ge s\big\}.
\end{equation}
Write $h_2$ for the binary entropy function, $\mathrm{kl}(a\|b)$ for the binary relative entropy, and set $\mathrm{kl}_+(a\|b)=\mathrm{kl}(a\|b)\ind\{a>b\}$.
\begin{theorem}[Information frontier]\label{thm:exact}
The infimum of $\aux$ over learners and auxiliary mechanisms with $N\le n$ almost surely and $\Pp(\acc=1)\ge s$ equals $\R(q,s)$. It is attained by a single auxiliary observation before any verification, and
\begin{equation}\label{eq:dual}
 \R(q,s)=\sup_{r:\,q\ll r}\big\{\mathrm{kl}_+\big(s\,\|\,V_n(r)\big)-\KL(q\|r)\big\}.
\end{equation}
Under an arbitrary stopping rule, $\aux\ge\R\big(q,\Pp(\acc=1,N\le n)\big)$.
\end{theorem}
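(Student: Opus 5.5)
The plan has four parts: achievability, the converse, the duality formula \eqref{eq:dual}, and the stopping-rule clause, which will fall out of the converse.

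\emph{Achievability.} Let $W$ attain \eqref{eq:RD}. The learner makes one auxiliary call at time $1$, whose kernel is $K_1(\cdot\given\theta)=W(\cdot\given\theta)$ on the finite alphabet $\Ls$. At that point the history is empty, so $\mathcal D_1$ averages to $I(\Theta;L)$ and $\aux=\R(q,s)$. The learner then verifies the elements of $L$ in turn, stopping at the first acceptance. If all but the last have been rejected, it outputs the last element without verifying it. This uses $N\le n$ verifications and succeeds exactly when $c_{\Theta L}=1$, so $\Pp(\acc=1)=\E c_{\Theta L}\ge s$.

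\emph{Converse.} This is the main step. Fix a learner and mechanisms, and let $P$ be the joint law of $(\Theta,\xi,\text{history})$. Pick any $r$ with $q\ll r$ and build a reference law $R$. Under $R$, $\Theta\sim r$ and the seed has the same law. Verifications are still answered exactly by $f_\Theta$. Each auxiliary reply is instead drawn from the $P$-posterior-predictive $\overline K_t(\cdot\given H_{t-1},C_t)$, which does not depend on $\Theta$ given the history. By the chain rule for relative entropy over the interactive trajectory, the verification factors are identical under $P$ and $R$ and cancel, and the learner's action kernels also cancel. What remains is
\[
\KL(P\|R)=\KL(q\|r)+\textstyle\sum_t\E_P\big[\ind\{C_t\text{ aux}\}\KL\big(K_t\|\overline K_t\big)\big]=\KL(q\|r)+\aux .
\]
Under $R$, the auxiliary replies are generated from the history alone. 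I therefore expect to fold them, together with $\xi$, into an extended private seed that is independent of $\Theta\sim r$. The obstacle is checking that this seed is a legitimate one for Proposition~\ref{prop:coverage}. The replies are drawn from kernels evaluated on the actual history, and that history includes verification outcomes, so I will construct them sequentially from i.i.d.\ uniforms (inverse-CDF coupling). This makes $R$ a local learner with randomized seed. Proposition~\ref{prop:coverage} then gives $R(\acc=1,N\le n)\le V_n(r)$.

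Now apply data processing to the indicator of $\{\acc=1,N\le n\}$. Write $s'=P(\acc=1,N\le n)$. Then $\mathrm{kl}(s'\|R(\cdot))\le \KL(P\|R)$. Since $\mathrm{kl}(a\|\cdot)$ is decreasing on $[0,a]$, this yields $\mathrm{kl}_+(s'\|V_n(r))\le \KL(q\|r)+\aux$. Taking the supremum over $r$ gives $\aux\ge$ the right side of \eqref{eq:dual} evaluated at $s'$. When $N\le n$ almost surely, $s'=\Pp(\acc=1)\ge s$, and monotonicity in $s$ handles the case $s'>s$. This proves the stopping-rule clause as well, once \eqref{eq:dual} is established.

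\emph{Duality.} The easy direction, $\R\ge$ the supremum, is exactly the argument above applied to the static one-shot learner, which is already covered. For the reverse inequality I will use the classical Lagrangian dual of rate--distortion, $\R(q,s)=\sup_{\lambda\ge0}\min_W\{I(\Theta;L)-\lambda(\E c-s)\}$. This holds by convexity of $\R(q,\cdot)$, with a Slater point available whenever $s<1$. The case $s=1$ is handled by a limit, or by taking $r$ concentrated suitably. The inner minimum is $-\sum_\theta q_\theta\log\sum_L \pi_L 2^{\lambda c_{\theta L}}$, minimized over output marginals $\pi$.

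Given an optimal $\pi$ and $\lambda$, set $r_\theta\propto q_\theta/\sum_L\pi_L2^{\lambda c_{\theta L}}$. The plan is to verify that this choice makes $\mathrm{kl}(s\|V_n(r))-\KL(q\|r)$ match the dual value. To do so I will use the variational identity $\mathrm{kl}(s\|v)=\sup_\lambda\{\lambda s-\log(1-v+v2^\lambda)\}$ and bound $\E_r 2^{\lambda c}\le 1-V_n(r)+V_n(r)2^\lambda$ using the KKT condition on $\pi$. I expect this matching to be the most delicate computation, but it is standard list-Fano/rate--distortion duality in coverage coordinates.
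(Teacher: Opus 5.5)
Your proposal is correct and is essentially the paper's proof. It uses the same reference law with posterior-predictive auxiliary kernels, the same cancellation of the exact verification factors, and simulates the reference protocol through an enlarged seed so that \cref{prop:coverage} applies to the joint event $\{\acc=1,N\le n\}$, followed by data processing. For \eqref{eq:dual} it uses the same Gibbs/Lagrangian dual with the KKT-defined reference prior $r_\theta\propto q_\theta/Z_\theta$, where $Z_\theta=\sum_L\pi_L2^{\lambda c_{\theta L}}$. Your only deviation is harmless: you obtain $\R(q,s)\ge$ the supremum operationally from the converse, rather than by the paper's Jensen chain.

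In the matching step, however, the inequality you name points the wrong way. Set $C_0=\sum_\theta q_\theta/Z_\theta$; then $\KL(q\|r)=\sum_\theta q_\theta\log_2Z_\theta+\log_2C_0$. What closes the argument is
\[
1-V_n(r)+V_n(r)2^\lambda=\max_L\sum_\theta r_\theta2^{\lambda c_{\theta L}}\le 1/C_0 ,
\]
which is exactly the KKT condition $g_L=\sum_\theta q_\theta2^{\lambda c_{\theta L}}/Z_\theta\le1$ for every $L$. The bound $\E_r2^{\lambda c}\le1-V_n(r)+V_n(r)2^\lambda$ that you wrote holds for every $r$ by the definition of $V_n$, without any KKT condition, and cannot give the hard direction.
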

\emph{Proof idea.} Build a reference law $\widetilde P$ with prior $r$, the same learner and the same exact verifications, but with every auxiliary kernel replaced by its posterior predictive $\overline K_t$. Verification transitions are identical under the two laws, so they cancel from the likelihood ratio, and over complete histories
\begin{equation}\label{eq:pathKL}
 \KL(P\,\|\,\widetilde P)=\KL(q\|r)+\aux .
\end{equation}
Under $\widetilde P$ every auxiliary observation can be simulated from the history, so \cref{prop:coverage} gives $\widetilde P(\acc=1,N\le n)\le V_n(r)$, and data processing to that event, as in the fundamental inequality of \citet{garivier2019explore}, yields the following (\cref{app:selective}).
\begin{lemma}[Selective-channel converse]\label{lem:selective}
For every protocol and every reference prior $r$ with $q\ll r$,
\begin{equation}\label{eq:selective}
 \mathrm{kl}_+\big(\Pp(\acc=1,N\le n)\,\|\,V_n(r)\big)\le\KL(q\|r)+\aux .
\end{equation}
\end{lemma}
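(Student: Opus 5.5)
We follow the route sketched in the proof idea of \cref{thm:exact}. Fix a protocol, i.e.\ a learner with seed $\xi$ and auxiliary kernels $K_t$, and a reference prior $r$ with $q\ll r$.

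\textbf{Step 1: the reference law.} Define $\widetilde P$ on complete trajectories $(\Theta,\xi,H_T)$ as follows.
\begin{itemize}
\item $\Theta\sim r$ and $\xi$ has its usual law.
\item The learner's action rule is unchanged.
\item Each verification reply is the exact value $f_\Theta(y)$, as under $P$.
\item Each auxiliary reply is drawn from $\overline K_t(\cdot\given H_{t-1},C_t)$.
\end{itemize}
Here $\overline K_t$ is the posterior predictive computed under the \emph{true} law $P$ (prior $q$), so it is a fixed function of the history.

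\textbf{Step 2: the chain rule.} Factor both laws along the trajectory. The action factors and the verification factors are identical deterministic or learner-defined transitions, so they cancel in the log-likelihood ratio. What remains is $\log\frac{q(\Theta)}{r(\Theta)}$ plus, at each auxiliary step, $\log\frac{K_t(Z\given\Theta,H_{t-1},C_t)}{\overline K_t(Z\given H_{t-1},C_t)}$. Take expectations under $P$ and condition on $(\Theta,H_{t-1},C_t)$ at each step. This gives~\eqref{eq:pathKL}:
\[
\KL(P\|\widetilde P)=\KL(q\|r)+\aux .
\]
Absolute continuity holds on every auxiliary factor because $K_t(\cdot\given\theta,\cdot)\ll\overline K_t$, since $P(\theta\given\mathfrak h,c)>0$ on reachable histories (the prior has full support). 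It holds on the prior factor because $q\ll r$. Finite horizon and finite alphabets make all sums finite.

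\textbf{Step 3: bounding success under $\widetilde P$.} Under $\widetilde P$ the auxiliary replies are independent of $\Theta$ given the history. Augment the seed with independent uniforms $U_t$ and generate each reply as a measurable function of $(H_{t-1},C_t,U_t)$. The resulting learner uses no auxiliary calls and faces prior $r$ with exact verifications. It has the same joint law of $(\Theta,\text{actions},\acc,N)$ as the original learner under $\widetilde P$. Applying \cref{prop:coverage}, which allows any stopping rule and randomized seed, gives
\[
\widetilde P(\acc=1,N\le n)\le V_n(r).
\]
The one point to check is that the coverage argument of \cref{prop:coverage} allows the seed to be an arbitrary independent random variable. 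This is fine, since it fixes the seed and follows the all-reject trajectory.

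\textbf{Step 4: data processing.} Apply data processing for KL to the indicator of the event $E=\{\acc=1,N\le n\}$:
\[
\mathrm{kl}\big(P(E)\,\|\,\widetilde P(E)\big)\le\KL(P\|\widetilde P).
\]
Write $p=P(E)$, $v=V_n(r)$ and $\tilde p=\widetilde P(E)\le v$. If $p\le v$, the left side of~\eqref{eq:selective} is zero and there is nothing to prove. Otherwise $\tilde p\le v<p$, and $b\mapsto\mathrm{kl}(p\|b)$ is decreasing on $[0,p]$, so
\[
\mathrm{kl}_+(p\|v)=\mathrm{kl}(p\|v)\le\mathrm{kl}(p\|\tilde p).
\]
Combining with Step 2 yields~\eqref{eq:selective}.

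\textbf{Main obstacle.} The main difficulty is the bookkeeping in Step 2. The kernel $\overline K_t$ must be defined using the $P$-posterior, not the $\widetilde P$-posterior; otherwise the per-step terms would not match the definition of $\mathcal D_t$ in~\eqref{eq:budget}. Histories that are unreachable under $P$ must also be handled, and they contribute nothing to the expectation.
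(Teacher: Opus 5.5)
Your proof is correct and follows the paper's argument essentially step for step. It uses the reference law with prior $r$ and $P$-posterior-predictive auxiliary kernels, and cancels the action and exact-verification factors to obtain \eqref{eq:pathKL}. It then simulates the reference auxiliary replies to reduce to the stopped form of \cref{prop:coverage} under prior $r$, and applies data processing to the event $\{\acc=1,N\le n\}$ with monotonicity of $\mathrm{kl}(p\|\cdot)$ below $p$. The one detail to state explicitly is that $\overline K_t$ needs some fixed extension at $P$-null histories, as the paper gives it. Those histories do not affect the $P$-expected divergence, but $\widetilde P$ can reach them, so the reference law and the simulated target-only protocol are not fully defined without that extension.
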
 Optimizing $r$ makes the bound exact: \eqref{eq:dual} is the rate--distortion dual of \citet{csiszar1974extremum} written in the coordinates of a reference prior (\cref{app:duality}). For achievability, send an optimal list, verify all but its last entry, and output the last entry if none is accepted.

For unique answers, uniform $q$ and $n+1<M$, the frontier is the list-Fano value $\mathrm{kl}_+(s\,\|\,(n+1)/M)$ \citep{sakai2020}, so each optimally placed bit halves the verification budget. For $q=(0.8,0.1,0.1)$ and $n=1$, zero error costs $\R(q,1)=0.2$ bits: list answer~1 together with answer~2 or~3, each with probability $1/2$ (\cref{cor:zero} in \cref{sec:zero}); the value at $r=q$ is only $\log_2(1/0.9)=0.152$. With overlapping acceptance sets, taking $r=q$ in~\eqref{eq:dual} gives only the lower bound $\mathrm{kl}_+(s\|V_n(q))$, and the optimal $r$ need not be uniform even when $q$ is (\cref{sec:zero}). The theorem prices an optimally \emph{designed} source; a fixed source with the same mutual information can be far less useful (\cref{sec:channels}): as in the comparison of experiments \citep{blackwell1953equivalent}, no single number such as mutual information ranks sources for every task.\looseness=-1

\subsection{From information to calls}\label{sec:fixed}
\begin{figure}[t]
\centering\includegraphics[width=\columnwidth]{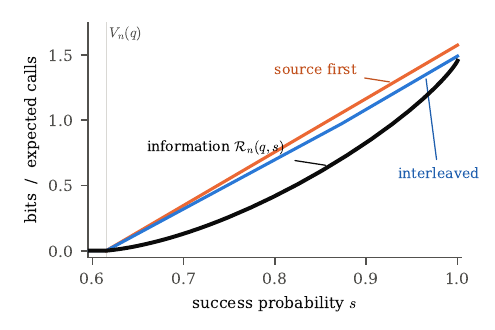}
\caption{\textbf{Information and call frontiers} for a Zipf prior $q_i\propto i^{-1.2}$ on $M=8$ unique answers with $n=1$ verification. Black: $\R(q,s)$. Blue and orange: the exact minimum expected number of binary calls over all interleaved and all source-first protocols with designed sources, computed by dynamic programming over candidate sets. They exceed the information price by at most \FrontierGapInter{} and \FrontierGapSF{} calls; \cref{thm:calls}(ii) guarantees less than $1+\log_25$.}\label{fig:frontiers}
\end{figure}
A learner pays for sources in calls. The next results convert the information price into calls.
\begin{theorem}[Calls and information]\label{thm:calls}
Call sources \emph{binary} when each auxiliary observation takes two values.
\begin{enumerate}
\item[(i)] Every protocol has $\aux\le\E\sum_t\log_2|\mathcal Z_t|$ over its auxiliary observations. With exact verification and binary sources, any protocol with $N\le n$ almost surely and $\Pp(\acc=1)\ge s$ therefore makes $\E Q\ge\R(q,s)$ calls, whatever its source mechanisms or order.
\item[(ii)] For unique answers the bound is tight to an additive constant. For every $s$, designed binary sources that ignore target history, with no shared randomness, support a source-first protocol with $N\le n$, $\Pp(\acc=1)\ge s$, a bounded number of calls and
\begin{align*}
 \E Q&<\R(q,s)+1+h_2(s)+2s\\
 &\le\R(q,s)+1+\log_2 5 .
\end{align*}
At zero error, sources that are deterministic functions of the world give $\E Q<\R(q,1)+2$, and no constant below $1$ is possible.
\end{enumerate}
\end{theorem}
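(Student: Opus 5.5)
For (i), I would bound each summand of \eqref{eq:budget} separately. Conditionally on $(H_{t-1},C_t)$, the term $\mathcal D_t$ is the mutual information between $\Theta$ and $Z$ under the posterior given $(H_{t-1},C_t)$. This is at most $H(Z\given H_{t-1},C_t)\le\log_2|\mathcal Z_t|$. Summing over $t$ and taking expectations gives $\aux\le\E\sum_t\log_2|\mathcal Z_t|$. For binary sources each auxiliary call contributes at most one bit, so $\aux\le\E Q$. \cref{thm:exact} gives $\aux\ge\R(q,s)$ for any protocol with $N\le n$ and $\Pp(\acc=1)\ge s$, regardless of order, and the two bounds combine. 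No finiteness issue arises because the horizon is finite.

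For the achievability in (ii), I would take an optimal channel $W^*(L\given\Theta)$ for \eqref{eq:RD} and simulate it with binary calls. Since the learner must not share randomness with the source, I would not use channel simulation. Instead I would split the event ``covered'' into a part where the world itself is described. Concretely, for unique answers with $s<1$, I would let a designed source first reveal one bit $B\sim\mathrm{Bern}$ indicating whether the learner will attempt at all. On attempt, it then describes $\Theta$ relative to a list-compatible code: a Shannon--Fano--Elias code for $\Theta$ under the conditional law, truncated so that only the bits beyond the $\log_2(n+1)$ resolved by verifications are sent.

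The expected length should be bounded by $s\,\E[\text{description}]$ plus the flag cost. The flag bit and the rounding slack give $1+h_2(s)$ (encode the flag with entropy cost $h_2(s)$, plus one bit for integer rounding). The $2s$ comes from the two bits of Shannon--Fano--Elias slack on the attempted branch. The description cost must then be related back to $\R(q,s)$ through the dual \eqref{eq:dual}. I would choose $r$ proportional to $q$ on the attempted branch and show that the idealized code length, $\log_2$ of the posterior-to-list ratio, averages to at most $\R(q,s)$. The tilted optimizer of the Csisz\'ar dual (\cref{app:duality}) would be used so that the expected self-information matches the KL terms.

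For the zero-error clause, I would use a deterministic source. It reveals the index of the block containing $\Theta$ in a partition of $[M]$ into blocks of size $n+1$, encoded with a prefix code for the block index. The resulting $H(\text{block})+1$ must be compared with $\R(q,1)$; by \cref{cor:zero} I expect $\R(q,1)$ to be within one bit of the best block-partition entropy, giving $<\R(q,1)+2$. For the lower bound showing that no constant below $1$ is possible, I would use a dyadic-failure example such as $q=(1-\epsilon,\epsilon)$ with $n=0$. Here $\R\to0$, but any zero-error protocol needs at least one call.

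The hardest step is the middle one. It requires the expected number of calls on the attempted branch to track $\R(q,s)$ rather than a static entropy, because the optimal list channel is randomized and the source cannot share coins. I expect the bulk of the work to lie in a careful comparison between list-conditional code lengths and the dual certificate.
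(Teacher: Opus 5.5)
Your part (i) and your lower-bound example ($q=(1-\epsilon,\epsilon)$, $n=0$) are correct and match the paper. The main gap is in (ii) for $s<1$, at the step you defer. Nothing in your plan bounds the cost of the attempted branch in terms of $\R(q,s)$, and the dual with $r\propto q$ on that branch is not set up to give such a bound. The missing idea is a direct conditioning inequality, which also removes any need to simulate the optimal randomized channel.

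Take $W$ optimal for $\R(q,s)$ with $s>V_n(q)$. Since $\R(q,\cdot)$ is convex, nondecreasing and vanishes exactly on $[0,V_n(q)]$, $W$ covers with probability exactly $s$. Let $C$ be the coverage indicator, $c_\theta=\Pp(C=1\given\Theta=\theta)$ and $q_{1,\theta}=q_\theta c_\theta/s$. The bound $H(\Theta\given L)\le h_2(s)+H(\Theta\given L,C)$ then gives
\[
 I(\Theta;L)\ge I(\Theta;L\given C)-h_2(s)\ge s\,I(\Theta;L\given C=1)-h_2(s)\ge s\,\mathcal R_n(q_1,1)-h_2(s),
\]
because given $C=1$ the list is a zero-error list for $q_1$. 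The protocol has two stages:
\begin{enumerate}
\item One source call returns $C'\sim\mathrm{Bernoulli}(c_\Theta)$, a stochastic kernel of the world alone. This is how the no-shared-randomness requirement is met.
\item On $C'=1$ the posterior is exactly $q_1$, and you run the zero-error construction for $q_1$.
\end{enumerate}
This gives $\E Q<1+s(\mathcal R_n(q_1,1)+2)\le\R(q,s)+1+h_2(s)+2s$. So your flag must be this world-dependent coin with these particular $c_\theta$. A coin independent of $\Theta$ would only time-share between $V_n(q)$ and zero error, and that gap to $\R(q,s)$ can be unbounded. Your accounting is also misattributed: the flag costs exactly one call (the ``$1$''), while $h_2(s)$ is the slack in the converse inequality above, not the cost of encoding the flag.

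The zero-error step also has a gap. Truncating a Shannon--Fano--Elias codeword by $\log_2(n+1)$ bits does not leave at most $n+1$ candidates. A world of large prior has a one-bit codeword, whose truncation is the empty prefix, so every world survives. The verifications must enter through the clipped inclusion probabilities $a_\theta=\min\{1,q_\theta/\tau\}$ of \cref{cor:zero}, not by deleting bits.

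Your block-index variant is the right mechanism, but the inequality you ``expect'' is the entire content. It needs the paper's list version of Kraft:
\begin{enumerate}
\item Sort worlds by $\lambda_\theta=\log_2(1/a_\theta)$.
\item Cut the order into consecutive blocks of $\ell=n+1$.
\item Give block $B_i$ the depth $D_i=\min_{\theta\in B_i}\lambda_\theta$.
\end{enumerate}
Every world of the full block $B_{i-1}$ has $\lambda_\theta\le D_i$. Hence $\sum_{i\ge2}2^{-D_i}\le\ell^{-1}\sum_\theta a_\theta=1$ and $\sum_i2^{-D_i}\le2$. Gibbs' inequality then gives $H(\text{block})\le\sum_\theta q_\theta(\lambda_\theta+1)=\R(q,1)+1$, and a Huffman code gives $\E Q<\R(q,1)+2$. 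The paper uses integer lengths $\lceil\lambda_\theta\rceil+1$ instead, so the block depths satisfy Kraft with sum at most one and the prefix code is explicit. Either way, what you are missing is the ordering by length and the comparison of each block's minimum depth with the previous full block.
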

\emph{Why the bounds hold.} Part (i) holds because a binary observation carries at most one bit. For (ii) at zero error, the optimal list channel includes world $\theta$ with probability $a_\theta=\min\{1,q_\theta/\tau\}$, where $\sum_\theta a_\theta=n+1$ (\cref{cor:zero}). Code lengths $\lceil\log_2(1/a_\theta)\rceil+1$ satisfy a list version of the Kraft inequality once worlds are grouped $n+1$ at a time in order of length. The sources can therefore reveal a prefix-free code for $\theta$'s group, and the learner then verifies the group. For $s<1$, conditioning any list channel with coverage $s$ on coverage shows $I(\Theta;L)\ge s\,\mathcal R_n(q_1,1)-h_2(s)$, where $q_{1,\theta}\propto q_\theta c_\theta$ and $c_\theta$ is the probability that $\theta$ is covered. One stochastic call that reports coverage, followed by the zero-error protocol for $q_1$, attains the bound (\cref{app:calls}).

For unique answers the list structure keeps the excess constant; for general acceptance sets it does not.
\begin{proposition}[General acceptance sets]\label{prop:general}
No constant suffices for general acceptance sets, for any $n$. Let $n=0$, let the responses be the independent sets of a graph $G$ on $[M]$, and let $f_\theta(I)=\ind\{\theta\in I\}$. Every zero-error protocol with binary sources has $\E Q\ge H_\chi(G,q)$, the least entropy of a colouring of $G$, while $\R(q,1)$ is the graph entropy $H_{\mathrm K}(G,q)$ of \citet{korner1973coding}; their difference is unbounded. Pairing each world with an index that an accepted response must match extends the gap to every $n$. The excess is at most logarithmic: sources that are deterministic functions of the world and the request achieve $\E Q\le\R(q,s)+\log_2(\R(q,s)+1)+5$ for every $s$.
\end{proposition}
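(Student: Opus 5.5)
The plan has four parts: the lower bound by colourings, the identification of $\R(q,1)$ with graph entropy, the unbounded gap and its extension to every $n$, and the logarithmic upper bound.

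\emph{Lower bound.} Take $n=0$. A zero-error protocol makes no verification, so its output $\widehat y$ is an independent set containing $\Theta$ almost surely. Fix the seed. The binary auxiliary replies then determine a leaf of a binary decision tree, and each leaf outputs one independent set. Assigning to each world the leaf it reaches with positive probability, averaged over the seed, gives a random proper colouring: worlds sharing a leaf share an output set, which contains both, so they are nonadjacent. The tree is prefix-free, so Kraft gives $\E Q\ge H(\text{leaf})$ for each seed. Averaging over seeds, $\E Q\ge\min_{\text{colourings}}H\ge H_\chi(G,q)$. Stochastic replies need some care here: I would condition on the reply sequence and note that the leaf is a deterministic function of the path, so the path entropy still lower-bounds depth.

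\emph{Identification.} For $n=0$, $\Ls$ consists of single independent sets and $\cc=\ind\{\theta\in I\}$. So $\R(q,1)=\min I(\Theta;I)$ over channels with $\Theta\in I$ almost surely, which is K\"orner's characterisation of $H_{\mathrm K}(G,q)$; we may restrict to maximal independent sets without loss.

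\emph{Unbounded gap.} For this step I would cite a known family where $H_\chi-H_{\mathrm K}$ is unbounded, for example Kneser-type or the constructions of Alon--Orlitsky and of \citet{korner1973coding}-related work with uniform $q$. This cited construction is the step I expect to lean on most. For general $n$, replace each world $\theta$ by pairs $(\theta,j)$ with $j\in[n+1]$ uniform. Responses are pairs $(I,j)$, and $(I,j)$ is accepted by $(\theta,j')$ iff $\theta\in I$ and $j=j'$. The learner can spend its $n$ verifications plus final output to cover all $n+1$ indices. The colour problem, however, is unchanged: rejections reveal only the index. Both quantities then shift by exactly the same amount, or the index part is free, so the gap persists. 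The required check is that verifications leak nothing about $\theta$ beyond what list coverage already accounts for.

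\emph{Upper bound.} Take an optimal channel $W$ for $\R(q,s)$. The world and the request are available to deterministic sources, and the request may encode learner coins, so I would use a shared-randomness channel simulation in the style of Harsha et al.\ or the Poisson functional representation. The source then reveals a list $L\sim W(\cdot\mid\Theta)$ encoded by an Elias-type prefix code of expected length at most $I+\log_2(I+1)+5$. The learner's coins supply the common randomness by being placed in the request, and each call answers one bit. The learner then verifies all but the last entry of $L$ and outputs the last. This achieves coverage $s$ and gives the stated bound.
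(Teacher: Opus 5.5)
Your outline follows the paper's route. Leaves of a binary decision tree colour $G$, K\"orner's characterisation gives $\R(q,1)=H_{\mathrm K}(G,q)$, a cited family gives unboundedness, an index extends the gap to every $n$, and the strong functional representation lemma with a prefix code carried in the request gives the upper bound. The extension to $n\ge1$, however, has a genuine gap.

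Your claim that rejections reveal only the index is false, and so is the check you plan, that verifications leak nothing about $\theta$. A verification of $(I,j)$ returns $\ind\{v\in I\}\ind\{j=w\}$. Once a source has revealed $w$, every verification is a free yes/no question about $v$, and even before that the rejection probability depends on whether $v\in I$. The missing idea is to charge these replies rather than prove them uninformative:
\begin{itemize}
\item Fix all randomness $U$ (the learner's seed and uniforms realising the source kernels) and the index $w$.
\item The protocol is then a deterministic binary tree whose internal nodes are source calls and verifications. Its reply string is a prefix-free code for the leaf, of expected length $\E[Q+N\given w,U]$.
\item Zero error forces the output at each leaf to be some $(I,w)$ with $I$ containing every $v$ that reaches that leaf. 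So the leaves colour $G$, and $\E[Q+N\given w,U]\ge H_\chi(G,q)$.
\item With $N\le n$, this gives $\E Q\ge H_\chi(G,q)-n$.
\item The list $\{(I,1),\dots,(I,n+1)\}$, with $I$ drawn from an optimal channel for $H_{\mathrm K}(G,q)$, gives $\R\le H_{\mathrm K}(G,q)$.
\end{itemize}
The two sides do not shift by the same amount. You lose up to $n$ bits, which is harmless only because $n$ is fixed while $H_\chi-H_{\mathrm K}$ is unbounded.

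Three smaller repairs are also needed.
\begin{itemize}
\item \emph{Stochastic replies at $n=0$.} A world can reach several leaves, so you only have a random colouring, and the step from $H(\text{leaf})$ to $H_\chi(G,q)$ needs an argument. Either fix all randomness as above, so that each world reaches one leaf (the paper's route). Or note that the leaf law is a mixture of the laws of deterministic selections, each a proper colouring, and use concavity of entropy.
\item \emph{The unbounded family.} Kneser graphs will not do. Under a uniform prior on the $r$-subsets of $[m]$, Erd\H{o}s--Ko--Rado gives $H_{\mathrm K}\ge\log_2(m/r)$. Colouring each set by its least element, with all sets inside the last $2r-1$ elements forming one class, already has entropy below $\log_2(m/r)+2/\ln2$. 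The paper instead cites the chordal graphs of \citet[Prop.~2]{cardinal2008minimum}.
\item \emph{Finiteness in the upper bound.} The Poisson functional representation uses infinite common randomness and an unbounded index, which the model's finite request set and finite horizon forbid. Use the finite-cardinality form of the lemma, with $V$ taking at most $M(|\Ls|-1)+2$ values, and a Huffman code for $g_v(\Theta)$ for each $v$. This gives $\E Q\le H(L\given V)+1\le\R(q,s)+\log_2(\R(q,s)+1)+5$, with at most $M-1$ calls on every path.
\end{itemize}
Sending all of $V$ in the request is legitimate. It spares you the paper's two-point linear program, which serves only to shrink the randomness carried in requests to one coin.
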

For $n=0$ and unique answers, $\mathcal R_0(q,s)$ is Erokhin's $\epsilon$-entropy \citep{erokhin1958} ($\epsilon=1-s$), which governs variable-length compression with errors \citep{kostina2015variable}. At zero error, $\R(q,1)$ is a hypergraph entropy \citep{csiszar1990}; for graphs the chromatic entropy is at least the graph entropy \citep{alon1996source} and can exceed it without bound \citep[Prop.~2]{cardinal2008minimum}, which is \cref{prop:general}. For general acceptance sets, the strong functional representation lemma \citep{li2018strong} writes an optimal list as a function of the world and an independent variable $V$ with $H(L\given V)\le\R+\log_2(\R+1)+4$, as in one-shot channel simulation \citep{harsha2010communication}. Because $V$ is independent of $\Theta$, each value $v$ gives a deterministic list map $g_v$. At $s=1$ the learner takes the $v$ of positive probability that minimizes $H(g_v(\Theta))$; for $s<1$, a mixture of two values, chosen by a coin the learner sends in its request, meets the coverage constraint. The sources then send a Huffman codeword of $g_v(\Theta)$. At $n=0$ and $s=1$ the proof of \cref{prop:general} gives $H_\chi\le H_{\mathrm K}+\log_2(H_{\mathrm K}+1)+4$ for every graph, an existential analogue of the bound proved for greedy colourings of perfect graphs \citep{cardinal2010efficient,cardinal2012minimum}. The logarithmic order is attained: on the comparability graphs of the interval orders $I_k$ of \citet{cardinal2010efficient}, with $M$ vertices and a uniform prior, $H_\chi-H_{\mathrm K}=\Omega(\log\log M)$, and $H_{\mathrm K}\le\log_2M$, so the excess in \cref{prop:general} can be $\Omega(\log H_{\mathrm K})$.
\begin{proposition}[What interleaving saves]\label{prop:interleave}
Against a given set of sources, interleaving can save an unbounded number of expected calls, even with exact verification. For unique answers, $q=(1-\epsilon,\epsilon/K,\dots,\epsilon/K)$, $n=1$, $s=1$ and sources that return the $\log_2K$ bits ($K\ge4$ a power of two) of an index in which world~1 shares the index of world~2, an interleaved protocol makes $\epsilon\log_2K$ expected calls and every source-first protocol at least $(1-\epsilon)\log_2K$. Interleaving can also divide $\E Q$ by an unbounded factor: for $q=(1-2p,p,p)$ with $0<p\le1/4$, $n=1$ and $s=1$, an interleaved protocol makes $\E Q=2p=\R(q,1)$ calls, while every source-first protocol makes $\E Q\ge1$. Against designed sources and unique answers, however, \cref{thm:calls} bounds the saving by $1+\log_2 5<3.33$ calls, and by $2$ at zero error.
\end{proposition}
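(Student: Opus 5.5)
The plan is to treat the two constructions separately and then read the final clause off \cref{thm:calls}. Throughout I would fix the learner's seed $\xi$ and argue trajectory by trajectory, then average over $\xi$; auxiliary requests may encode coins, but since the given sources are deterministic functions of the world, fixing $\xi$ makes every protocol deterministic.

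\emph{First example.} Label the worlds $1,\dots,K+1$. Worlds $2,\dots,K+1$ carry the $K$ distinct $\log_2K$-bit indices, and world~$1$ carries the index of world~$2$. The interleaved protocol verifies answer~$1$ first, which is accepted with probability $1-\epsilon$ at no call. If it is rejected, the world lies in $\{2,\dots,K+1\}$, where indices are distinct. The learner then reads all $\log_2K$ bits, identifies the world and outputs it as its final answer. This gives $N=1$, zero error and $\E Q=\epsilon\log_2K$.

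For the source-first lower bound, note that the learner has one verification plus a final output, so when it starts verifying its candidate set must be coverable by a list of two answers. Consider the trajectory followed when $\Theta=1$. If only $t<\log_2K$ bits have been revealed, the worlds consistent with them are world~$1$ together with $2^{\log_2K-t}\ge2$ worlds among $2,\dots,K+1$. These are at least three distinct unique answers, and all have positive prior mass. Since no further source call follows the first verification, $\Pp(\acc=1)<1$. Hence every zero-error source-first protocol reads all $\log_2K$ bits when $\Theta=1$, so $\E Q\ge(1-\epsilon)\log_2K$. The gap $(1-2\epsilon)\log_2K$ is unbounded in $K$.

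\emph{Second example.} Take $q=(1-2p,p,p)$ and $n=1$. The interleaved protocol verifies answer~$1$. If it is rejected, which happens with probability $2p$, one binary call separating worlds $2$ and $3$ settles the world, and the learner outputs it; so $\E Q=2p$. To identify this with $\R(q,1)$, I would apply \cref{cor:zero}. It gives $a_1=1$ and $a_2=a_3=1/2$, realised by $L\in\{\{1,2\},\{1,3\}\}$ with each list having probability $1/2$. Then $H(L)=1$, $H(L\given\Theta)=1-2p$, and so $I(\Theta;L)=2p$. By \cref{thm:calls}(i) this protocol is call-optimal.

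For the source-first bound, whatever the sources are, in a source-first protocol the decision to make a first call is taken before any observation, so it depends only on $\xi$. On seeds that make no call, the learner faces three worlds of positive mass with a list of size two, and it errs with positive probability. Zero error therefore forces at least one call almost surely, so $\E Q\ge1$. The ratio $1/(2p)$ is unbounded as $p\to0$.

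\emph{Final clause.} By \cref{thm:calls}(i), every interleaved protocol with binary sources has $\E Q\ge\R(q,s)$. By \cref{thm:calls}(ii), for unique answers some designed source-first protocol achieves $\E Q<\R(q,s)+1+\log_25$, and $\E Q<\R(q,1)+2$ at zero error. Subtracting these bounds shows the saving from interleaving is less than $1+\log_25<3.33$ calls in general and less than $2$ at zero error.

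The only delicate step is the source-first converse in the first example: adaptive bit choices must never shrink the candidate set below three. This holds because each revealed bit at most halves the set of matching indices among $2,\dots,K+1$, while world~$1$ always remains as an extra candidate.
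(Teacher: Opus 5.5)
Your proposal is correct and follows the paper's proof. It uses the same two verify-world-1-first interleaved protocols and the same seed-only argument that a zero-error source-first protocol for $q=(1-2p,p,p)$ must call on almost every seed, with $\R(q,1)=2p$ from \cref{cor:zero}, where $p\le1/4$ is what makes $a_1=1$. It also uses the same observation that when $\Theta=1$ fewer than $\log_2K$ bits leave at least three positive-mass candidates, and it gets the final clause the same way, by combining \cref{thm:calls}(i) with \cref{thm:calls}(ii).
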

\Cref{prop:interleave} shows why the next theorem is stated for hard caps. Precisely, sources are fixed when each kernel may depend on the world and on earlier source requests and replies, but not on whether or how the target has been verified; the learner may still choose sources adaptively. Let $Z_\sigma$ be the transcript of a source-only policy $\sigma$.
\begin{theorem}[Source-first suffices for fixed sources]\label{thm:fixed}
With exact verification and fixed sources, any interleaved protocol with $Q\le b$ and $N\le n$ can be replaced, with the same source mechanisms, by a source-first protocol: all its source calls, then all its verifications. The replacement has at least the same success probability and obeys the same caps, including a total cap $Q+N\le C$. Hence
\begin{equation}\label{eq:fixedvalue}
 \mathsf A_{b,n}:=\sup\Pp(\acc=1)=\sup_{\sigma:\,Q\le b}\E\,V_n\big(q(\cdot\given Z_\sigma)\big).
\end{equation}
\end{theorem}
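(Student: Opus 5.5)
The idea is to simulate the interleaved protocol's source calls along the all-rejected verification trajectory. Fix the seed $\xi$ of a given interleaved protocol with $Q\le b$ and $N\le n$. Because sources are fixed, each kernel depends only on $\Theta$ and on earlier source requests and replies, never on verification outcomes. Because verification is exact, the protocol either stops at the first accepted response or can be assumed to. So consider the \emph{rejection trajectory}, in which every verification up to the first acceptance returns $0$.

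The source-first learner runs the original protocol with the same seed but answers each verification query with a hypothetical rejection, without actually verifying. It makes every source call the original would make on that trajectory, with the same requests. Since the kernels ignore target history, its source transcript has the same joint law with $\Theta$ as the original's along the rejection path. It records the responses $y_1,\dots,y_{N'}$ that the original would verify ($N'\le n$) and the final output $\widehat y$ on that path.

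It then verifies $y_1,\dots,y_{N'}$ in order, outputs the first accepted one, and otherwise outputs $\widehat y$. The caps are preserved as follows.
\begin{itemize}
\item The source calls made are at most those of the original on the rejection trajectory, and that trajectory is a possible run of the original, so $Q\le b$.
\item $N\le N'\le n$.
\item On every world, the calls plus verifications are at most those of the original's rejection run, so the total cap $Q+N\le C$ holds.
\end{itemize}

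Success dominates pathwise. Couple the two runs on $(\Theta,\xi,\text{source replies})$. If the original succeeds, then either some verification is accepted or all are rejected and $\widehat y$ is acceptable. In the first case, the original's trajectory agrees with the rejection trajectory up to the first acceptance. Hence the same source calls and the same $y_j$ occur, and the source-first learner also finds that acceptance. In the second case the two trajectories coincide entirely and the learner outputs the same $\widehat y$. The subtle step, which I expect to be the main obstacle, is making this coupling rigorous: one must show the source replies have identical conditional laws in the two protocols even though the source-first protocol issues them earlier in time. This is exactly where fixedness is used, since the kernel at each source call sees only $\Theta$ and the source transcript so far. I would argue by induction on steps, constructing both runs from a common sequence of independent uniforms driving each kernel. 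Stochastic replies are also handled by this construction.

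For \eqref{eq:fixedvalue}, the "$\le$" direction follows from the above. Any interleaved protocol is dominated by a source-first one. Given the source transcript $Z_\sigma$ of its source phase, the verification phase is a learner without auxiliary calls under prior $q(\cdot\mid Z_\sigma)$, so by \cref{prop:coverage} its conditional success is at most $V_n(q(\cdot\mid Z_\sigma))$. Taking expectations gives the bound. For "$\ge$", after running $\sigma$ the learner verifies the first $n$ entries of a maximizing list $L\in\Ls$ for $q(\cdot\mid Z_\sigma)$, as in the achievability of \cref{prop:coverage}, and outputs the last entry if none is accepted. This attains $V_n$ conditionally. Taking the supremum over $\sigma$ with $Q\le b$ gives equality.
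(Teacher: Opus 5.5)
Your construction is the paper's: run a virtual copy of the learner that receives a zero at every verification, pass its source requests to the real mechanism, record its candidates and fallback, verify them afterwards, and couple the two executions through common uniforms at matching source histories. The value formula then follows from \cref{prop:coverage} as you describe. The gap is in the cap argument. You assert that the rejection trajectory ``is a possible run of the original,'' but the caps $Q\le b$, $N\le n$ and $Q+N\le C$ hold only almost surely under the true law, and the all-rejected continuation is often a null history.

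For example, with unique answers let a source reveal $\Theta$ exactly. A protocol calls it once, verifies the revealed answer, and stops, so $Q=N=1$ almost surely. Its behaviour after a rejection of that verification is unconstrained, because the rejection never happens. If it makes further source calls on that branch, your virtual learner follows the branch in every world, and the replacement violates $Q\le1$. Assuming the protocol stops at the first acceptance does not help, since the trouble lies on the rejection branch. The same problem undermines your claims $N'\le n$ and the total cap.

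The paper repairs this before building the virtual run. It first modifies the protocol to enforce its caps at every history: whenever the next action would exceed a cap, stop and output an arbitrary fallback. This changes nothing almost surely under the true law, because every prefix of a positive-probability path already respects the caps. The virtual execution then obeys the caps pathwise, and your success coupling goes through unchanged. Before the first true acceptance, the actual path is a positive-probability prefix, on which the modified protocol, the original and the virtual run all agree. So the accepted candidate is among the first $n$ recorded ones, even though the virtual continuation beyond that point may be off-support.
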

The replacement simulates the learner with every verification rejected, makes its source calls, records its candidate responses and final output, and verifies that list afterwards. Coupling through the first true acceptance shows that nothing is lost (\cref{app:fixed}). Equation~\eqref{eq:fixedvalue} is sequential value of information with terminal utility $V_n$ \citep{krause2009,golovin2011adaptive}. Exactness of verification cannot be dropped:
\begin{proposition}[What survives a noisy verifier]\label{prop:noisy}
Let a verification of $y$ return a bit drawn from a binary-input kernel $G(\cdot\given f_\Theta(y))$, independent of the history, and let $V^G_n(r)$ be the best value of $\Pp_r(\acc=1,N\le n)$ without auxiliary information.
\begin{enumerate}
\item[(i)] The converse survives in two forms: $\mathrm{kl}_+\big(\Pp(\acc=1,N\le n)\,\|\,V^G_n(r)\big)\le\KL(q\|r)+\aux$ for every $r$ with $q\ll r$, and, with $N\le n$ almost surely and $\Pp(\acc=1)\ge s$, $\aux\ge\mathcal R_0(q,s)-n\,\operatorname{cap}(G)$, where $\operatorname{cap}(G)$ is the capacity of $G$.
\item[(ii)] Neither reduction survives. Source-first protocols can need an unbounded factor more information than interleaved ones: for one flipped query at zero error, $1$ bit against $h_2(\epsilon)$ (\cref{thm:noise}). At equal call caps, take a uniform binary answer $A$, let the verifier flip its reply with probability $\epsilon$, and let two fixed sources report $\ind\{A\ne r\}$ for $r=0,1$, each missing a true $1$ with probability $\eta$. With one call and one verification, interleaving reaches error $\epsilon\eta$, while every source-first protocol has error at least $\min\{\eta/2,\epsilon(1+\eta)/2\}$, which the best one attains. At $\epsilon=\eta$ the ratio is $1/(2\eta)$, so \cref{thm:fixed} fails.
\end{enumerate}
\end{proposition}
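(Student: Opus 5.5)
Part (i) follows the proof of \cref{lem:selective}, with the identical-verification cancellation replaced by a data-processing step for the noisy channel. Build the reference law $\widetilde P$ with prior $r$, the same learner, the same noisy verification kernel $G(\cdot\given f_\Theta(y))$, and every auxiliary kernel replaced by its posterior predictive $\overline K_t$. The verification transitions are not identical under $P$ and $\widetilde P$. However, under the chain rule for the path divergence, each verification step contributes $\KL$ between the true kernel $G(\cdot\given f_\Theta(y))$ and the reference mixture over $\theta$. Since both laws use the same kernel $G$ conditioned on the world, the more convenient bookkeeping is on the joint law of $(\Theta,\text{history})$. There the verification factors cancel exactly, because the verifier depends only on $\Theta$ and the query, and \eqref{eq:pathKL} survives verbatim as $\KL(P\|\widetilde P)=\KL(q\|r)+\aux$.

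Under $\widetilde P$ the auxiliary replies can be simulated from the history. The learner is therefore a protocol without auxiliary information against prior $r$, and $\widetilde P(\acc=1,N\le n)\le V^G_n(r)$ by definition of $V^G_n$. Data processing to the indicator of $\{\acc=1,N\le n\}$ then gives the first form. For the second form, I would use a chain-rule argument instead. The pair (auxiliary transcript, verification replies) carries information about $\Theta$ equal to at most $\aux$ plus $\sum$ over verifications of the conditional mutual information through $G$, and each such term is at most $\operatorname{cap}(G)$ because the input to $G$ is a function of $(\Theta,\text{history})$. The output $\widehat y$, viewed as a list of size one, achieves coverage $s$. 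So $I(\Theta;\widehat y)\ge\mathcal R_0(q,s)$, and data processing finishes the bound.

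For part (ii), the first claim is delegated to \cref{thm:noise}, which I take as given. It compares $1$ bit for source-first against $h_2(\epsilon)$ interleaved, and the ratio $1/h_2(\epsilon)\to\infty$ as $\epsilon\to0$. For the equal-caps example, the interleaved protocol first verifies $y=0$. If the verifier accepts, it queries the source $\ind\{A\ne 0\}$; if it rejects, it queries $\ind\{A\ne1\}$. It then outputs $0$ unless that source reports a $1$, and symmetrically in the other branch. An error requires a verifier flip and then a source miss, giving error $\epsilon\eta$. I would check this case by case over $A\in\{0,1\}$; the one-sided source misses only true $1$s.

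For the source-first lower bound, the hardest step is enumerating all source-first protocols. By symmetry, fix the source $\ind\{A\ne0\}$. Its reply $Z$ is either $1$, which certifies $A=1$ and loses nothing, or $0$. Given $Z=0$, the posterior is $P(A=1)=\eta/(1+\eta)$, and the event $Z=0$ has probability $(1+\eta)/2$. The learner then has one noisy verification and a final guess, and I would optimise over the deterministic strategies and their mixtures. Two options matter. Ignoring the verifier and guessing $0$ gives error $\eta/2$. Trusting the verification gives error $\epsilon(1+\eta)/2$. Mixtures are linear, so the minimum is attained at one of these two. I would show that any other response to the verification reply is dominated, by a two-hypothesis likelihood argument on $\{Z=0\}$.

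At $\epsilon=\eta$ the source-first error is $\eta/2$, against $\eta^2$ for the interleaved protocol, so the ratio is $1/(2\eta)$. The verifier here is noisy and the caps are equal, so this contradicts the conclusion of \cref{thm:fixed} without that hypothesis, which shows the exactness hypothesis is needed.
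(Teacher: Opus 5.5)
Your proposal is correct and follows the paper's own proof: the same reference law in which the noisy verification kernel cancels given $\theta$, the same chain-rule bound charging at most $\operatorname{cap}(G)$ per verification, the same interleaved protocol, and the same reduction on $\{Z=0\}$ to the Bayes error $\min\{v,\epsilon\}$ with $v=\eta/(1+\eta)$. Two small fixes. First, the source-first enumeration should also cover calling no source, which gives error $\epsilon\ge\epsilon(1+\eta)/2$, and seed-randomized choices among the three options, whose errors simply average. Second, drop the remark that verification transitions differ under $P$ and $\widetilde P$: given $\theta$ they are identical, and that is exactly why they cancel.
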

The first form of (i) repeats the proof of \eqref{eq:selective} with the noisy kernel inside the reference law. Since a noisy verifier is a garbling of the exact one, $V^G_n\le V_n$, and \cref{thm:calls}(i) extends to it. The second form is the chain rule: the transcript carries at most $\aux+n\,\operatorname{cap}(G)$ bits about $\Theta$, and the output alone must carry $\mathcal R_0(q,s)$. On the instance of \cref{thm:noise} the first form is loose, while the second gives $h_2(\epsilon)-h_2(\delta)$ at error $\delta$, which is exactly the interleaved optimum (\cref{sec:noise}). With exact verification the second form gives only $\mathcal R_0(q,s)-n$, which is weaker than \cref{thm:exact}: an exact verification is worth more than its one bit of capacity, because an accepted response is itself the output. In part (ii) the interleaved protocol uses the noisy reply to choose which source to call, and one extra call closes the gap: calling both sources first also reaches $\epsilon\eta$.
\section{Linear task banks}\label{sec:planner}
\paragraph{Rank law.}
Let $W$ be uniform on $\F^d$, let the target be $A=TW$ for a known $k\times d$ matrix $T$ of rank $k$, and let a bank of $J$ binary source tasks return $Z_j=v_jW$. A source task is itself verifiable: one source call, verifying the candidate $0$, reveals $Z_j$ whether it is accepted or rejected. A target verification tests an entire $k$-bit vector. For a selected set $S$ of sources with rows $B_S$, the number of target dimensions it resolves is
\begin{equation}\label{eq:rank}
 \rho(S)=k+\operatorname{rank}(B_S)-\operatorname{rank}\!\begin{bmatrix}T\\B_S\end{bmatrix}=I(A;Z_S),
\end{equation}
and $\rho_b=\max_{|S|\le b}\rho(S)$.
This is the equivocation identity of linear secret sharing \citep{kurihara2012}: rows in $B_S$ that do not reduce the target's uncertainty only reveal nuisance directions of $W$.
\begin{theorem}[Exact accuracy]\label{thm:rank}
For arbitrary adaptive interleaving,
\begin{align}
 \mathsf A_{b,n}&=\min\{1,(n+1)2^{-k+\rho_b}\},\label{eq:rankrect}\\
 \mathsf A_C&=\max_{0\le b\le\min(C,J)}\min\{1,(C-b+1)2^{-k+\rho_b}\}.\label{eq:ranktotal}
\end{align}
A fixed maximizing subset followed by enumeration of the compatible targets attains both. Neither outcome-adaptive source selection nor adaptive budget allocation helps.
\end{theorem}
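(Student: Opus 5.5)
The plan is to reduce to \cref{thm:fixed}. The bank's kernels $Z_j=v_jW$ depend only on the world and on which source is requested, never on verifications, so the sources are fixed. Hence
\[
\mathsf A_{b,n}=\sup_{\sigma:\,Q\le b}\E\,V_n\big(q(\cdot\given Z_\sigma)\big),
\]
where $\sigma$ is any adaptive source-only policy. Because \cref{thm:fixed} also preserves a total cap $Q+N\le C$, we can treat $\mathsf A_C$ the same way. There the source-first policy may choose its number of source calls adaptively, and it receives $C-Q$ verifications afterwards.

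The computation rests on the structure of the posterior. After any transcript of source replies along requested rows $S$, the posterior on $W$ is uniform on an affine coset. Its image under $T$ is therefore uniform on a coset of a subspace of $\F^k$ of dimension $k-\rho(S)$, where $\rho(S)=I(A;Z_S)$ by \eqref{eq:rank}. For unique answers under a uniform posterior on $2^{k-\rho(S)}$ targets, $V_n=\min\{1,(n+1)2^{-k+\rho(S)}\}$. The key point is that this value depends only on the set $S$ requested, not on the replies.

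\emph{Upper bound for $\mathsf A_{b,n}$.} The set $S_\sigma$ requested may depend on the replies, but along every branch $|S_\sigma|\le b$. Hence $\rho(S_\sigma)\le\rho_b$ pointwise. Since $x\mapsto\min\{1,(n+1)2^{-k+x}\}$ is monotone, the expectation is at most $\min\{1,(n+1)2^{-k+\rho_b}\}$.

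\emph{Upper bound for $\mathsf A_C$.} On a branch that uses $b'$ calls, the value is at most $\min\{1,(C-b'+1)2^{-k+\rho_{b'}}\}$, and this is at most the maximum over $0\le b\le\min(C,J)$. We cap $b$ at $J$ because repeating a source adds nothing, and at $C$ because of the total cap. Taking expectations preserves the bound, so adaptive allocation does not help.

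\emph{Achievability.} We fix a maximizing $S$ (respectively, a maximizing pair $(b,S)$), query it, and enumerate the compatible targets in the coset: we verify up to $n$ of them and output the next. This attains the displayed values.

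The main obstacle is the reply-independence of the posterior size; outcome-adaptivity is exactly what could otherwise help. I would prove it with the rank identity: conditioning $W$ on $B_SW=z$ leaves a coset of $\ker B_S$ whenever $z$ is consistent, and $T(\ker B_S)$ has dimension $\operatorname{rank}\begin{bmatrix}T\\B_S\end{bmatrix}-\operatorname{rank}(B_S)=k-\rho(S)$, independently of $z$. With this in hand the expectation over replies becomes a pointwise maximum, and the rest is monotonicity.
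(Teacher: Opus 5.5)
Your proposal is correct and follows essentially the same route as the paper: reduce to \cref{thm:fixed}, use that on every source-only leaf the target posterior is uniform on $2^{k-\rho(S)}$ vectors whatever the replies, bound leaf by leaf with monotonicity (deleting repeated sources for the total cap), and attain both values with a fixed maximizing subset followed by enumeration of the compatible targets. One step the paper spells out that you take for granted is that, once the seed is fixed, reaching a leaf of an adaptive policy is exactly the event $B_SW=z$: any world satisfying these equations produces the same replies in order, so it follows the same requests and stopping decisions, and this is what lets you condition on the affine system rather than on an adaptively defined event.
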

On every source-only leaf the posterior of $W$ is uniform on an affine fiber, so the target is uniform on $2^{k-\rho(S)}$ vectors. \cref{thm:fixed} and \cref{prop:coverage} finish the proof (\cref{app:rank}). We write $C^\star$ for the least total cap with $\mathsf A_C=1$ and $(b^\star,n^\star)$ for an optimal split of it. Target-only search needs $2^k-1$ verifications for zero error; a bank with $\rho_b=b$ for $b\le k$ needs only $k$ source calls.

\paragraph{Computing the profile.}
Choosing a maximizing subset contains syndrome decoding, which is NP-hard \citep{berlekamp1978}, so with unbounded nuisance no algorithm polynomial in $J$ is expected. The tractable parameter is the \emph{nuisance dimension}. Let $\mathcal T=\operatorname{row}(T)$, $\mathcal V=\operatorname{span}(T,v_1,\dots,v_J)$ and $h=\dim\mathcal V-k$. Let $\bar v_j$ be the image of $v_j$ in the $h$-dimensional quotient $\mathcal V/\mathcal T$. For a subspace $U$ of the quotient spanned by some of the $\bar v_j$, define
\begin{equation}\label{eq:eligible}
\begin{aligned}
 \mathcal E(U)&=\{j:\bar v_j\in U\},\\
 t(U)&=\operatorname{rank}\{v_j:j\in\mathcal E(U)\}-\dim U .
\end{aligned}
\end{equation}
These are the sources whose nuisance content lies in $U$, and the target rank they carry once $U$ is paid for.
\begin{theorem}[Quotient-subspace planner, QSP]\label{thm:qsp}
For every source budget $b$, with $(x)_+=\max\{x,0\}$,
\begin{equation}\label{eq:qsp}
 \rho_b=\max_{U}\ \min\{t(U),\,(b-\dim U)_+\}.
\end{equation}
After a polynomial-time reduction, all budgets and maximizing subsets are computed in $\mathcal N_h\operatorname{poly}(J,k+h)$ field operations, where $\mathcal N_h=\sum_{u=0}^h{h\brack u}_2=2^{O(h^2)}$ counts the subspaces of $\F^h$.
\end{theorem}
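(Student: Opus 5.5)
The plan is to prove the min--max identity \eqref{eq:qsp} by two inequalities, working throughout in the quotient $\mathcal V/\mathcal T$. First rewrite \eqref{eq:rank}. For a set $S$, let $U_S=\operatorname{span}\{\bar v_j:j\in S\}$. Since $\operatorname{rank}\begin{bmatrix}T\\B_S\end{bmatrix}=k+\dim U_S$, we get $\rho(S)=\operatorname{rank}(B_S)-\dim U_S$. So $\rho(S)$ is the rank the sources contribute beyond the nuisance subspace they span.

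\emph{Upper bound.} Fix $S$ with $|S|\le b$ and put $U=U_S$. This $U$ is spanned by some $\bar v_j$, every $j\in S$ lies in $\mathcal E(U)$, and so $\operatorname{rank}(B_S)\le\operatorname{rank}\{v_j:j\in\mathcal E(U)\}$, which gives $\rho(S)\le t(U)$. Also $\operatorname{rank}(B_S)\le|S|\le b$, hence $\rho(S)\le b-\dim U$; and $\rho(S)\ge0$. Together, $\rho(S)\le\min\{t(U),(b-\dim U)_+\}$, and maximizing over $S$ gives $\le$ in \eqref{eq:qsp}.

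\emph{Lower bound.} Fix $U$ of dimension $u$ with $b\ge u$, and let $m=\min\{t(U),b-u\}$. I need $S\subseteq\mathcal E(U)$ with $|S|\le u+m$ and $\rho(S)\ge m$. Let $E=\mathcal E(U)$ and $R=\operatorname{span}\{v_j:j\in E\}$. Because $U$ is spanned by some $\bar v_j$ with $j\in E$, the image of $R$ in the quotient is exactly $U$. Consequently $\dim(R\cap\mathcal T)=\dim R-u=t(U)$. Choose $u$ indices in $E$ whose images form a basis of $U$; they are linearly independent and span a subspace $P$ with $P\cap\mathcal T=0$. Then extend this set greedily inside $E$ to a basis of $R$. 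Each added vector $v_j$ has $\bar v_j\in U=\bar P$, so for every prefix the quotient image stays $U$ and the rank rises by one. Each added vector therefore raises $\rho$ by exactly one. Stopping after $m$ additions gives $|S|=u+m\le b$ and $\rho(S)=m$. Terms with $b<u$ contribute $0$, which $S=\emptyset$ attains. This part is the main point to check: the key fact is that adding sources already inside the paid-for nuisance $U$ converts each new rank directly into target rank. The same construction also yields a maximizing subset.

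\emph{Algorithm.} For the reduction, compute $\mathcal T$, a complement basis, and the coordinates of each $\bar v_j\in\F^h$ by Gaussian elimination, in $\operatorname{poly}(J,k+h)$. Then enumerate all $\mathcal N_h$ subspaces of $\F^h$, for example via reduced row echelon forms. For each $U$, test membership of every $\bar v_j$ by elimination, skip $U$ unless the members span $U$ (restricting to spanned $U$ only shrinks the maximum, and the upper bound used only such $U$), and compute $t(U)$ by a rank computation. This fixes for each $U$ the function $b\mapsto\min\{t(U),(b-\dim U)_+\}$. Taking, for every $b\le J$, the maximum over $U$ and recording the arg-max with its greedy subset gives all budgets at $\operatorname{poly}(J,k+h)$ cost per subspace. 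The bound $\mathcal N_h=\sum_u{h\brack u}_2\le(h+1)\,2^{h^2/4+O(h)}=2^{O(h^2)}$ is the standard estimate ${h\brack u}_2\le c\,2^{u(h-u)}$.
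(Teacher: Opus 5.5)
Your proof is correct and follows the paper's argument. You use the same rank identity $\rho(S)=\operatorname{rank}(B_S)-\dim U_S$, the same upper bound via $\operatorname{rank}(B_S)\le|S|$ and containment in the eligible span, and the same attainment construction: a basis of $U$ taken from eligible rows, extended to a basis of the eligible span, with each added row raising $\rho$ by one. The only difference is in the algorithm, and both versions meet the stated $\mathcal N_h\operatorname{poly}(J,k+h)$ bound. You enumerate all subspaces of $\F^h$ in reduced echelon form and discard those not generated by the $\bar v_j$, whereas the paper runs a closure search from $\{0\}$ that only ever visits generated subspaces. One small correction: the preprocessing is polynomial in $J$ and the original latent dimension $d$, not in $k+h$.
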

A subset whose images span $U$ gains at most $t(U)$ and at most $|S|-\dim U$. Conversely, a basis of $U$ extended to a basis of the eligible rows realizes every gain up to $t(U)$ with $\dim U$ plus that many sources (\cref{app:planner}). The static profile is classical. Let $K_0\subseteq K_1\subseteq\F^J$ be the kernels of the linear maps that send the $j$th unit vector to $v_j$ and to $\bar v_j$; then
\begin{equation}\label{eq:relative}
 \rho(S)=\dim(K_1\cap\F^S)-\dim(K_0\cap\F^S),
\end{equation}
so $\min\{b:\rho_b\ge t\}$ is the $t$th relative generalized Hamming weight of the pair \citep{luo2005,kurihara2012,zhuang2014relative}. QSP enumerates the $2^{O(h^2)}$ subspaces of the nuisance quotient instead of the $2^J$ subsets of sources, which is enumeration on the small side of Forney's duality \citep{forney1994dimension} (\cref{app:coding}). General algorithms exist \citep{sanjose2025}. For $k=1$, a breadth-first search over $\F^{h+1}$ decides $\rho_b$ in $2^{h+1}J$ steps, so we do not claim that the exponent is optimal.

\paragraph{The call price of nuisance.}
\cref{thm:qsp} also says what nuisance costs in calls under a hard cap.
\begin{corollary}[In a linear bank the call excess is nuisance]\label{cor:waste}
In a uniform linear bank, take zero error with $n$ verifications, $n+1<2^k$, and let $\kappa=\lceil k-\log_2(n+1)\rceil=\lceil\R(q,1)\rceil$. A hard source cap $b$ suffices if and only if $b\ge b_n$, where
\[
 b_n=\kappa+u(\kappa),\qquad u(\kappa)=\min\{\dim U:\ t(U)\ge\kappa\},
\]
over the subspaces $U$ of \cref{thm:qsp} ($b_n=\infty$ if $\rho_J<\kappa$). The same $b_n$ is the least expected number of calls of a source-first protocol. The excess over the rounded-up information price, $b_n-\kappa=u(\kappa)\le h$, is the number of calls an optimal subset spends on nuisance directions, and it vanishes exactly when an optimal subset resolves one new target dimension per call.
\end{corollary}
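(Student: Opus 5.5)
The plan is to reduce everything to the closed form of \cref{thm:rank} and the planner formula of \cref{thm:qsp}.

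\textbf{Hard cap.} By \eqref{eq:rankrect}, zero error with $n$ verifications and source cap $b$ is possible if and only if $(n+1)2^{-k+\rho_b}\ge1$. This is the condition $\rho_b\ge k-\log_2(n+1)$. Since $\rho_b$ is an integer, it is equivalent to $\rho_b\ge\kappa$. For the identity $\kappa=\lceil\R(q,1)\rceil$, use the list-Fano value quoted after \cref{thm:exact}. The frontier for uniform $q$ is $\mathrm{kl}_+(1\|(n+1)2^{-k})=k-\log_2(n+1)$. Strictly that statement was made for unique answers, but the target $A=TW$ is uniform on $\F^k$ and acceptance is $\ind\{y=A\}$, so it applies to $A$. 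Passing from $\Theta=W$ to $A$ costs nothing, since acceptance depends only on $A$.

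\textbf{Least budget.} Because $\rho_b$ is nondecreasing in $b$, the least sufficient cap is $b_n=\min\{b:\rho_b\ge\kappa\}$. By \eqref{eq:qsp}, $\rho_b\ge\kappa$ holds if and only if some $U$ has $t(U)\ge\kappa$ and $b-\dim U\ge\kappa$. Minimizing over $b$ gives $b_n=\kappa+\min\{\dim U:t(U)\ge\kappa\}$. If no such $U$ exists, then $\rho_J<\kappa$ and $b_n=\infty$. The bound $u(\kappa)\le h$ holds because $U$ lives in the $h$-dimensional quotient.

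\textbf{Nuisance interpretation.} Take an optimal subset $S$ of size $b_n$ and let $U$ be the span of its images $\bar v_j$. The upper-bound half of the QSP argument gives $\rho(S)\le|S|-\dim U$, so at least $\dim U$ calls in $S$ resolve no target dimension. Through \eqref{eq:rank}, these calls are exactly the rank spent on nuisance directions. Optimality forces $\dim U=u(\kappa)$ and $\rho(S)=\kappa$. The excess therefore vanishes precisely when some $U$ with $\dim U=0$ has $t(U)\ge\kappa$, which means $\kappa$ sources lying in $\mathcal T$ (images in $\mathcal T$) have independent rows and each adds one target dimension.

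\textbf{Expected calls.} This is the main obstacle. Achievability is immediate: the fixed subset uses $b_n$ calls deterministically. For the converse, consider a source-first protocol with bank sources and zero error. On each source-only leaf, the posterior of $A$ is uniform on $2^{k-\rho(S_{\mathrm{leaf}})}$ vectors, by the fiber argument of \cref{thm:rank}. The list of $n+1$ verifications then covers the target with probability one only if $\rho(S_{\mathrm{leaf}})\ge\kappa$. Every leaf must therefore have made at least $b_n$ calls, since adaptivity cannot enlarge $\rho$ beyond $\rho_{|S|}$. Hence $Q\ge b_n$ almost surely, and so $\E Q\ge b_n$. The care needed is that source-first zero error forces the condition leafwise rather than on average. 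That holds because any leaf of positive probability with $\rho<\kappa$ leaves uncovered targets of positive mass.
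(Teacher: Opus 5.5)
Your proposal is correct and follows the paper's own proof step for step: the rank law \eqref{eq:rankrect} with integer rounding gives the condition $\rho_b\ge\kappa$, and the QSP formula \eqref{eq:qsp} turns the least such $b$ into $\kappa+u(\kappa)$. For expected calls, the leafwise uniform-fiber argument forces $|S|\ge b_n$ on every path of a zero-error source-first protocol. The only differences are minor: you justify $\R(q,1)=k-\log_2(n+1)$ by passing explicitly from $W$ to $A=TW$, and you get $\rho(S)=\kappa$ for an optimal subset from the QSP bound $\rho(S)\le|S|-\dim U$ instead of the paper's observation $\rho_{b+1}\le\rho_b+1$.
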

The proof combines \eqref{eq:rankrect}, \eqref{eq:qsp} and the list-Fano value (\cref{app:calls}). The subspace $U$ is what the learner must pay for, one call per dimension, before the remaining calls resolve one target dimension each. Interleaved protocols can do better in expectation. With $k=2$, two sources that reveal one target bit each, and $n=2$, verifying two candidates first and calling a source only if both are rejected uses $\E Q=1/2$ calls, against $b_n=1$.

\paragraph{Learning the bank, and checks.}
A bank of rank $\ell\le\ell_0$ is recovered from $\ell_0+\lceil\log_2(1/\delta)\rceil$ fully labelled contexts with probability $1-\delta$, after which QSP plans optimally (\cref{thm:learnlinear}; \citealp{helmbold1992learning}). Exhaustive checks on small banks agree with QSP and with \cref{cor:waste} (\cref{app:vfourexp,app:calls}). We also ran the unmodified generalized-Hamming-weight package GHWs of \citet{sanjose2025} on 27 banks under a 15-second cap: profiles agree wherever both finish, QSP finishes all 27 and GHWs 21, and GHWs is faster on high-nuisance banks and on the Reed--Muller code RM(2,4) (\cref{sec:fullexternal}). Heuristics do fall short of the optimum. On the 40 label-fitted banks of \cref{sec:fullexternal}, the best of all our source-aware baselines reaches accuracy $\VFiveIIDPortfolio$ on the 16 independent and $\VFiveClusterPortfolio$ on the 16 nuisance-clustered banks at the exact zero-error cap $C^\star$, where QSP reaches one by definition, and matches QSP there on the eight coded variants; averaged over all budgets, its gaps on the first two groups are only $\VFiveIIDPortfolioRegret$ and $\VFiveClusterPortfolioRegret$ percentage points. On the $d=11$ banks of \cref{sec:neuralexp}, one-step greedy delivers \CurElevenRhoGreedy{} bits where the optimum is \CurElevenRhoQsp.
\section{The frontier as a ruler}\label{sec:neuralexp}
\begin{figure*}[t]
\centering\includegraphics[trim=0 0 67 0,clip,width=0.8671\textwidth]{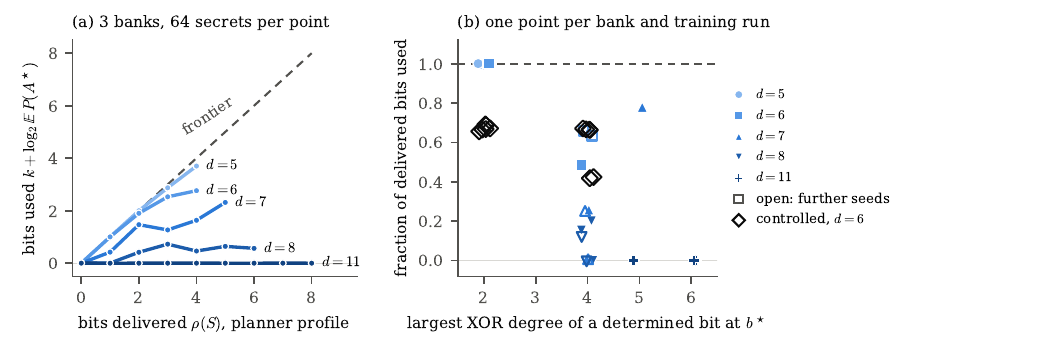}
\caption{\textbf{The frontier as a ruler.} (a) Bits a two-layer transformer uses against bits the planner's sources deliver, pooled over three banks per latent dimension $d$ and 64 secrets per point (the zero-error-split fractions in the text use four secrets per bank); the dashed line is the exact posterior. (b) Fraction of delivered bits used at the zero-error split, one point per bank and training run, against the largest XOR degree among the determined target bits (small horizontal jitter). Filled markers are the 15 banks of (a) with their first training seed, open markers the two further seeds of bank~0 at $d\in\{6,7,8\}$, and diamonds the degree-controlled banks at $(k,h,J)=(4,2,12)$, selected and predicted before training.}\label{fig:ruler}
\end{figure*}
On a uniform linear bank the exact posterior after acquiring a set $S$ with replies $z$ is uniform on the $2^{k-\rho(S)}$ compatible targets. A trained network that receives the same labels induces its own distribution over the $2^k$ targets, which we compute exactly by teacher forcing, and for a secret $W^\star$ with target $A^\star=TW^\star$ we measure
\begin{align}
 \text{bits used}&=k+\log_2\E\,P(A^\star\given\text{acquired})\nonumber\\
 &\le k+\log_2\E\,2^{\rho(S)-k}=\text{bits delivered},\label{eq:bitsused}
\end{align}
with the expectation over the secret $W^\star$ and, when banks are pooled, over banks. The inequality holds because, given the labels, the true target's expected probability under any predictor is at most $2^{\rho-k}$; with four secrets per bank an estimate can exceed it by sampling noise (2.01 against 2.00 on one bank). Unused bits cost verifications: sampling the policy until a verification accepts takes $\E[1/P(A^\star)]\ge2^{k-\text{bits used}}$ samples, against $\E\,2^{k-\rho}$ for the exact posterior.

\paragraph{How much is used.}
The learner is a two-layer transformer ($0.4$M parameters) trained from scratch on sequences that list acquired (source, label) pairs followed by the $k$ target bits; at test time the planner chooses which labels appear, one call each. We use latent dimensions $d=k+h\in\{5,6,7,8,11\}$ with three random banks each and evaluate at the zero-error split $(b^\star,n^\star)$ of \cref{sec:planner} (setup in \cref{app:controls27}). At $d=5$ the decoder is on the frontier: all \ICexactDFive{} of \ICrunsDFive{} contexts are within $0.2$ bit of their delivered $\rho(S)$ (\cref{fig:ruler}). With the planner's sources, the fraction used (pooled over banks) then falls to \ICfracDSix{} at $d=6$, \ICfracDSeven{} at $d=7$, \ICfracDEight{} at $d=8$ and \ICfracDEleven{} at $d=11$, although the signal is there to learn: the Bayes-optimal pretraining loss at $d=11$ is \BayesLossDEleven{} nats against $\ln2=0.69$ for a model that ignores the labels. Training seeds move single banks substantially (seed ranges on bank~0 in \cref{app:controls27}), and none of 8 configurations at $d=8$ (14 runs, widths 64--512, up to $20{,}000$ steps) reached the frontier (\cref{tab:capacity}). The unused bits have a price: at $d=8$ the planner's sources leave the learner \CurEightElimQsp{} guesses against the exact posterior's \CurEightBayesQsp{}, so a better choice of sources helps only as far as the decoder can use it (\cref{app:controls27}).

\paragraph{Which bits are used.}
Each determined target bit is the XOR of some acquired labels and earlier, teacher-forced target bits; the fewest such terms is its \emph{degree}. Across these 15 banks the decoder is on the frontier on exactly the five banks where every determined bit has degree at most two (\cref{fig:ruler}(b)), which suggests that degree decides use. A controlled test says otherwise. \DegreeTestSentence{} Prediction P1 failed; P2 held on the controlled banks, but its bound fails on two of the 15 banks above, so at this budget degree neither guarantees nor bounds use (\cref{app:controls27}). At four times the training budget three of the six controlled banks recover further bits. What the ruler adds to known parity failures \citep{barak2022,bhattamishra2024understanding} is a count of the unused delivered bits against an exact reference, and their price in verifications.
\section{Related work}\label{sec:related}
\paragraph{Verification, guessing and exact learning.}
A verifier that accepts whole answers is an equivalence-query oracle, and the target-only lower bound is the all-``no'' adversary of exact learning \citep{angluin1988queries,hegedus1995generalized}. For unique answers, the target-only problem is guessing \citep{massey1994,arikan1996}. How much side information reduces guessing has been studied for a fixed helper, a compressed description, or a distortion criterion \citep{arikan1998guessing,bunte2014encoding,weinberger2020guessing,graczyk2021}. The static list rate--distortion function and its list-Fano special case are classical \citep{csiszar1974extremum,sakai2020}. We add the interleaved, causal version: adaptivity buys nothing in information under exact verification (\cref{thm:exact}), and \cref{thm:calls,prop:general,prop:interleave,thm:fixed} say what it buys in calls.

\paragraph{Lower bounds with interaction and value of information.}
The converse is a change of measure \citep{kaufmann2016,simchowitz2017,garivier2019explore} of the interactive-Fano type \citep{chen2024assouad}. The difference is that exact verifications stay in the reference law, where they cancel and are never charged. Charging interaction by information rather than by messages follows information complexity \citep{braverman2014information}; the gap between the information price and the expected number of calls (\cref{thm:calls,prop:general}) rests on the gap between graph and chromatic entropy in source coding with side information \citep{alon1996source}, whose interactive version is studied by \citet{orlitsky1992average}. Noisy verification is a search game with lies \citep{pelc2002searching}. In noisy twenty questions a non-adaptive policy is optimal for entropy loss \citep{jedynak2012twenty}, much as one upfront observation attains \cref{thm:exact}, whereas for error probability adaptive query selection matters \citep{naghshvar2013active}, as interleaving does in \cref{prop:noisy}. \Cref{eq:fixedvalue} is sequential value of information \citep{krause2005,krause2009,golovin2011adaptive} with list coverage as the terminal utility.

\paragraph{Reinforcement learning from verifiable rewards.}
Recent theory studies coverage and sharpening \citep{chen2026coverage,huang2025self}, autocurricula \citep{rajaraman2026,rajaraman2026learning}, compositional structure \citep{barzilai2026}, budgets \citep{wachi2026}, the limits of outcome feedback \citep{chen2025outcome}, and the query complexity of generation with a verifier \citep{botta2025query}. Without sources, $V_n(q)$ is the best pass@$(n+1)$ \citep{chen2021evaluating,brown2024large} of any list of $n+1$ answers. Those works analyse learning or generation across prompts; we price what source tasks deliver about a single instance.

\paragraph{Codes and parities.}
The profile $\rho_b$ is a relative dimension/length profile \citep{wei1991generalized,forney1994dimension,luo2005,kurihara2012,zhuang2014relative}, with a general solver \citep{sanjose2025}. That delivered information need not be usable information is the premise of $\mathcal V$-information \citep{xu2020theory}. The ruler of \cref{sec:neuralexp} measures that gap against an exact reference and prices it in verifications.

\section{Limitations and conclusion}\label{sec:conclusion}
The frontiers assume finite alphabets. \Cref{thm:exact} prices designed sources and \cref{thm:fixed} fixed ones. Reordering to source-first preserves hard caps, not expected calls, and it needs exact verification (\cref{prop:interleave,prop:noisy}); under noise we have converses, one of them exact on our instance, but no general frontier. Under hard caps the call frontier is exact for fixed sources, as the recursion~\eqref{eq:fixedvalue}, and in closed form for linear banks. In expectation it is tight to an additive constant for unique answers and to a logarithmic term in general, where the excess can be unbounded. The rank law, planner and learner assume uniform, noiseless linear banks, and the planner is efficient only for small nuisance dimension. The neural measurements use small transformers trained from scratch on synthetic banks, with three banks per dimension, few training seeds and fixed budgets of $6{,}000$ pretraining steps ($8{,}000$ at $d=11$); at $24{,}000$ steps three of the six controlled banks recover further bits, so the fractions used depend on the budget. The degree analysis rests on few banks, and none of these measurements is a claim about pretrained language models.

\paragraph{What this means for curricula.} With an exact verifier and sources whose replies do not depend on verification outcomes, all source calls can be scheduled first at no cost under any hard cap on calls (\cref{thm:fixed}); interleaving can still save expected calls (\cref{prop:interleave}) and information (\cref{app:fixed}). With a noisy verifier, interleaving can reduce the error at equal call caps by an unbounded factor (\cref{prop:noisy}), and the information needed for zero error by an unbounded factor (\cref{thm:noise}). The information price $\R(q,s)$ lower-bounds the expected number of binary calls, and designed sources meet it within $1+\log_25$ calls for unique answers (\cref{thm:calls}); in a linear bank at zero error under a hard cap, the calls beyond the rounded-up price go to nuisance directions that the planner identifies (\cref{cor:waste}). Whether a learned decoder uses what the sources deliver is a separate question, which the ruler measures.

\section*{AI Use Statement}
In this work, we used generative AI tools for helping to develop the theoretical framework and to formulate mathematical claims, for providing ingredients for their proofs and drafting them, for proposing and refining hypotheses, designing experiments and giving feedback on the methodology, and for implementing the numerical methods and experiments and interpreting their results. We have not used generative AI tools to generate data sets or results: the synthetic task banks are generated by released code, and every reported number is computed by that code from logged runs. Translation, data-set cleaning and qualitative or thematic data analysis are not applicable to this work. Additionally, we used generative AI tools for literature search and checking the bibliography, for writing code, figures and the Lean formalization, and for drafting and editing the text. We have reviewed all AI-assisted work. We checked all AI-assisted proofs, code and descriptions of cited work, and the numbered results are also machine-checked in Lean~4, with the exceptions and the two assumed published results stated at the start of the appendix. We take responsibility for the final content of this work, including text, claims, or artifacts produced with the aid of generative AI.

\nocite{sagemath2025,paszke2019pytorch}
\bibliographystyle{plainnat}
\bibliography{references}

\clearpage
\appendix
\onecolumn
\renewcommand{\thesection}{\AlphAlph{\value{section}}}
\paragraph{Contents.} \Cref{app:coverage,app:selective,app:duality,app:zero,sec:zero,app:illustrations,app:fixed,app:rank,app:calls,sec:noise,app:noise,sec:channels} hold the proofs for \cref{sec:model} and for the rank law, in this order: local benchmark, selective-channel converse, dual, zero error, frontier illustrations, fixed sources, rank law, calls and \cref{cor:waste}, noisy verification, and what information quantity leaves out. \Cref{app:planner,app:coding,app:learnlinear} give the planner, its coding-theory form and learning the bank; \cref{app:vfourexp,sec:fullexternal,app:vfive} the planner experiments; \cref{app:vfiveexp,app:controls27} the neural experiments with the degree analysis and controls; \cref{app:compute} numerical methods; and \cref{app:relatedfull} extended related work.

\paragraph{Where each result is proved.} \Cref{prop:coverage}: \cref{app:coverage}. \Cref{thm:exact}: \cref{app:selective,app:duality}. \Cref{lem:selective}: \cref{app:selective}. \Cref{thm:calls,prop:general,prop:interleave,prop:noisy,cor:waste}: \cref{app:calls}. \Cref{thm:fixed}: \cref{app:fixed}. \Cref{thm:rank}: \cref{app:rank}. \Cref{thm:qsp}: \cref{app:planner}. \Cref{cor:zero}: \cref{app:zero} (statement in \cref{sec:zero}). \Cref{thm:noise}: \cref{app:noise} (statement in \cref{sec:noise}). \Cref{thm:learnlinear}: \cref{app:learnlinear}.

\paragraph{Machine-checked results.} All numbered results are formalized in Lean~4 \citep{demoura2021lean4} with Mathlib \citep{mathlib2020} and checked by the Lean kernel (ancillary files, \texttt{lean/}; \texttt{Paper.lean} maps each statement to its Lean counterpart and records the modelling choices), except the algorithmic claim of \cref{thm:qsp} (correctness and running time of the enumeration) and the last clause of \cref{thm:learnlinear}. Two published results \citep{cardinal2008minimum,li2018strong} are assumed as axioms, both used only for \cref{prop:general}. Some formal proofs take a different route from the written ones. The numerical and empirical claims, and a few unnumbered remarks listed in the ancillary files, are not formalized.

\section{Complete proof of the local benchmark}\label{app:coverage}
All alphabets and the protocol horizon are finite. A private seed may have any distribution independent of $\Theta$; the arguments below first condition on that seed. Any history-only stochastic kernel can be realized from an independent uniform random variable. Include all such uniforms in a complete seed $u$. Thus, when a protocol has no informative auxiliary channel, fixing $u$ makes every action deterministic in the past target replies.

Run this deterministic protocol with every target reply forced to zero, including at histories that are impossible for a particular world. The policy has an arbitrary fixed extension to such histories. Form $L_n(u)$ from the first $n$ target queries on this silent trajectory, or all queries if it stops sooner. Include its final output only if it stops after at most $n$ queries. If it has not stopped by its $(n+1)$st query, no fallback is needed. Repeated responses are kept only once, so $|L_n(u)|\le n+1$.

For a real trajectory on which $\acc=1,N\le n$, there are two possibilities. If a positive target reply occurs, the first such reply follows the same actions as the silent trajectory up to that time. Its acceptable queried response belongs to $L_n(u)$. If no positive reply occurs, the entire real trajectory agrees with the silent trajectory through stopping, and its acceptable output is the included fallback. Consequently,
\[
 \{\theta:\acc(\theta,u)=1,N(\theta,u)\le n\}
 \subseteq\bigcup_{y\in L_n(u)}\{\theta:f_\theta(y)=1\}.
\]
The prior mass of this union is at most $V_n(r)$. Average over the independent seed to prove~\eqref{eq:stopped}. If the protocol obeys a hard cap, the same event is simply $\acc=1$.

For attainability, take a maximizing list $L=\{y_1,\ldots,y_k\}$ with $k\le n+1$. Query $y_1,\ldots,y_{k-1}$ until one receives reward one and then output that response. If none succeeds, output $y_k$. This succeeds precisely in worlds covered by $L$. Multiple acceptable responses and overlap between their acceptance sets cause no difficulty. Computationally, this is maximum coverage rather than arbitrary twenty-questions search: after a positive reply, the goal has already been achieved.

\section{Complete selective-channel construction}\label{app:selective}
Write the history as $(\xi,H_{t-1})$, including all randomness belonging to the learner, and treat the learner's action $C_t$ as a deterministic function of it. A target-query transition returns $f_\Theta(y_t)$; a source transition has kernel $K_t(\cdot\mid\Theta,H_{t-1},C_t)$. Stop and padding transitions are deterministic. Any randomized final output is included in $\xi$.

Under $P$, the initial law is $q\otimes P_\xi$. Define $\overline K_t$ by the $P$ posterior predictive at every positive-$P$ auxiliary history. At other histories choose any fixed probability distribution on the observation alphabet. Define $\widetilde P$ using initial law $r\otimes P_\xi$, the same action functions and target replies, and auxiliary kernels $\overline K_t$.

\paragraph{Absolute continuity.}
At any positive-$P$ atom $(\theta,u,\mathfrak h)$, $q_\theta>0$ implies $r_\theta>0$. At every auxiliary node on its path, the conditional posterior gives positive weight to $\theta$. Thus an observation with $K_t(z\mid\theta,\mathfrak h,c)>0$ also has $\overline K_t(z\mid \mathfrak h,c)>0$. Every positive-$P$ terminal path has positive $\widetilde P$ probability. For general private seeds this statement holds for almost every $u$, and conditional integration gives the same result. The extension at null histories does not enter any $P$-expected divergence.

\paragraph{Likelihood ratio.}
Along such a path, private-seed probabilities and learner-action transitions cancel; target, stop, and padding transitions agree. Only the prior and source factors remain:
\begin{equation}\label{eq:ratioapp}
 \log_2\frac{P(\theta,u,\mathfrak h_T)}{\widetilde P(\theta,u,\mathfrak h_T)}
 =\log_2\frac{q_\theta}{r_\theta}
 +\sum_{t:C_t\ \mathrm{auxiliary}}
 \log_2\frac{K_t(z_t\mid\theta,\mathfrak h_{t-1},c_t)}
                  {\overline K_t(z_t\mid \mathfrak h_{t-1},c_t)}.
\end{equation}
Taking expectation gives~\eqref{eq:pathKL}. Target feedback may be informative under both laws, but it is not charged a second time: its conditional experiment is identical under the two laws.

\paragraph{Why the reference has no auxiliary information.}
At a fixed history, $\overline K_t$ does not depend on the realized $\theta$. It may be a table constructed from the true model $P$ and the entire learner, but it can be simulated using only its history and independent random bits. It is therefore part of a target-only randomized protocol under initial prior $r$. Such a protocol need not have the same unconditional distribution of transcripts as the original one. Nor must it obey a cap that held only $P$-almost surely; this is precisely why Proposition~\ref{prop:coverage} was stated for joint success--budget events.

Set $E_n=\{\acc=1,N\le n\}$, $a=P(E_n)$, $b=\widetilde P(E_n)$, and $c=V_n(r)$. Data processing gives $\mathrm{kl}(a\|b)\le\KL(q\|r)+\aux$, and Proposition~\ref{prop:coverage} gives $b\le c$. For $a>c$, binary KL decreases in its second argument below $a$, so $\mathrm{kl}(a\|c)\le \mathrm{kl}(a\|b)$. For $a\le c$, the one-sided divergence is zero. Endpoint cases follow by absolute continuity and the usual extended-value conventions. This proves Lemma~\ref{lem:selective}.

A useful consequence for unconstrained stopping is, for every fixed integer $n$,
\[
 P(\acc=1)\le P(N>n)+
 \sup\{s\in[0,1]:\R(q,s)\le\aux\}.
\]
This is a tail decomposition, not an expectation of a fixed-budget frontier evaluated at $N$. No independence of $N$ from $\Theta$ is assumed.

\section{Rate--distortion dual and exact operational reduction}\label{app:duality}
This appendix proves~\eqref{eq:dual} through the Gibbs variational form
\begin{equation}\label{eq:gibbs}
 (\ln2)\R(q,s)=\sup_{\beta\ge0}\Big\{\beta s-\max_{\mu\in\Delta(\Ls)}\sum_\theta q_\theta\ln\sum_L\mu_Le^{\beta c_{\theta L}}\Big\},
\end{equation}
whose inner value is $\inf_{r>0}\{D(q\|r)+\ln(1+(e^\beta-1)V_n(r))\}$; the binary log-moment dual then yields~\eqref{eq:dual}. The result is the classical rate--distortion dual \citep{csiszar1974extremum} in the coordinates of a reference prior; we include a self-contained finite-alphabet derivation because the list-coverage distortion and the reference-prior form are what the protocol theorem uses. Only in this appendix use natural logarithms, with $D$ and $I$ in nats; write $R^{\mathrm{nat}}(s)=(\ln2)\R(q,s)$. The source prior has full support, and the union of covered worlds is all of $[M]$. The finite channel space is compact, mutual information is continuous there, and the success constraint is closed. Thus the minimum is attained for every $s\in[0,1]$.

\subsection{The finite-alphabet Gibbs formula}
For a channel $W$ and any list distribution $\mu$,
\begin{equation}\label{eq:decompose}
 \sum_\theta q_\theta D(W(\cdot\mid\theta)\|\mu)
 = I_q(\Theta;L)+D(qW\|\mu).
\end{equation}
It follows that $I_q(\Theta;L)$ is the minimum of the left side over $\mu$, attained at $\mu=qW$. The Lagrange multiplier for $\E c_{\Theta L}\ge s$ is $\beta\ge0$. For any fixed $\mu$,
\[
 \min_{W(\cdot\mid\theta)}\left[
 D(W(\cdot\mid\theta)\|\mu)-\beta\sum_LW(L\mid\theta)c_{\theta L}
 \right]
 =-\ln\sum_L\mu_Le^{\beta c_{\theta L}}.
\]
Indeed, the bracket equals divergence to the normalized Gibbs distribution
$\mu_Le^{\beta c_{\theta L}}/Z_\theta$ minus $\ln Z_\theta$.
The minimizations over $W$ and $\mu$ are over a product space and may be exchanged. Convex duality for the finite channel problem therefore gives~\eqref{eq:gibbs} for $s<1$, where a strictly better-than-$s$ success channel exists. For $s\le V_n(q)$ a constant maximizing list has zero information, and the same formula has value zero.

For $s=1$, choose $s_j\uparrow1$ and corresponding minimizing channels. A convergent subsequence has success one and, by continuity, information equal to the limit of their costs. Monotonicity then gives $R^{\mathrm{nat}}(s_j)\uparrow R^{\mathrm{nat}}(1)$. Taking increasing limits in the dual establishes the endpoint formula without assuming a finite optimal multiplier.

\subsection{Optimizing the reference prior}
Fix $\beta\ge0$ and put $E_{\theta L}=e^{\beta c_{\theta L}}$, $Z_\theta=\sum_L\mu_LE_{\theta L}$, and $F_\beta(\mu)=\sum_\theta q_\theta\ln Z_\theta$. For any strictly positive prior $r$, Jensen gives
\begin{align}
 F_\beta(\mu)
 &=D(q\|r)+\sum_\theta q_\theta\ln\frac{r_\theta Z_\theta}{q_\theta}\\
 &\le D(q\|r)+\ln\sum_\theta r_\theta Z_\theta\\
 &\le D(q\|r)+\ln\max_L\sum_\theta r_\theta E_{\theta L}\\
 &=D(q\|r)+\ln\big(1+(e^\beta-1)V_n(r)\big).
\end{align}
We next show that this upper bound is attained, establishing the identity
\begin{equation}\label{eq:bridge}
 \max_\mu F_\beta(\mu)=\inf_{r>0}\{D(q\|r)+\ln(1+(e^\beta-1)V_n(r))\}.
\end{equation}
Let $\mu^*$ maximize the smooth concave function $F_\beta$ over the simplex. Its gradient is
\[
 g_L=\sum_\theta\frac{q_\theta E_{\theta L}}{Z_\theta}.
\]
The simplex first-order condition gives $g_L\le\lambda$, with equality on the support of $\mu^*$. Since $\sum_L\mu^*_Lg_L=\sum_\theta q_\theta=1$, $\lambda=1$ and $\max_Lg_L=1$.
Define
\[
 C_0=\sum_\theta q_\theta/Z_\theta,\qquad
 r^*_\theta=\frac{q_\theta}{C_0Z_\theta}.
\]
Then
\[
 D(q\|r^*)=F_\beta(\mu^*)+\ln C_0,
 \qquad \max_L\sum_\theta r^*_\theta E_{\theta L}=1/C_0.
\]
Their sum is $F_\beta(\mu^*)$, proving equality in~\eqref{eq:bridge}. Positivity of $E$ makes $Z$ and $r^*$ strictly positive. No minimax interchange between nonconvex feasible sets is used in this step.

Substitute~\eqref{eq:bridge} into~\eqref{eq:gibbs}. The two suprema may be exchanged:
\begin{align}
 R^{\mathrm{nat}}(s)
 &=\sup_{r>0}\left[-D(q\|r)+
   \sup_{\beta\ge0}\{\beta s-\ln(1+(e^\beta-1)V_n(r))\}\right]\\
 &=\sup_{r>0}\{\mathrm{kl}_{\mathrm{nat},+}(s\|V_n(r))-D(q\|r)\}.
\end{align}
For $0<c<1$, the inner supremum is attained at $\beta=0$ if $s\le c$, and otherwise at
$\beta=\ln[s(1-c)/(c(1-s))]$, with its limiting interpretation at $s=1$. Its value is the one-sided binary KL. This proves~\eqref{eq:dual} after division by $\ln2$. Values $V_n(r)=1$ yield zero; impossible-support endpoints follow by limits.

\subsection{From the dual to the protocol theorem}
For any protocol, Lemma~\ref{lem:selective} holds for every $r$. Taking the supremum over $r$ gives
$\aux\ge\R(q,P(\acc=1,N\le n))$. Under a hard cap, this is at least $\R(q,s)$ whenever accuracy is at least $s$, since $\R$ is nondecreasing.

Conversely, take a minimizing $W(L\mid\Theta)$ in~\eqref{eq:RD}. The auxiliary observation is the list $L$ itself, at the initial history. Its information charge is $I(\Theta;L)$. The target-only list strategy of Appendix~\ref{app:coverage} succeeds exactly when the list covers $\Theta$, uses at most $n$ queries, and has success at least $s$. This constructs a finite upfront protocol attaining the converse for every finite prior and acceptance geometry. Initial randomization to choose list order carries no target information.

\section{Zero error, list allocation, and uniform sharpness}\label{app:zero}
This appendix proves \cref{cor:zero} and~\eqref{eq:uniform}, both stated in \cref{sec:zero}.
\subsection{Coverage-polytope entropy}
At zero error, any feasible list channel satisfies $W(L\mid\theta)=0$ whenever $c_{\theta L}=0$. For a proposed list distribution $\mu$, let $a_\theta=\sum_L\mu_Lc_{\theta L}$. On the support of a zero-error $W$,
\[
 D(W(\cdot\mid\theta)\|\mu)
 =D\!\left(W(\cdot\mid\theta)\middle\|\frac{\mu_Lc_{\theta L}}{a_\theta}\right)
   +\log_2(1/a_\theta).
\]
Use~\eqref{eq:decompose} at $\mu=qW$ to obtain a lower bound by the right-hand side of~\eqref{eq:zero}. Every feasible $a$ is a convex combination of coverage vectors.

For equality, minimize $-\sum q_\theta\ln a_\theta$ over $a=C\mu$. A finite optimizer exists with every $a_\theta>0$, since a uniform mixture of all lists covers every world with positive probability. For an optimal $\mu$, the first-order conditions for maximizing $\sum q_\theta\ln a_\theta$ give
\[
 g_L=\sum_\theta q_\theta c_{\theta L}/a_\theta\le1,
 \qquad g_L=1\quad\text{if }\mu_L>0.
\]
Set $W(L\mid\theta)=\mu_Lc_{\theta L}/a_\theta$. Its rows sum to one; its marginal is $\mu_Lg_L=\mu_L$; and its information is exactly $\sum q_\theta\log_2(1/a_\theta)$. This proves~\eqref{eq:zero} and provides the attaining channel. The static convex-corner formula is classical; its interpretation as the exact causal price follows from Theorem~\ref{thm:exact}.

\subsection{Unique answers}
For lists of size $\ell<M$, the coverage polytope of maximal-size lists is
$\{a\in[0,1]^M:\sum_i a_i=\ell\}$. To see this, every indicator list lies in that polytope; conversely any nonintegral point has at least two fractional coordinates and can be perturbed in opposite directions while preserving their sum, so cannot be an extreme point. The extreme points are precisely the indicator lists. Smaller lists are dominated by extensions and cannot improve the decreasing objective.

The optimizer has $a_i>0$ because $q_i>0$. The KKT equations for minimizing $-\sum_iq_i\ln a_i$ with the sum and upper-bound constraints give $a_i=q_i/\tau$ at unsaturated coordinates and $a_i=1$ at saturated ones. Hence~\eqref{eq:clip}. The function $\sum_i\min(1,q_i/\tau)$ decreases continuously from $M$ to zero and identifies the needed $\tau$ when $\ell<M$.

For $q=(0.8,0.1,0.1)$, $\ell=2$, take the two lists $\{1,2\}$ and $\{1,3\}$ with equal marginal probability. Conditional on $A=1$ choose either equally; conditional on $A=2$ or $3$, choose its corresponding list. The cost is $0.8\cdot0+0.2\cdot1=0.2$ bits. An alternative interleaved protocol queries answer one first, and only after failure receives the bit distinguishing answers two and three. It also costs $0.2$ causal bits. The upfront message still has one raw bit of entropy: mutual information and message length are distinct resources.

\subsection{Uniform finite-error frontier}
For $q$ uniform, $\ell=n+1<M$, and $s\ge\ell/M$, choose a size-$\ell$ subset $L$ uniformly, and define the posterior of $A$ to be $s/\ell$ on $L$ and $(1-s)/(M-\ell)$ off $L$. Symmetry makes the marginal of $A$ uniform. The equivalent conditional channel chooses $L$ given the already sampled $A$. Then
\[
 I(A;L)=s\log_2\frac{s}{\ell/M}+(1-s)\log_2\frac{1-s}{1-\ell/M}
       =\mathrm{kl}(s\|\ell/M),
\]
and list coverage is $s$. Lemma~\ref{lem:selective} with $r=q$ proves the matching converse. For $s\le\ell/M$, no source is necessary. This reproduces the static list-Fano extremizer \citep{sakai2020} and establishes its optimality over the larger interleaved protocol class under the causal information budget.

\section{Zero-error price and the remaining exponential cost}\label{sec:zero}
Let $\mathcal A_n=\operatorname{conv}\{c_{\cdot L}:L\in\Ls\}$ be the coverage polytope.
\begin{corollary}[Exact zero-error transfer price]\label{cor:zero}
The minimum auxiliary information for zero error and $n$ target queries is
\begin{equation}\label{eq:zero}
 \R(q,1)=\min_{a\in\mathcal A_n}\sum_\theta q_\theta\log_2(1/a_\theta).
\end{equation}
For unique answers, let $\ell=\min(n+1,M)$. If $\ell<M$, choose $\tau>0$ so that
\begin{equation}\label{eq:clip}
 a_i^*=\min\{1,q_i/\tau\},\qquad \sum_i a_i^*=\ell.
 \quad\text{Then}\quad
 \R(q,1)=\sum_{q_i<\tau}q_i\log_2(\tau/q_i).
\end{equation}
If $\ell=M$, the price is zero.
\end{corollary}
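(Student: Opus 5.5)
The plan is to work directly from the definition \eqref{eq:RD} at $s=1$ and prove two matching bounds: a lower bound on the information of every zero-error channel, and a channel that attains it.

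\textbf{Lower bound.} At $s=1$, a feasible channel $W$ must put zero mass on lists with $c_{\theta L}=0$. Fix such a $W$, let $\mu=qW$ be its list marginal, and write $a_\theta=\sum_L\mu_Lc_{\theta L}$. This $a$ lies in $\mathcal A_n$ by construction. The decomposition \eqref{eq:decompose} at $\mu=qW$ gives $I(\Theta;L)=\sum_\theta q_\theta\KL(W(\cdot\mid\theta)\|\mu)$. Because $W(\cdot\mid\theta)$ is supported on the lists covering $\theta$, each divergence splits as the divergence to the conditioned law $\mu_Lc_{\theta L}/a_\theta$ plus $\log_2(1/a_\theta)$. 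Dropping the nonnegative first term gives $I\ge\sum_\theta q_\theta\log_2(1/a_\theta)\ge\min_{a\in\mathcal A_n}\sum_\theta q_\theta\log_2(1/a_\theta)$.

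\textbf{Achievability.} The objective $\mu\mapsto\sum_\theta q_\theta\ln a_\theta(\mu)$ is concave on the simplex. It is finite at the uniform mixture of all lists, since every world has an acceptable response, so a maximizer $\mu^*$ exists and has every $a_\theta>0$. Its KKT conditions give $g_L=\sum_\theta q_\theta c_{\theta L}/a_\theta\le1$, with equality on the support of $\mu^*$; the multiplier equals $1$ because $\sum_L\mu_Lg_L=1$. Define $W(L\mid\theta)=\mu^*_Lc_{\theta L}/a_\theta$. Then:
\begin{itemize}
\item the rows of $W$ sum to one;
\item $W$ has zero error, since it only selects covering lists;
\item its marginal is $\mu^*_Lg_L=\mu^*_L$, so the dropped divergence term vanishes and $I=\sum_\theta q_\theta\log_2(1/a_\theta)$.
\end{itemize}
This proves \eqref{eq:zero}.

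\textbf{Unique answers.} If $\ell=M$, the full list covers every world, so the price is zero. For $\ell=n+1<M$, the first step is to identify the coverage polytope. It is dominated by $P=\{a\in[0,1]^M:\sum_ia_i=\ell\}$, because shorter lists can be padded and the objective is decreasing in $a$. To show every point of $P$ is a mixture of $\ell$-subset indicators, take a point with a fractional coordinate. Since $\ell$ is an integer, there must be a second fractional coordinate, and the two can be perturbed in opposite directions while staying in $P$. Hence the extreme points are exactly the indicators, and $P$ is their convex hull.

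The second step is to minimize $-\sum_iq_i\log_2a_i$ over $P$. This is strictly convex, and the KKT conditions with the sum constraint and the caps $a_i\le1$ give $a_i=\min\{1,q_i/\tau\}$. The map $\tau\mapsto\sum_i\min\{1,q_i/\tau\}$ is continuous and decreases from $M$ to $0$, so a suitable $\tau$ exists. Saturated coordinates have $a_i=1$ and contribute $0$, while unsaturated ones contribute $q_i\log_2(\tau/q_i)$, which is \eqref{eq:clip}.

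The main obstacle I expect is the achievability step. The point that needs care is that the KKT conditions make $\mu^*$ self-consistent as the marginal of the constructed $W$; everything else follows from \eqref{eq:decompose} and routine convexity.
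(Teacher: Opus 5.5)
Your proposal is correct and follows the paper's proof essentially step for step. It uses the same split of $\KL(W(\cdot\mid\theta)\|\mu)$ at $\mu=qW$ for the lower bound, the same first-order-condition channel $W(L\mid\theta)=\mu^*_Lc_{\theta L}/a_\theta$ whose marginal reproduces $\mu^*$, and the same perturbation argument and clipped KKT solution for unique answers; the one point left implicit is that identifying $\R(q,1)$ with the operational minimum is Theorem~\ref{thm:exact} at $s=1$, which the paper likewise simply cites.
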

The static expression~\eqref{eq:zero} is a standard convex-corner/hypergraph-entropy representation \citep{csiszar1990}; the clipped formula is its elementary specialization. Theorem~\ref{thm:exact} makes it the \emph{exact interleaved verification price}. A direct proof and channel construction appear in Appendix~\ref{app:zero}.

For example, take $q=(0.8,0.1,0.1)$ and one query. Fixing $r=q$ in~\eqref{eq:selective} gives only $-\log_2(0.9)=0.1520$ bits for zero error. The exact price is $0.2$ bits: the optimum has inclusion probabilities $(1,1/2,1/2)$, and $r=(2/3,1/6,1/6)$ attains~\eqref{eq:dual}. Thus optimizing the comparison prior is substantively stronger than relabeling the prior-mode bound (Figure~\ref{fig:nonuniform}).

For unique answers ($\mathcal Y=[M]$, $f_\theta(y)=\ind\{\theta=y\}$), uniform $q$ and $n<M-1$, the complete frontier simplifies to the classical list-Fano value
\begin{equation}\label{eq:uniform}
 \R(q,s)=\mathrm{kl}_+\left(s\middle\|\frac{n+1}{M}\right),\qquad
 \R(q,1)=\log_2\frac{M}{n+1}.
\end{equation}
An upfront size-$(n+1)$ list with posterior uniform inside and outside the list attains every point \citep{sakai2020}. For unique answers and uniform $q$, zero error with $B$ optimally placed auxiliary bits is therefore possible exactly when $n+1\ge M2^{-B}$: each such bit halves the verifications needed. This does not count the calls needed to \emph{obtain} those bits, which \cref{thm:calls} prices. With overlapping acceptance sets, $r=q$ in~\eqref{eq:dual} gives only the lower bound $\mathrm{kl}_+(s\|V_n(q))$, and it can be strict even for uniform $q$. With $M=4$, $n=0$ and two responses accepted on $\{1,2,3\}$ and $\{4\}$, zero error requires learning $\ind\{\Theta=4\}$, so $\R(q,1)=h_2(1/4)=0.811$ bits, while $\mathrm{kl}(1\|3/4)=0.415$ bits. The bound is attained when the automorphism group of the acceptance structure acts transitively on worlds: averaging a maximum-coverage list over its orbit and taking the posterior uniform inside and outside its coverage set gives a channel with uniform marginal and information $\mathrm{kl}(s\|V_n(q))$.

\paragraph{Why a random-budget shortcut is invalid.}
Let $A$ be uniform on three answers, draw an independent uniform permutation, and stop querying that permutation at the first success. The count $N$ is uniform conditional on every $A$, hence $N\perp A$, yet accuracy is one and $\E V_N(q)=8/9$. Marginal independence is not an exogenous budget. The stopped-event version of Theorem~\ref{thm:exact} avoids this conditioning error.

\section{Information-frontier illustrations}\label{app:illustrations}
\begin{figure}[!h]
\centering\includegraphics[width=.72\linewidth]{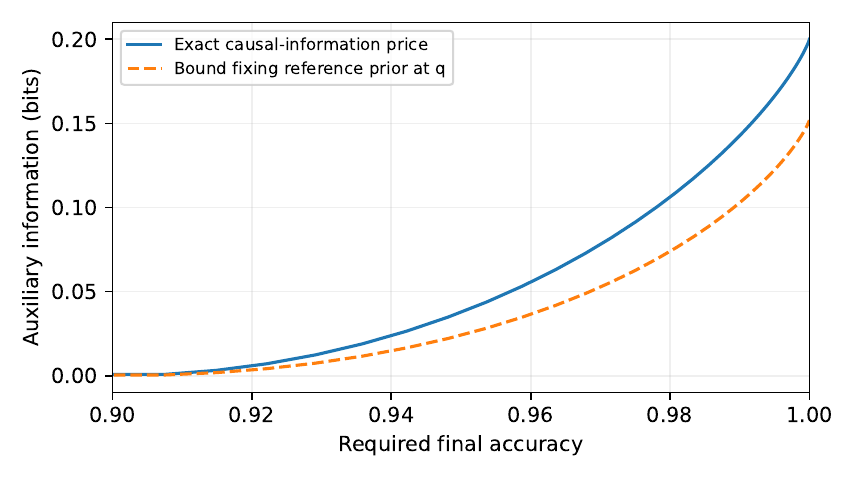}
\caption{\textbf{The optimized bound is exact beyond uniform priors.} Unique answers, $q=(0.8,0.1,0.1)$, one target query. Curves are certified finite-alphabet optimization. At zero error, the exact price is $0.2$ bits, versus $0.1520$ for the unoptimized reference.}\label{fig:nonuniform}
\end{figure}

\begin{figure}[!h]
\centering\includegraphics[width=.72\linewidth]{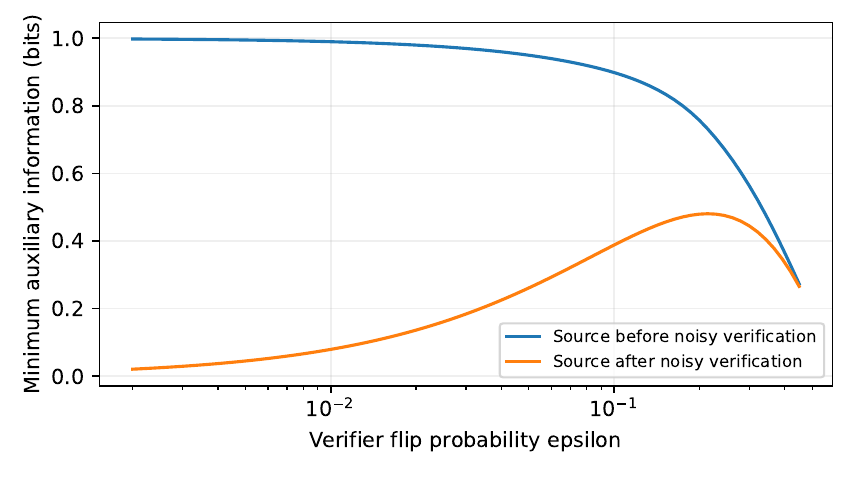}
\caption{\textbf{The upfront reduction fails for noisy feedback.} Uniform binary target, one noisy query, error tolerance $\delta=\epsilon^2$. Both curves are exact optimal information costs. At $\epsilon=0.01$, the costs are $\NoiseUp$ bits upfront and $\NoiseAfter$ bits with interleaving.}\label{fig:noise}
\end{figure}
\section{Source-first suffices for fixed sources}\label{app:fixed}
This section concerns hard call counts rather than the causal information budget. Let the finite source interaction mechanism return an outcome with kernel $K_j(z\mid\theta,h_{\rm src})$, where $h_{\rm src}$ records source selections and replies. It may restrict availability based on that source history (for example, one-use sources). It cannot change its response or availability merely because a target query was made. A latent variable shared by several sources can be included in $\Theta$. Each source transition can be generated using an independent uniform random variable and its specified kernel. The learner's random seed is also independent of the world. Condition on this complete collection of random variables when comparing paths; integrate afterward.

First modify any candidate protocol to enforce its hard caps at \emph{all} histories, not just histories of positive true probability: whenever its next action would exceed a cap, stop and choose an arbitrary fallback. This changes nothing almost surely under the true law. There is no advantage to continuing after a positive target reply, but the argument below does not require the original learner to stop there.

Construct an upfront policy by running a virtual copy of the modified learner. At a source request, request that same source from the actual source mechanism and pass the result to the virtual learner. At a virtual target request, record the candidate response and pass a zero to the virtual learner without making a real target query. At a virtual stop, record its output as the fallback. The virtual source requests obey the source mechanism's availability restrictions because its source-only history agrees with the one actually realized. The virtual execution respects the globally enforced caps. Its transcript $Z$ determines a list of $t$ virtual target candidates and a fallback, where $t\le n$. After source acquisition, query these $t$ candidates in order, stopping at the first positive reply and otherwise outputting the fallback. Repeated candidates may be omitted.

Couple an original and a transformed execution in the same world using the same independent seeds at their matching source histories. Before the original's first positive target reply, all actual target replies equal the virtual zeros, so actions, source choices, and source replies coincide. If a first positive occurs, its candidate belongs to the virtual list, including when the continuation beyond that point is off the original support. Therefore the transformed protocol succeeds. If the original sees no positive reply but has a correct final fallback, its entire history agrees with the virtual execution and that fallback is included. Thus every original successful path is transformed into a successful path under this coupling, up to null events. The source cap and target cap hold pathwise for the transformed policy.

For a hard total cap $Q+N\le C$, enforce that single cap on the virtual learner. If its virtual path used $m$ source requests and $t$ target requests, then $m+t\le C$. The transformed protocol performs the same $m$ sources and at most $t$ actual target queries. Its total is at most $C$. The same coupling proves success domination. This statement would be false for a constraint only on expected call counts: the transformation can acquire sources on worlds where the original stops after an early target success.

Condition on the upfront transcript, including the acquisition policy's independent seed. With at most $n$ target queries, Proposition~\ref{prop:coverage} gives the optimum $V_n(q(\cdot\mid Z))$. Any source-only policy can be followed by an optimal posterior list. This gives both inequalities in~\eqref{eq:fixedvalue} and proves Theorem~\ref{thm:fixed}. The source policy may select later sources based on earlier source outcomes; a source-first protocol need not use a fixed subset.

\paragraph{A source-adaptivity boundary.}
Let the four equally likely worlds be $0,1,2,3$, with the world itself the target answer. Three binary source masks are $\{2,3\}$, $\{1\}$, and $\{3\}$. Query the first mask and, according to its answer, query the second or third. Two source calls identify the world. Any fixed two-source subset has only three nonempty outcome cells, so its best final accuracy without target queries is $3/4$. This example is not a linear uniform source bank in the sense of Theorem~\ref{thm:rank}, and shows why that theorem needs its assumptions.

\paragraph{An expected-cost and information-cost boundary.}
Let the target have prior $(1-2p,p,p)$ with $0<p<1/2$, and let the only source be $Z=\ind\{A=3\}$. Query target answer one first. On success stop; on failure query the source and distinguish answers two and three. Zero error uses one target query, at most one source query, and only $2p$ source queries in expectation. An upfront protocol achieving zero error with one target query must always acquire the source: otherwise its single query and fallback leave one positive-prior answer uncovered. Its expected source count is one. The original causal source information is $2p$ bits, whereas the upfront source information is $h_2(p)$. Thus Theorem~\ref{thm:fixed} preserves hard caps and success, not either of these other currencies. This also distinguishes it from the freely designed information-preserving optimum in Theorem~\ref{thm:exact}.

\section{Exact transfer rank and total-query optimization}\label{app:rank}
All ranks and affine spaces in this section are over $\mathbb F_2$. Let $T$ have rank $k$, $W$ be uniform on $\mathbb F_2^d$, and $B$ stack a chosen set of source rows. Each attainable source value $z=BW$ gives a fiber $w_0+\ker B$, and the posterior of $W$ is uniform on that fiber. Its target image is $Tw_0+T(\ker B)$. Every element of this image has the same number of preimages. Moreover,
\begin{align*}
 \dim T(\ker B)
 &=\dim\ker B-\dim(\ker B\cap\ker T)\\
 &=(d-\operatorname{rank}B)-\left(d-\operatorname{rank}\begin{bmatrix}T\\B\end{bmatrix}\right)
 =k-\rho(S).
\end{align*}
This proves the claimed uniform target support of size $2^{k-\rho(S)}$, and also the familiar identity $I(A;BW)=\rho(S)$ for a fixed source set. Counting uniform linear fibers is elementary; we do not claim the entropy--rank correspondence as a new result.

For an adaptive source-only policy, fix its independent seed. On any leaf let $S$ be the queried rows and $z$ their replies in order. The event of reaching that leaf is exactly the affine system $B_SW=z$. In one direction the event imposes those equations. Conversely, any world satisfying the equations supplies the same sequential replies and therefore follows the same chosen sources and stopping decisions. There is no additional hidden restriction caused by the policy's adaptivity. Repeated rows can be deleted without changing the fiber.

For rectangular caps, Theorem~\ref{thm:fixed} reduces any interleaved policy to such leaves followed by at most $n$ target queries. At a leaf the local value is
\[
 \min\{1,(n+1)2^{-k+\rho(S)}\}
 \le \min\{1,(n+1)2^{-k+\rho_b}\}.
\]
Average over leaves and seeds to obtain the upper bound in~\eqref{eq:rankrect}. Choose a maximizing fixed subset $S$, solve its linear system, and enumerate any $n+1$ distinct points of the target affine image (or all points if fewer remain). The first $n$ serve as verification candidates and the last as fallback. This achieves the upper bound. Gaussian elimination implements decoding, although finding the maximizing subset can be combinatorial.

For a total cap, apply the total-cap transformation in Appendix~\ref{app:fixed}. If a leaf spent $m$ source calls, at most $C-m$ target queries remain, giving an upper bound
\[
 \min\{1,(C-m+1)2^{-k+\rho(S)}\}
 \le \max_{0\le b\le\min(C,J)}\min\{1,(C-b+1)2^{-k+\rho_b}\}.
\]
If more than $J$ source requests occurred, delete repeats first: a repeated deterministic source is already known and cannot help. For attainment choose a maximizing budget $b$ and a fixed subset with at most $b$ rows attaining $\rho_b$, then use at most $C-b$ target queries. Unused source capacity need not be spent. This proves~\eqref{eq:ranktotal}, including policies whose actual source/target allocation depends on the observed values.
\section{Calls and information}\label{app:calls}
An \emph{auxiliary call} is one auxiliary observation, and $Q$ counts them. Sources are \emph{binary} when every auxiliary alphabet has two letters, as in the linear banks of Section~\ref{sec:planner}. This appendix proves Theorem~\ref{thm:calls}, Propositions~\ref{prop:general} and~\ref{prop:interleave}, Corollary~\ref{cor:waste} and Proposition~\ref{prop:noisy}; \texttt{theory\_checks/check\_bridge.py} and \texttt{theory\_checks/exact\_calls.py} check the finite claims numerically.

\subsection{Proof of Theorem~\ref{thm:calls}}
\emph{(i) Converse.} Let verification be exact; \cref{prop:noisy} extends the bound to verifiers that garble exact replies. At an auxiliary node with history $\mathfrak h$ and request $c$, the term charged in~\eqref{eq:budget} is the conditional mutual information $I(\Theta;Z_t\mid H_{t-1}=\mathfrak h,C_t=c)\le H(Z_t\mid H_{t-1}=\mathfrak h,C_t=c)\le\log_2|\mathcal Z_{t}|$. Weighting by the probability of reaching the node and summing gives $\aux\le\E\sum_{t:\,C_t\text{ auxiliary}}\log_2|\mathcal Z_t|$, which is $\E Q$ for binary sources. Theorem~\ref{thm:exact} then gives $\E Q\ge\aux\ge\R(q,s)$ under $N\le n$ almost surely and $\Pp(\acc=1)\ge s$, and its stopped form gives $\E Q\ge\R(q,\Pp(\acc=1,N\le n))$ for an arbitrary stopping rule. For a noisy verifier, \cref{prop:noisy}(i) bounds $\aux$ below by $\sup_r\{\mathrm{kl}_+(s\|V^G_n(r))-\KL(q\|r)\}$, and $V^G_n\le V_n$ because a learner facing an exact verifier can simulate the noisy one by garbling its replies, so this supremum is at least $\R(q,s)$. Nothing else in the argument uses exact verification, fixed kernels, or source-first order.

\emph{(ii) Zero error with unique answers.} If $\ell=n+1\ge M$, no call is needed. Otherwise, by \cref{cor:zero}, $\R(q,1)=\sum_\theta q_\theta\log_2(1/a_\theta)$ with $a_\theta=\min\{1,q_\theta/\tau\}$ and $\sum_\theta a_\theta=\ell$. Set $L_\theta=\lceil\log_2(1/a_\theta)\rceil+1\ge1$, so that $2^{-L_\theta}\le a_\theta/2$ and
\[
 \textstyle\sum_\theta2^{-L_\theta}\le\ell/2 .
\]
Order the worlds by increasing $L_\theta$ and cut the order into consecutive blocks $B_1,B_2,\dots$ of $\ell$ worlds each, except possibly the last. Give block $B_i$ the depth $D_i=\min_{\theta\in B_i}L_\theta$. For $i\ge2$, every world of the full block $B_{i-1}$ has $L_\theta\le D_i$, so $2^{-D_i}\le\ell^{-1}\sum_{\theta\in B_{i-1}}2^{-L_\theta}$. Hence
\[
 \textstyle\sum_i2^{-D_i}\le2^{-D_1}+\ell^{-1}\sum_\theta2^{-L_\theta}\le\tfrac12+\tfrac12=1,
\]
and Kraft's inequality gives a binary prefix-free code with codeword length $D_i$ for block $B_i$. The $j$th source returns the $j$th bit of the codeword of $\theta$'s block, a deterministic function of $\theta$ and $j$. The learner reads bits until the codeword is complete, verifies $\ell-1$ worlds of the block, and outputs the last if none is accepted, so $N\le n$ and the error is zero. The number of calls is at most $\max_iD_i$, and
\[
 \textstyle\E Q=\sum_\theta q_\theta D_{i(\theta)}\le\sum_\theta q_\theta L_\theta<\sum_\theta q_\theta\big(\log_2(1/a_\theta)+2\big)=\R(q,1)+2 .
\]
The construction's constant is approached (uniform $q$ with $M$ just above $\ell2^j$), but the optimum's may be smaller. Exact dynamic programming over candidate sets on 200 random priors with $M\le8$ (\texttt{theory\_checks/exact\_calls.py}) never finds a gap above one, and a gap of one is approached with $n=0$ and $q$ near a point mass, where one call is unavoidable while $\R(q,1)\to0$.

\emph{(ii) Every success level with unique answers.} If $s\le V_n(q)$, verifying the $n$ most probable answers and outputting the next succeeds with probability $V_n(q)\ge s$ without calls. Let $s>V_n(q)$. The function $\R(q,\cdot)$ is convex and nondecreasing, and it vanishes exactly on $[0,V_n(q)]$, because zero information means a list independent of $\Theta$. It is therefore strictly increasing on $[V_n(q),1]$, so an optimal channel $W$ for $\R(q,s)$ covers with probability exactly $s$. Let $C=\ind\{\Theta\in L\}$, $c_\theta=\Pp(C=1\given\Theta=\theta)$ and $q_{1,\theta}=q_\theta c_\theta/s$, the law of $\Theta$ given $C=1$, restricted to its support. Since $C$ is a function of $(\Theta,L)$, $H(\Theta\given L)=H(C\given L)+H(\Theta\given L,C)\le h_2(s)+H(\Theta\given L,C)$, so
\[
 I(\Theta;L)\ge H(\Theta\given C)-h_2(s)-H(\Theta\given L,C)=I(\Theta;L\given C)-h_2(s)\ge s\,I(\Theta;L\given C=1)-h_2(s).
\]
Given $C=1$, $\Theta\sim q_1$ and $\Theta\in L$ almost surely, so $I(\Theta;L\given C=1)\ge\mathcal R_n(q_1,1)$. This lower bound holds for every channel with coverage $s$, not only optimal ones. For achievability, the first source reports $C'\sim\mathrm{Bernoulli}(c_\Theta)$, a stochastic kernel of the world alone. Given $C'=1$ the posterior is exactly $q_1$, and the learner runs the zero-error protocol above for $q_1$, which needs no calls if $q_1$ has at most $n+1$ atoms. Given $C'=0$ it outputs any response. Success is at least $s$, $N\le n$, and
\[
 \E Q<1+s\big(\mathcal R_n(q_1,1)+2\big)\le1+\R(q,s)+h_2(s)+2s\le\R(q,s)+1+\log_2 5,
\]
since $h_2(s)+2s$ is maximized at $s=4/5$ with value $\log_25$. No randomness is shared: the only randomization is inside the first source. On 750 near-optimal Blahut--Arimoto channels the construction is within $3.1$ calls of $I(W)$.

\subsection{Proof of Proposition~\ref{prop:general}}
Let $G$ be a graph on $[M]$, let the responses be the independent sets of $G$ with $f_\theta(I)=\ind\{\theta\in I\}$, and let $n=0$. A zero-error list is then a single independent set containing $\Theta$, so $\mathcal R_0(q,1)$ is Körner's graph entropy $H_{\mathrm K}(G,q)$ \citep{csiszar1990,alon1996source}. Represent every stochastic kernel and the learner's seed by independent uniform variables $U$. For each fixed $U$ the protocol is a binary decision tree whose leaves output independent sets, and the worlds reaching a leaf lie in its output. The leaves therefore colour $G$, and the expected depth is at least the entropy of that colouring, hence at least the chromatic entropy $H_\chi(G,q)$. Averaging over $U$ gives $\E Q\ge H_\chi(G,q)$. \citet[Proposition~2]{cardinal2008minimum} show that $H_\chi-H_{\mathrm K}$ is unbounded even for chordal graphs with uniform $q$. The gap persists for every $n\ge1$. Let the world be $(v,w)$ with $v\sim q$ on the vertices of $G$ and an independent index $w$ uniform on $[n+1]$, let the responses be pairs $(I,j)$ of an independent set and an index, and let $f_{(v,w)}(I,j)=\ind\{v\in I,\ j=w\}$. The list $\{(I,1),\dots,(I,n+1)\}$ with $I$ drawn from an optimal channel for $H_{\mathrm K}(G,q)$ covers every world, so $\R\le H_{\mathrm K}(G,q)$. Conversely, fix $U$ and $w$. The transcript of binary source replies and verification replies is then a prefix-free function of $v$, of expected length $\E[Q+N\mid w,U]$, and it determines the output $\widehat y$. At zero error $\widehat y=(I_v,w)$ with $v\in I_v$, so $v\mapsto I_v$ colours $G$ and $\E[Q+N\mid w,U]\ge H_\chi(G,q)$. Since $N\le n$, averaging gives $\E Q\ge H_\chi(G,q)-n$, and $\E Q-\R\ge H_\chi(G,q)-H_{\mathrm K}(G,q)-n$, which is unbounded for every fixed $n$. For the upper bound, take general acceptance sets and any $n$, and let $L$ be distributed by an optimal channel for $\R(q,s)$, which covers with probability at least $s$. The strong functional representation lemma \citep[Theorem~1]{li2018strong} gives a random variable $V$ independent of $\Theta$, taking at most $M(|\Ls|-1)+2$ values, and a function $g$ with $L=g(\Theta,V)$ and $I(\Theta;V\mid L)\le\log_2(I(\Theta;L)+1)+4$. Since $V$ is independent of $\Theta$ and $L$ is a function of $(\Theta,V)$,
\[
 H(L\mid V)=I(\Theta;L\mid V)=I(\Theta;L)+I(\Theta;V\mid L)\le I(\Theta;L)+\log_2\big(I(\Theta;L)+1\big)+4 .
\]
For each value $v$, the map $g_v=g(\cdot,v)$ is a deterministic list map with entropy $H_v=H(g_v(\Theta))$ and coverage $\sigma_v=\Pp(\Theta\text{ is covered by }g_v(\Theta))$, where $\Theta\sim q$. Averaging over $V$ gives $\sum_vP_V(v)H_v=H(L\mid V)$ and $\sum_vP_V(v)\sigma_v=\E\,c_{\Theta L}\ge s$. Minimizing $\sum_vp_vH_v$ over distributions $p$ subject to $\sum_vp_v\sigma_v\ge s$ is a linear program with one constraint besides normalization, so an optimal $p$ is supported on at most two values $v_1,v_2$, and its objective is at most $H(L\mid V)$. At $s=1$ every $g_v$ with $P_V(v)>0$ covers every world, so a single value suffices. The learner flips one coin with the optimal bias to choose $v\in\{v_1,v_2\}$ and fixes a binary Huffman code for the law of $g_v(\Theta)$, whose expected length is at most $H_v+1$ and whose image has at most $M$ lists, so no codeword is longer than $M-1$ bits. A request is a pair $(v,j)$, and the source returns the $j$th bit of the codeword of $g_v(\theta)$, a deterministic function of the world and the request that never depends on target history. Carrying $v$ in the request is the only randomness shared with the sources. The learner requests bits until the codeword is complete, decodes the list, verifies all but its last entry in turn, and outputs the last entry if none is accepted. Then $N\le n$, $\Pp(\acc=1)\ge s$, at most $M-1$ calls are made on every path, and
\[
 \E Q\le\textstyle\sum_vp_v(H_v+1)\le H(L\mid V)+1\le\R(q,s)+\log_2(\R(q,s)+1)+5 .
\]
This is one-shot channel simulation \citep{harsha2010communication,li2018strong}, derandomized because only the coverage constraint couples the values of $V$. At $n=0$ and $s=1$ in the graph setting, each $g_v$ assigns every world an independent set containing it, so its classes colour $G$, and for the best value $v^\star$
\[
 H_\chi(G,q)\le H\big(g_{v^\star}(\Theta)\big)\le H(L\mid V)\le H_{\mathrm K}(G,q)+\log_2\big(H_{\mathrm K}(G,q)+1\big)+4
\]
for every graph and prior. The gap in the lower bound is therefore at most logarithmic in $H_{\mathrm K}$. On the chordal graphs of \citet[Lemmas~5 and~6]{cardinal2008minimum} it is at least $\log_2k-o(1)$, where $k$ is the depth of the construction and $o(1)$ vanishes as the clique sizes grow with $k$ fixed, while $H_{\mathrm K}$ grows as well. The logarithmic order is attained. In the tightness proof of \citet{cardinal2010efficient}, the comparability graph of the interval order $I_k$ has $M=k2^{k-1}$ vertices, and under a uniform prior its chromatic entropy exceeds its graph entropy by at least $\log_2k-\log_2e-1$. This is $\Omega(\log\log M)$, and hence $\Omega(\log H_{\mathrm K})$ because $H_{\mathrm K}\le\log_2M$ \citep[see also][]{cardinal2012minimum}.

\subsection{Proof of Proposition~\ref{prop:interleave}}
Take unique answers, $q=(1-2p,p,p)$ with $0<p\le1/4$, $n=1$ and $s=1$. By \cref{cor:zero} (list water-filling with inclusion probabilities $(1,\tfrac12,\tfrac12)$), $\R(q,1)=2p$. The interleaved protocol verifies answer~1 and stops if it is accepted. Otherwise it makes one binary call that distinguishes answers~2 and~3 and outputs the result without a second verification. It has zero error, $N\le1$ and $\E Q=\Pp(\Theta\ne1)=2p$, so it meets the converse of Theorem~\ref{thm:calls}(i) with equality. In a source-first protocol, the decision to make a first call depends only on the private seed, so the event that no call precedes the first verification is independent of $\Theta$. On that event the learner's candidate list is a function of the seed, has at most two elements, and misses a world of positive prior. Zero error therefore forces a call on almost every seed, so $\E Q\ge1$, and the ratio $1/(2p)$ is unbounded. The saving over the best \emph{designed} source-first protocol is bounded by Theorem~\ref{thm:calls}. Against a \emph{given} bank it is not. At zero error, take $q=(1-\epsilon,\epsilon/K,\dots,\epsilon/K)$ on $K+1$ worlds with $K\ge4$ a power of two, $n=1$, and sources that return the $\log_2K$ bits of an index in which world~1 shares the index of world~2. Verifying world~1 first and then reading the index costs $\epsilon\log_2K$ expected calls. A source-first protocol must separate world~1 from every world except world~2 before its single verification, which takes all $\log_2K$ bits whenever $\Theta=1$, so it costs at least $(1-\epsilon)\log_2K$.

\subsection{Proof of Corollary~\ref{cor:waste}}
\emph{Hard caps.} By Theorem~\ref{thm:rank}, $\mathsf A_{b,n}=1$ exactly when $(n+1)2^{-k+\rho_b}\ge1$, that is, when $\rho_b\ge k-\log_2(n+1)$; since $\rho_b$ is an integer this is $\rho_b\ge\kappa$. By Theorem~\ref{thm:qsp}, $\rho_b\ge\kappa$ holds exactly when some subspace $U$ generated by the $\bar v_j$ has $t(U)\ge\kappa$ and $b-\dim U\ge\kappa$; if no $U$ has $t(U)\ge\kappa$ then $\rho_J<\kappa$ and no cap suffices. The least such $b$ is $\kappa+u(\kappa)$. For the uniform prior on $2^k$ unique answers, \eqref{eq:uniform} gives $\R(q,1)=k-\log_2(n+1)$, so $b_n\ge\lceil\R(q,1)\rceil$ with equality exactly when $u(\kappa)=0$. Because $\rho_{b+1}\le\rho_b+1$, the least budget with $\rho_b\ge\kappa$ has $\rho_{b_n}=\kappa$, so the excess $b_n-\kappa=u(\kappa)$ equals $b_n-\rho_{b_n}$, the calls of an optimal subset that land on nuisance directions.

\emph{Source-first expected calls.} In a source-first protocol, the posterior of the target on every source-only leaf is uniform on $2^{k-\rho(S)}$ vectors, where $S$ is the set of sources called on that path (\cref{app:rank}). Zero error with $n$ verifications requires $2^{k-\rho(S)}\le n+1$, so $\rho(S)\ge\kappa$ and hence $|S|\ge b_n$ on every path; a fixed optimal subset attains $b_n$ on every path.

\emph{Interleaving can use fewer expected calls.} Take $k=2$, $h=0$, two sources that return $a_0$ and $a_1$, and $n=2$, so $\kappa=1$ and $b_n=1$. Verify $00$, then $01$; if both are rejected, the remaining candidates $10$ and $11$ differ only in $a_1$, so call the source for $a_1$ and output the one it indicates. The error is zero and $\E Q=\Pp(\text{both rejected})=1/2<b_n$, while $\R(q,1)=2-\log_23=0.415\le\E Q$, as \cref{thm:calls}(i) requires.

On 300 random banks with $k\le5$, $h\le4$ and $J\le12$, the identity $b_n=\kappa+u(\kappa)$ agrees with exhaustive search over all source subsets for all 940 pairs (bank, $\kappa$) with $1\le\kappa\le k$ (and trivially for $\kappa=0$).

\subsection{Proof of Proposition~\ref{prop:noisy}}
\emph{(i)} Repeat the construction of Appendix~\ref{app:selective} with the target kernel $G$ in place of exact replies: $\widetilde P$ has prior $r$, the same learner, the same target kernel, and auxiliary kernels $\overline K_t$. Given $\theta$, a target transition has the same probability $G(\cdot\mid f_\theta(y))$ under both laws, so it cancels in~\eqref{eq:ratioapp}, and~\eqref{eq:pathKL} holds unchanged. Under $\widetilde P$ every auxiliary observation can be simulated from the history and independent randomness, so the protocol is a target-only protocol under prior $r$ and $\widetilde P(\acc=1,N\le n)\le V^G_n(r)$. Data processing to the event $\{\acc=1,N\le n\}$ completes the proof as in Lemma~\ref{lem:selective}.

\emph{(i), second form.} Include the private seed $\xi$ in the initial history; $I(\Theta;\xi)=0$. By the chain rule, $I(\Theta;\xi,H_T)$ is the sum over steps of the conditional information of each observation given the past. Actions are functions of the past and add nothing. Auxiliary observations contribute exactly $\aux$. A verification reply depends on $\Theta$ only through the binary input $f_\Theta(y_t)$ to $G$, so it contributes at most $\operatorname{cap}(G)$, and there are at most $n$ of them. The output $\widehat y$ is a function of $(\xi,H_T)$ and is acceptable with probability at least $s$, so $I(\Theta;\widehat y)\ge\mathcal R_0(q,s)$ by the definition of the rate--distortion function with single-response lists. Hence $\mathcal R_0(q,s)\le\aux+n\,\operatorname{cap}(G)$. On the instance of \cref{thm:noise}, $\mathcal R_0(q,1-\delta)=1-h_2(\delta)$ for the uniform binary answer and $\operatorname{cap}(G)=1-h_2(\epsilon)$ for the binary symmetric verifier, so with $n=1$ the bound is $h_2(\epsilon)-h_2(\delta)$, the exact interleaved optimum \eqref{eq:noisyafter}.

\emph{(ii)} The information statement is Theorem~\ref{thm:noise}. For counted calls, let $A$ be uniform on $\{0,1\}$ and let one noisy equality query return $\ind\{A=y\}\oplus E$ with $E\sim\mathrm{Bernoulli}(\epsilon)$. Two fixed sources are available; source $c_r$ reports $\ind\{A\ne r\}$ through a Z-channel that misses a true $1$ with probability $\eta$ and never reports a false $1$. Both kernels depend on $A$ only.

\emph{Interleaved.} Query $y=0$ and let $g$ be the answer the reply favours, so $\Pp(g\ne A)=\epsilon$. Call $c_g$ and output $1-g$ if it reports $1$, else $g$. An error needs $g\ne A$ and a miss, so it has probability $\epsilon\eta$.

\emph{Source first.} A source-first protocol calls $c_0$, calls $c_1$, or calls nothing, possibly at random, and a random choice has the average error of the three. Calling nothing leaves error $\epsilon$. By symmetry consider $c_0$. A report of $1$ reveals $A=1$. After a report of $0$, which has probability $(1+\eta)/2$, the posterior of the less likely answer is $\eta/(1+\eta)$, and the Bayes error after one noisy reply from posterior minority mass $v$ is $\min\{v,\epsilon\}$ (Appendix~\ref{app:noise}). The error is therefore $\tfrac{1+\eta}2\min\{\tfrac\eta{1+\eta},\epsilon\}=\min\{\eta/2,\ \epsilon(1+\eta)/2\}$, which is also below the error $\epsilon$ of using no source, so it is the least error of any source-first protocol. Exhaustive search over all policies with at most one call and one query reproduces both values. At $\epsilon=\eta$ the source-first error is $\eta/2$, so the ratio of the two errors is $1/(2\eta)$, unbounded as $\eta\to0$. The only thing the interleaved protocol uses is the choice of source after seeing the target reply, which is exactly what source-first order forbids. The separation is in error at equal caps, not in the number of calls: calling both sources first and then verifying also reaches error $\epsilon\eta$, at the cost of one extra call.
\section{Noisy verification makes information order matter}\label{sec:noise}
Theorem~\ref{thm:exact} needs exact verification. Take $A\sim\mathrm{Bernoulli}(1/2)$ and at most one equality query, whose reply is independently flipped with probability $\epsilon\in(0,1/2)$. The output is still required to equal the true $A$. Let $h_2$ be the binary entropy function. Compare arbitrary source/target interleaving with the restriction that every auxiliary observation precedes the noisy query; both charge conditional information exactly as in~\eqref{eq:budget}.

\begin{theorem}[Exact noisy ordering separation]\label{thm:noise}
For error probability at most $\delta\in[0,\epsilon]$, the minimum interleaved auxiliary information is
\begin{equation}\label{eq:noisyafter}
 B_{\mathrm{inter}}(\epsilon,\delta)=h_2(\epsilon)-h_2(\delta).
\end{equation}
Let $v_\epsilon\in(0,\epsilon)$ be the unique solution of
$h_2(v)+(\epsilon-v)h_2'(v)=1$. If all auxiliary observations must precede the query, the minimum is
\begin{equation}\label{eq:noisybefore}
 B_{\mathrm{upfront}}(\epsilon,\delta)=
 \begin{cases}
 1-h_2(\delta),&0\le\delta\le v_\epsilon,\\
 (\epsilon-\delta)h_2'(v_\epsilon),&v_\epsilon\le\delta\le\epsilon.
 \end{cases}
\end{equation}
In particular, zero error costs $h_2(\epsilon)$ versus one bit. With the strictly positive error tolerance $\delta=\epsilon^2$, the interleaved cost tends to zero and the upfront cost tends to one as $\epsilon\to0$.
\end{theorem}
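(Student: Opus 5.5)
The proof has two halves: the interleaved value, then the upfront value. In both, the lower bound comes from counting information and the upper bound from an explicit protocol.

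\emph{Interleaved lower bound.} I use the second form of \cref{prop:noisy}(i) with $n=1$. The output must equal $A$ with probability at least $1-\delta$, so $I(A;\widehat y)\ge 1-h_2(\delta)$ by Fano. The noisy reply carries at most $\operatorname{cap}(G)=1-h_2(\epsilon)$ bits. Hence $\aux\ge h_2(\epsilon)-h_2(\delta)$.

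\emph{Interleaved achievability.} Query $y=0$ first. The posterior of $A$ now puts mass $\epsilon$ on the answer the reply disfavours. One auxiliary observation must then reduce the error from $\epsilon$ to $\delta$. This is a rate--distortion problem for a Bernoulli$(\epsilon)$-type binary source (the indicator that the favoured answer is wrong) under Hamming distortion $\delta$, whose cost is $h_2(\epsilon)-h_2(\delta)$ for $\delta\le\epsilon$. I realise it with the standard backward test channel: reveal $\hat E$ with $E=\hat E\oplus N$, where $N\sim\mathrm{Bernoulli}(\delta)$. By \eqref{eq:budget}, the charge is the conditional mutual information given the history, which is exactly this rate.

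\emph{Upfront.} An upfront protocol first sends an observation $Z$, leaving posterior minority mass $v=v(Z)$. It then makes one noisy query, and as recorded in \cref{app:calls} the Bayes error after one noisy reply is $\min\{v,\epsilon\}$. First I would verify that: querying the posterior mode and taking the MAP decision gives error $\min\{v,\epsilon\}$, and no query can do better. The information is $1-\E\,h_2(v(Z))$, where $\E$ is over posterior laws with $\E$ of the posterior equal to $1/2$. By symmetry, any distribution of $v\in[0,1/2]$ is achievable.

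So $B_{\mathrm{upfront}}=\min\{1-\E h_2(v):\E\min(v,\epsilon)\le\delta\}$. Equivalently, this is $1$ minus the concave envelope, evaluated at $\delta$, of the curve $\{(\min(v,\epsilon),h_2(v))\}$. The main obstacle is computing this envelope. For $v\ge\epsilon$ the error stays at $\epsilon$ while $h_2$ can be as large as $1$ (take $v=1/2$), so the relevant points are $(x,h_2(x))$ for $x\le\epsilon$ together with $(\epsilon,1)$. The upper concave hull then follows $h_2$ up to the tangency point $v_\epsilon$, and is linear from there to $(\epsilon,1)$. The tangent line from $(\epsilon,1)$ touches at $v_\epsilon$ exactly when $h_2(v)+(\epsilon-v)h_2'(v)=1$. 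Uniqueness of $v_\epsilon$ follows from strict concavity of $h_2$: the map $v\mapsto h_2(v)+(\epsilon-v)h_2'(v)$ has derivative $(\epsilon-v)h_2''(v)<0$ on $(0,\epsilon)$, runs from $+\infty$ down to $h_2(\epsilon)<1$, and so crosses $1$ once.

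The hull gives the two branches of \eqref{eq:noisybefore}. For $\delta\le v_\epsilon$ the value is $1-h_2(\delta)$, attained by a deterministic posterior. For $\delta\ge v_\epsilon$ the value is $1-h_2(v_\epsilon)-(\delta-v_\epsilon)h_2'(v_\epsilon)$, which simplifies to $(\epsilon-\delta)h_2'(v_\epsilon)$ using the tangency equation. This is attained by mixing the posteriors $v_\epsilon$ and $1/2$; the second reveals nothing, but I record it with error $\epsilon$. Carathéodory shows that two-point mixtures suffice.

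\emph{Limits.} At $\delta=0$ the two values are $h_2(\epsilon)$ and $1$. For $\delta=\epsilon^2$, the interleaved cost $h_2(\epsilon)-h_2(\epsilon^2)\to0$. For the upfront cost, a short asymptotic check shows $v_\epsilon\sim\epsilon/\log(1/\epsilon)\gg\epsilon^2$, so $\delta<v_\epsilon$ eventually and the cost $1-h_2(\epsilon^2)\to1$.
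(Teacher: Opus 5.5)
\textbf{Where the proof breaks.} Up to the limits, your proof is the paper's proof. The interleaved converse is the chain rule with binary Fano, and achievability is the backward binary test channel on the residual flip after querying first. The upfront value is the concave envelope of $\{(v,h_2(v)):v\le\epsilon\}\cup\{(\epsilon,1)\}$, with the same tangency function and the same two-point mixtures.

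The gap is in the final step. The asymptotic $v_\epsilon\sim\epsilon/\log(1/\epsilon)$ is false. With $h_2'(v)=\log_2((1-v)/v)$, the tangency equation becomes $-\epsilon\log_2 v_\epsilon-(1-\epsilon)\log_2(1-v_\epsilon)=1$. The second term lies in $[0,\log_2(1/(1-\epsilon))]$, so $\log_2(1/v_\epsilon)=(1-O(\epsilon))/\epsilon$, and $v_\epsilon$ is of order $2^{-1/\epsilon}$.

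Equivalently, let $\Gamma(v)=h_2(v)+(\epsilon-v)h_2'(v)$. Then $\Gamma(\epsilon^2)=2\epsilon\log_2(1/\epsilon)+O(\epsilon^2)<1$ for small $\epsilon$, and $\Gamma$ is decreasing, so $v_\epsilon<\epsilon^2$. For all small $\epsilon$ the tolerance $\delta=\epsilon^2$ therefore lies on the \emph{linear} branch, the opposite of what you assert. The upfront cost is not $1-h_2(\epsilon^2)$: at $\epsilon=0.01$ that would give about $0.9985$ bits, while the true value is $0.9900$ (\cref{fig:noise}).

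\textbf{How to repair it.} The limit itself is still true, and the fix is short. Substituting the tangency equation into $h_2'(v_\epsilon)$ gives the identity $\epsilon\,h_2'(v_\epsilon)=1+\log_2(1-v_\epsilon)$. On the linear branch this yields $B_{\mathrm{upfront}}(\epsilon,\delta)=(1-\delta/\epsilon)(1+\log_2(1-v_\epsilon))$. At $\delta=\epsilon^2$ this equals $(1-\epsilon)(1+\log_2(1-v_\epsilon))$, which tends to $1$ because $v_\epsilon<\epsilon$.

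The paper avoids locating $\delta$ relative to $v_\epsilon$ altogether:
\begin{itemize}
\item Since $\min(v,\epsilon)\ge 2\epsilon v$ for every $v\le 1/2$, error at most $\delta$ forces $\E v\le\delta/(2\epsilon)$.
\item Jensen's inequality, with monotonicity of $h_2$ on $[0,1/2]$, then gives $B_{\mathrm{upfront}}(\epsilon,\delta)\ge 1-h_2(\delta/(2\epsilon))$.
\item At $\delta=\epsilon^2$ this is $1-h_2(\epsilon/2)\to1$. Together with the trivial bound $B_{\mathrm{upfront}}\le 1$, that is all the limit needs.
\end{itemize}
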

\begin{proof}[Proof sketch]
A single noisy reply contributes at most $1-h_2(\epsilon)$ bits, whereas final error $\delta$ requires at least $1-h_2(\delta)$ total bits. Query first, then optimally describe the remaining Bernoulli-$\epsilon$ error: the elementary binary rate--distortion test channel attains the difference. For an upfront source $Z$, set $v=\min\{P(A=1\mid Z),P(A=0\mid Z)\}$. Optimal error after the query is $\min(v,\epsilon)$. By symmetry, any distribution over $v\in[0,1/2]$ is realizable with mean prior $1/2$. Maximizing $\E h_2(v)$ subject to $\E\min(v,\epsilon)\le\delta$ gives the upper concave envelope of $h_2(v)$ and the point $(\epsilon,1)$, hence~\eqref{eq:noisybefore}. Appendix~\ref{app:noise} proves the constructions, the tangent condition, and the limiting separation.
\end{proof}
The formulas use classical binary rate distortion and posterior convexification; the point is the exact change in the verification protocol's operational behavior. The after-query source sees the query history. Restricting it not to see that history defines a different problem. These costs are mutual information, not raw message lengths or Wyner--Ziv communication rates.

\section{Noisy ordering theorem: full proof}\label{app:noise}
Here only, target observations are noisy and the exact-verifier reduction does not apply. Query $y\in\{0,1\}$ and observe $\ind\{A=y\}\oplus E$, where $E\sim\mathrm{Bernoulli}(\epsilon)$ is independent of all prior history. Knowing $y$, this is equivalent to a binary symmetric observation $R=A\oplus E$.

\subsection{Optimal arbitrary interleaving}
By the information chain rule, total transcript information is the sum of the causal source budget and the conditional information of target replies. Actions are functions of the history and private seed and contribute no additional information. With at most one noisy query, the latter contribution is at most $1-h_2(\epsilon)$, including protocols that sometimes make no query. This follows from $H(R\mid H)\le1$ and $H(R\mid A,H)=h_2(\epsilon)$ at every reached query node.

If the final guess has error at most $\delta\le\epsilon<1/2$, binary Fano gives $I(A;H_T)\ge1-h_2(\delta)$. Consequently $\aux\ge h_2(\epsilon)-h_2(\delta)$.

For achievability, query first. Since $A$ is uniform, $R$ is uniform and the residual $X=A\oplus R$ is independent of $R$, with law $\mathrm{Bernoulli}(\epsilon)$. Choose a joint law for $(X,\widehat X)$ by drawing
\[
 \widehat X\sim\mathrm{Bernoulli}\left(\frac{\epsilon-\delta}{1-2\delta}\right),
 \quad D\sim\mathrm{Bernoulli}(\delta)\text{ independently},
 \quad X=\widehat X\oplus D.
\]
The marginal of $X$ is $\mathrm{Bernoulli}(\epsilon)$. Use the resulting conditional law of $\widehat X$ given $X$ as the source kernel, with source access to $(A,R)$, and output $\widehat A=R\oplus\widehat X$. The error is $\delta$ and
\[
 I(A;\widehat X\mid R)=I(X;\widehat X)=h_2(\epsilon)-h_2(\delta).
\]
This proves~\eqref{eq:noisyafter}. At $\delta=0$ the source reveals the remaining error exactly; at $\delta=\epsilon$ no source information is needed.

\subsection{Optimal upfront information}
Aggregate all upfront observations into $Z$, conditioning also on the learner's independent seed. For posterior $p=P(A=1\mid Z)$, set $v=\min(p,1-p)$. The Bayes error after one binary symmetric observation is
\begin{align*}
 e(p)&=\min\{p(1-\epsilon),(1-p)\epsilon\}
      +\min\{p\epsilon,(1-p)(1-\epsilon)\}\\
     &=\min\{p,1-p,\epsilon\}=\min\{v,\epsilon\}.
\end{align*}
The source cost is $1-\E h_2(v)$. Every distribution of $v\in[0,1/2]$ can be realized: give posterior $v$ and $1-v$ equal conditional weights, thereby preserving the uniform prior. Thus the optimization is exactly
\[
 1-B_{\mathrm{upfront}}(\epsilon,\delta)
 =\sup\{\E h_2(v):\E\min(v,\epsilon)\le\delta\}.
\]
For $v\ge\epsilon$, the error coordinate is fixed at $\epsilon$ and entropy is maximized by $v=1/2$, attaining the point $(\epsilon,1)$. For $v\le\epsilon$, the achievable upper boundary is $(v,h_2(v))$. Randomization takes their upper concave envelope.

Define $\Gamma(v)=h_2(v)+(\epsilon-v)h_2'(v)$. On $(0,\epsilon)$,
$\Gamma'(v)=(\epsilon-v)h_2''(v)<0$, $\Gamma(v)\to\infty$ as $v\downarrow0$, and $\Gamma(\epsilon)=h_2(\epsilon)<1$. Hence a unique $v_\epsilon$ has $\Gamma(v_\epsilon)=1$. The envelope follows $h_2(v)$ until $v_\epsilon$ and its tangent line from there to $(\epsilon,1)$. Concavity of $h_2$ makes the line a global upper support; matching slopes at the join gives a concave envelope.

For $\delta\le v_\epsilon$, choose posterior pairs $\delta,1-\delta$. For $\delta\ge v_\epsilon$, mix posterior error $v_\epsilon$ with probability $(\epsilon-\delta)/(\epsilon-v_\epsilon)$ and posterior error $1/2$ otherwise. These constructions attain the envelope and prove~\eqref{eq:noisybefore}. The tangent equation can also be written
\[
 -\epsilon\log_2 v_\epsilon-(1-\epsilon)\log_2(1-v_\epsilon)=1.
\]
Our implementation solves it in log-odds coordinates to avoid underflow for small $\epsilon$.

\subsection{Nonzero-error asymptotic separation}
For every $v\le1/2$, $\min(v,\epsilon)\ge2\epsilon v$. Therefore error at most $\delta$ implies $\E v\le\delta/(2\epsilon)$, and Jensen gives
\[
 B_{\mathrm{upfront}}(\epsilon,\delta)
 \ge1-h_2\left(\frac{\delta}{2\epsilon}\right),\qquad 0\le\delta\le\epsilon.
\]
At $\delta=\epsilon^2$, this lower bound tends to one. The cost cannot exceed one, while $h_2(\epsilon)-h_2(\epsilon^2)$ tends to zero. The advantage in information cost is therefore unbounded. This separation is not based solely on the zero-error discontinuity of a full-support noisy channel.

\section{Information quantity is not source usefulness}\label{sec:channels}
Theorem~\ref{thm:exact} concerns an optimized source; another source carrying the same $B$ bits can fall far short of the frontier. For a \emph{fixed} upfront source $K$, the exact local value is
\begin{equation}\label{eq:sourcevalue}
 A_n(K)=\E_Z V_n(q(\cdot\given Z)).
\end{equation}
This follows by conditioning Proposition~\ref{prop:coverage} and is a task-specific Bayes value of information. It leads to different source choices than maximizing $I(\Theta;Z)$.

Consider a uniform unique answer on $M=256$ symbols, with $n=15$. A balanced partition into sixteen groups reveals four bits and leaves exactly sixteen candidates: accuracy one is possible. A source that reveals the exact answer with probability $1/2$ and otherwise erases also carries four bits, but its best accuracy is $\tfrac12+\tfrac12(16/256)=0.53125$. Increasing the reveal probability to $0.55$ raises information to $4.4$ bits but reaches only $0.578125$. More generally, with $M$ uniform answers and list size $\ell$, a balanced partition and reveal--erasure source of equal information $B=\log_2(M/\ell)$ have accuracies $1$ and $\alpha+(1-\alpha)\ell/M$, where $\alpha=B/\log_2M$. Along $M=2^{k^2}$ and $\ell=M/2^k$, the latter tends to zero while the former remains one. Both accuracies are exact optima for their channels, so the gap comes from the shape of the channel.

\paragraph{A learned decoder check.}
We train a \NeuralParams-parameter message-to-answer network from scratch on independent draws from the partition and reveal--erasure channels, then let its highest-scoring sixteen answers define the verification list. The network has a 64-dimensional message embedding and one 128-unit GELU hidden layer. Training uses $400$ Adam steps, batches of $512$, five seeds, and three learning rates, with no early selection of the reported checkpoint. Evaluation sums exactly over the finite target--message distribution, so variation across runs comes from optimization alone.

\begin{table}[t]
\centering
\caption{\textbf{Source quantity versus value.} Uniform $M=256$ targets, $15$ exact queries. Information and oracle accuracy are exact. Learned values span $15$ runs (five seeds, three learning rates) at step $400$. A single number means all runs agree at the displayed precision.}
\label{tab:neural}
\begin{tabular}{lrrr}
\toprule
Source & Information (bits) & Oracle accuracy & Learned accuracy \\
\midrule
Balanced partition & 4.0 & 1.000000 & 1.000000 \\
Reveal--erasure, $\alpha=0.50$ & 4.0 & 0.531250 & 0.531250 \\
Reveal--erasure, $\alpha=0.55$ & 4.4 & 0.578125 & 0.578125 \\
Uninformative & 0.0 & 0.062500 & 0.062500 \\
\bottomrule
\end{tabular}
\end{table}
The test separates decoding a source correctly from the source's usefulness: the gap between sources remains under perfect posterior decoding. The alphabets are small and explicit, so the experiment does not show that neural representations in general reach the optimized-channel frontier. Full learning curves and the learning-rate sweep are released.

\section{Quotient-subspace optimization and certificates}\label{app:planner}
All spaces and ranks in this section are over $\mathbb F_2$. Work in $\mathcal V=\operatorname{span}(T,B)$, removing unobservable latent directions, so $\dim\mathcal V=k+h$. Gaussian elimination selects coordinates for the quotient by $\mathcal T=\operatorname{row}(T)$. The projected source rows $\bar v_j$ span the full quotient, although a particular lower-dimensional quotient subspace need not be generated by available rows.

\subsection{Exact profile and attaining subsets}
Let $\mathcal U=\{\operatorname{span}(\bar v_j:j\in S):S\subseteq[J]\}$. For $U\in\mathcal U$, the images of the eligible set $\mathcal E(U)$ span exactly $U$: inclusion gives one direction, and a generating set of $U$ is contained in $\mathcal E(U)$. Rank-nullity therefore gives
\[
 \dim\big(\operatorname{span}(v_j:j\in\mathcal E(U))\cap\mathcal T\big)
 =\operatorname{rank}(v_j:j\in\mathcal E(U))-\dim U=t(U).
\]
For any feasible set $S$, write $u=\dim\operatorname{span}(\bar v_j:j\in S)$. Its original-space rank is at most $|S|$, so $\rho(S)\le |S|-u\le b-u$. Since its source span is a subspace of the eligible source span, $\rho(S)\le t(U)$. This proves
$\rho_b\le\max_{U\in\mathcal U}\min(t(U),(b-\dim U)_+)$. Only states with $u\le b$ are needed for this direction; states with $u>b$ contribute zero and do not affect the inequality.

For the converse, choose $u$ eligible rows with independent quotient images. Any dependence of those rows in $\mathcal V$ would project to a dependence in the quotient, so the rows are independent. Their span has zero intersection with $\mathcal T$. Extend these rows to an ordered basis of the eligible source span. The extension has length $R=u+t(U)$; every added row leaves the quotient span unchanged but increases original rank by one. The first $u+s$ rows therefore have target gain $s$, for $0\le s\le t(U)$. Given $b\ge u$, the prefix of length $u+\min(t(U),b-u)$ attains the expression in~\eqref{eq:qsp}. When the expression is zero, the empty source set also attains it. This proves both directions, including zero sources, zero nuisance dimension, dependent or repeated source rows, and budgets exceeding the number of independent sources.

\subsection{Closure enumeration and complexity}
Represent each quotient subspace by its unique reduced row basis. Begin with the empty basis and maintain a queue and hash set of encountered bases. For each encountered $U$ and each available direction $\bar v_j$, compute $U+\operatorname{span}(\bar v_j)$ and insert its canonical basis if new. Every inserted state is generated by available directions. Conversely, induction over a generating source subset shows that every $U\in\mathcal U$ is eventually inserted. No inclusion in a nongenerated subspace is required.

There are at most $\mathcal N_h=\sum_{u=0}^h{h\brack u}_2$ states. The Gaussian coefficient counts reduced subspaces, and
\[
 {h\brack u}_2=\prod_{i=0}^{u-1}\frac{2^h-2^i}{2^u-2^i}
 \le 4\cdot2^{u(h-u)}\le 2^{h^2/4+2}.
\]
Thus $\mathcal N_h\le(h+1)2^{h^2/4+2}=2^{O(h^2)}$. Per state, at most $J$ Gaussian reductions generate its successors; testing eligibility, constructing original/quotient bases, and updating $J+1$ budget entries require $\operatorname{poly}(J,k+h)$ field operations. Preprocessing the input costs polynomial time in its original dimension; after removal of redundant coordinates the stated bound applies. The count includes polynomial-length source witnesses, not just optimum values.

A state limit in the implementation raises an explicit exception; it never calls a partially explored search globally optimal. The limit is 500,000 states (5,000 for the stress-test fits). Counts of reached states can be far below $\mathcal N_h$, but there is no polynomial-in-$h$ claim. All finite-field operations are exact bit operations; runtime measurements are properties of this implementation and machine, not a comparison to every coding solver.

\subsection{Checkable proof objects}
The export contains $T$, $B$, quotient images, every generated reduced quotient basis, the eligible source IDs, original and quotient ranks, and an ordered achieving source basis. A verifier checks: (i) the quotient map is linear on the joint row space, vanishes on $\mathcal T$, and has rank $h$; (ii) the state set contains zero and is closed under adjoining every available direction; (iii) every state is generated by its reported eligible directions; (iv) eligible sets and all ranks are exact; (v) each basis prefix has its advertised cost and target rank; and (vi) the reported profile is the maximum in~\eqref{eq:qsp} and each chosen subset attains it. Closure and the initial state rule out omission of a superior generated subspace.

The independent verifier uses dense binary matrices and a separate Gaussian-elimination implementation rather than importing the planner's bit-packed rank routines. These checks certify the mathematical optimization for the exported model. They do not verify that a fitted model describes a real data source.

\subsection{Online target generation}
After the selected source labels $z=B_SW$ are observed, solve that system for one representative $w_0$ and a basis of its nullspace. The compatible targets form $Tw_0+T(\ker B_S)$. Reduce the projected nullspace to a basis, and enumerate at most $n+1$ distinct linear combinations. A positive exact target query supplies the output; after $n$ failures use the last candidate. This avoids enumerating all latent worlds or all target candidates when the requested list is short. A learned model that receives inconsistent source labels reports that failure instead of inventing a posterior.

\section{Relation to relative generalized Hamming weights}\label{app:coding}
Let $B:\mathbb F_2^J\to\mathcal V$ send coordinate vector $e_j$ to source row $v_j$, and $N=pB$ with $p:\mathcal V\to\mathcal V/\mathcal T$ the quotient map. Define $K_0=\ker B\subseteq K_1=\ker N$. Let $\mathbb F_2^S$ denote the coordinate subspace supported on $S$. Restricted rank-nullity gives
\[
 \dim(K_1\cap\mathbb F_2^S)-\dim(K_0\cap\mathbb F_2^S)
 =(|S|-\operatorname{rank}N_S)-(|S|-\operatorname{rank}B_S)
 =\rho(S).
\]
In the standard relative-weight definition, the $t$th weight is the smallest support of a $t$-dimensional subcode $D\subseteq K_1$ with $D\cap K_0=\{0\}$ \citep{luo2005,kurihara2012,sanjose2025}. Such a subcode supported in $S$ exists if and only if the displayed dimension difference is at least $t$: one direction follows by injecting $D$ into the quotient; the other chooses a complement to $K_0\cap\mathbb F_2^S$ in $K_1\cap\mathbb F_2^S$. Hence the relative weights are precisely the inverse thresholds of $\rho_b$.

Because $K_1^\perp$ has dimension $h$, each $U$ corresponds to a subcode of $K_1^\perp$, its annihilator, and $\mathcal E(U)$ is the complement of that subcode's support. QSP is therefore enumeration on the small side of Forney's duality \citep{forney1994dimension}, which is the right side when the nuisance dimension is small.

The algorithms of \citet{sanjose2025} address generalized and relative weights using generalized Brouwer--Zimmermann search and a Sage implementation. Our search is over the dimension-$h$ image quotient and is intended for bounded nuisance dimension, with a learning and verification-policy interface. Appendix~\ref{sec:fullexternal} runs that package, unmodified and pinned, on the same inputs: the profiles agree wherever both finish, and neither implementation is uniformly faster. The released source-aware baselines are specified independently in Appendix~\ref{app:vfourexp}; they are not proxies for all exact coding algorithms.

\paragraph{Hardness without a nuisance bound.}
Given a binary syndrome-decoding instance with matrix $H$, nonzero syndrome $t$, and weight cap $b$, take source rows to be the columns of $H$ and the single target row to be $t$. Then $\rho_b\ge1$ holds if and only if $t$ lies in the span of at most $b$ columns, equivalently if $Hc=t$ has a binary solution of Hamming weight at most $b$. Thus deciding this profile contains classical syndrome decoding \citep{berlekamp1978}. This reduction locates the algorithm's parameter dependence; it is not claimed as a new coding-hardness result. The assumption $k=1$ does not bound $h$.

\section{Learning the full uniform-linear joint model}\label{app:learnlinear}
\begin{proposition}[Learning the bank]\label{thm:learnlinear}
Suppose training contexts supply the target answer and all $J$ source labels, drawn independently and uniformly from an unknown rank-$\ell$ subspace of $\F^{k+J}$ with target rank $k$, and let $\ell_0\ge\ell$ be a declared bound. If
\begin{equation}\label{eq:sample}
 m\ge\ell_0+\lceil\log_2(1/\delta)\rceil ,
\end{equation}
then with probability at least $1-\delta$ the span of $m$ contexts is the whole subspace, and QSP applied to it returns optimal policies for every budget, with nuisance dimension $h=\ell-k$.
\end{proposition}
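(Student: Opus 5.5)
The plan has two parts: a probabilistic statement about spanning, and a deterministic statement that the span determines everything QSP uses. Write $\mathcal S\subseteq\F^{k+J}$ for the unknown subspace, $\dim\mathcal S=\ell\le\ell_0$. Each context is a uniform vector of $\mathcal S$, with the $k$ target coordinates first and the $J$ source labels after. First I will show that $m$ independent uniform draws from $\mathcal S$ span $\mathcal S$ with probability at least $1-2^{\ell-m}$. Then $m\ge\ell_0+\lceil\log_2(1/\delta)\rceil\ge\ell+\log_2(1/\delta)$ gives failure probability at most $\delta$.

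\textbf{Spanning.} The draws fail to span $\mathcal S$ only if all of them lie in some hyperplane $H\subset\mathcal S$. For a fixed hyperplane this has probability $2^{-m}$, and $\mathcal S$ has $2^\ell-1$ hyperplanes. The union bound therefore gives a failure probability of at most $(2^\ell-1)2^{-m}<2^{\ell-m}$, which is all the first part needs.

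\textbf{Identification.} I will realize the bank as a uniform linear bank in the sense of \cref{sec:planner}. Choose a basis of $\mathcal S$ and let $W$ be the uniform coefficient vector in $\F^{\ell}$. Then the context is $GW$ for an $(k+J)\times\ell$ generator matrix $G$. Its first $k$ rows form $T$, which has rank $k$ because the target has rank $k$, and its remaining rows are the $v_j$. Under this realization the joint law of target and labels is exactly the uniform law on $\mathcal S$, as the hypothesis requires.

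Any other basis of $\mathcal S$ changes $G$ to $GP$ for an invertible $P$. This change preserves every rank $\operatorname{rank}B_S$ and $\operatorname{rank}\begin{bmatrix}T\\B_S\end{bmatrix}$, hence every $\rho(S)$ in \eqref{eq:rank}, every $t(U)$ in \eqref{eq:eligible}, and the profile. Once the observed span equals $\mathcal S$, a basis extracted from the contexts by Gaussian elimination is therefore a valid realization of the true bank. Because every source label is a coordinate of a vector in $\mathcal S$, we have $\mathcal V=\operatorname{span}(T,v_1,\dots,v_J)$ equal to all of $\F^\ell$ in these coordinates, so $h=\dim\mathcal V-k=\ell-k$.

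\textbf{Optimality.} \cref{thm:qsp} applied to the realized bank returns $\rho_b$ and maximizing subsets for every $b$. \cref{thm:rank} then says that a fixed maximizing subset followed by enumeration of the compatible targets is optimal under arbitrary interleaving for every cap. Running time is not part of this statement; only correctness is.

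I expect the main obstacle to be the identification step rather than the probability bound. Specifically, I must argue carefully that the span determines the bank up to a change of latent basis, and that the planner and \cref{thm:rank} depend only on the resulting rank quantities, which are invariant under that change. Latent directions that no coordinate observes are harmless because they are removed anyway, as at the start of \cref{app:planner}.
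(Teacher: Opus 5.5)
Your proposal is correct and follows essentially the paper's proof. The union bound over the $2^\ell-1$ hyperplanes of the subspace (over $\F$ these correspond one-to-one to its nonzero linear functionals) gives failure probability below $2^{\ell-m}\le\delta$. On that single recovery event, the basis extracted from the contexts realizes the bank up to a change of latent coordinates, which leaves every rank, the nuisance dimension $\ell-k$, and hence QSP's optimality for all budgets unchanged. The one step to state more precisely is why $\mathcal V=\F^\ell$: it holds because $G$ has full column rank $\ell$, since its columns are a basis of the subspace, not merely because each label is a coordinate of a vector in it.
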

We prove \cref{thm:learnlinear} using the closure argument of \citet{helmbold1992learning}, specialized to uniform sampling. Let the true joint support be a rank-$\ell$ linear space $\mathcal X\subseteq\mathbb F_2^{k+J}$, with target marginal rank $k$. Independent training vectors are uniform on $\mathcal X$. Their span is always a subspace of $\mathcal X$. If it is proper, some nonzero linear functional on $\mathcal X$ annihilates every sample. There are $2^\ell-1$ such functionals, and each vanishes on one uniform draw with probability $1/2$. Consequently
\[
 P(\widehat{\mathcal X}\ne\mathcal X)\le(2^\ell-1)2^{-m}<2^{\ell-m}.
\]
For $m\ge \ell$, the exact success probability is
\begin{equation}\label{eq:spanexact}
 P(\widehat{\mathcal X}=\mathcal X)=\prod_{i=0}^{\ell-1}(1-2^{i-m});
\end{equation}
for $m<\ell$ it is zero. To derive the product, choose coordinates on $\mathcal X$ and view the samples as the rows of an $m\times \ell$ matrix with independent uniform entries. Its columns are independent uniform vectors of $\mathbb F_2^m$. The first column avoids zero with probability $1-2^{-m}$, and the next avoids a previously spanned dimension-$i$ space with probability $1-2^{i-m}$. The formula is classical random-matrix counting \citep{fulman2015}.

Given the observed basis $\Gamma$ (rows of length $k+J$), every point of $\widehat{\mathcal X}$ is represented uniquely as $\alpha\Gamma$. On exact recovery, uniform $X$ corresponds exactly to uniform $\alpha\in\mathbb F_2^\ell$. The first $k$ columns of $\Gamma$, transposed into coefficient rows, form the target matrix; the remaining columns form the source bank. A change of row basis merely changes latent coordinates, not any observable distribution, quotient dimension, or acquisition value. Thus on the single recovery event, all the hypotheses of the linear-rank theorem hold simultaneously for all budgets. No union bound over budgets or future test contexts is required. The learned nuisance dimension is then exactly $\ell-k$.

Training computation is polynomial-time binary row reduction. Planning costs $\mathcal N_{\ell-k}\operatorname{poly}(J,\ell)$ field operations. For declared $\ell_0\ge \ell$, the sample size in~\eqref{eq:sample} makes the common recovery event have probability at least $1-\delta$. Conditional on that event, zero-error policies have a worst-case online guarantee over the support. For other budgets their accuracy is the optimal uniform-prior accuracy, not a guarantee of success on every world.

The observation cost is $m$ target demonstrations and $Jm$ source labels. A binary source label can be acquired by one exact verification of zero; a target demonstration conveys the whole $k$-bit target. The online budget does not include either pretraining resource. Knowledge of the uniform-linear model class, a rank bound for selecting sample size, and exact labels are assumptions. When the sample does not span, the empirical model may falsely identify constraints and produce wrong outputs. Our estimator abstains if even the target rank is deficient, but this alone cannot certify full recovery. These failures are retained in the released learning curves.
\section{Planner and learning experiments}\label{app:vfourexp}
These experiments use finite binary linear tasks, exact algebra, and noiseless source and target acceptance mechanisms. A target query tests the entire target vector, while a source call reveals one binary source label. The learning method is row-space estimation followed by algebraic planning. The seed plan, every instance, raw training labels, held-out worlds, all budget profiles, and timings are released in \texttt{results\_v4/}.

\subsection{Instance generation and input separation}
The base seed is 202609061. An independent bank has target rows spanning $k$ target coordinates and $J$ source rows with independently sampled target coefficients and nuisance coefficients in $\mathbb F_2^h$. The clustered variant samples two thirds of nuisance coefficients from a lower-dimensional quotient subspace and the remainder from the full quotient. Random invertible elementary coordinate operations and a source permutation hide the presentation while preserving the linear dependence structure; they do not sample uniformly from all invertible matrices.

The main comparison uses 16 independently generated banks of each random family with $k=16,h=4,J=48$. A third diagnostic starts with the parity-check matrix of the binary length-15 Hamming code, whose columns are all nonzero vectors of $\mathbb F_2^4$. The target rows are a basis of its 11-dimensional kernel. Fifteen sources give the underlying 15 latent bits. Two further source pairs give masked linear target coordinates, introducing two independent pads and four sources. Thus $k=11$, $J=19$, the joint rank is 17, and $h=6$. Eight coordinate/source scramblings of this one construction test the algorithms' sensitivity to presentation.

For the acquisition comparisons, each instance produces $\ell+12$ joint labeled contexts. The learner receives only the packed labels, with the first $k$ bits denoting target labels and the remaining $J$ bits source labels. It is not given latent world vectors, source coefficients, target coefficients, or coordinate permutations. All 40 fits recover the support in this sample; a fit that did not would be recorded as a failure, not resampled. The evaluator checks model recovery, and its latent information is never passed to a planner. Every baseline receives the same fitted coefficient representation.

\subsection{Baselines and budget matching}
For a current source set $S$, lookahead-$\ell$ evaluates the \emph{joint} target-rank gain per added source over all unselected blocks of size at most $\ell$, rather than summing one-step gains. It adds a maximizing block, records each prefix, and repeats until the largest plotted source cap. Among tied scores the implementation uses a fixed seeded priority order; if all joint gains are zero, it first favors an independent row with minimum added nuisance rank. One- and two-step methods run on every bank. Four-step lookahead and eight seeded two-step restarts run on the coded diagnostic only.

The randomized comparator generates 32 linearly independent source-basis orderings. For each source budget it chooses the most informative prefix among these schedules, which is stronger than querying all $J$ raw sources or using a single random ordering. One-exchange local search starts from the best lookahead and randomized subsets at each cap. It pads to the permitted source size and repeatedly makes a strictly rank-improving swap of one chosen and one unchosen source. Two-exchange search, only on the coded diagnostic, starts from the one-exchange solutions and permits removal/addition of up to two sources. It therefore corrects dependencies missed by simple lookahead on this example. Its recorded timing includes initialization, restarts, and the randomized and greedy starting solutions.

For each strategy and each total budget $C$, all available source caps $b\le C$ are compared using
$\min(1,(C-b+1)2^{-k+\rho(S_b)})$. The methods may choose different source/target splits but have the same total cap. The displayed operating point $C^\star$ is the smallest budget for which the exact profile reaches one; it is chosen from the exact model. All curves for $C$ from $k-2$ to $C^\star+2$ are released. Population accuracies are computed from the exact posterior support rather than by Monte Carlo. Means over random banks describe variation across instances, not a test-set confidence interval.

A source-only reconstruction comparator can discard linearly redundant observations and query a basis of all source rows. For the three main families it uses 20, 20, and 17 calls, while mean optimal mixed budgets are 18.4375, 17.6875, and 15. These savings over source-aware reconstruction are modest. The exact planner additionally gives every partial-budget optimum and a certificate. Two-exchange local search matches its accuracy on the coded family.

\subsection{General-bank model learning and undersampling}
The learning study generates 80 banks, alternating $k=8$ and $16$, cycling $h=1,2,3,4$, taking $J=4k$, and alternating independent and clustered generations. Each bank has joint rank $\ell=k+h$. The declared generative dimension sets the sample sizes; the fitting function is given only label dimensions and samples. We use training prefixes with $m-\ell\in\{-2,0,2,4,7,10\}$ of a fixed pool of $\ell+10$ independent contexts. Each bank generates 512 further worlds not appearing in its training pool. These test worlds are reused across prefixes, so the comparison is paired: each sample size has 40,960 context evaluations on the same contexts.

The estimator abstains if its target coefficient rows are not full rank. Otherwise it plans in the fitted support, chooses that model's smallest zero-error cap, acquires only the selected true source labels, and tests the generated target list against the actual target. The evaluator records support recovery, target-rank abstention, incompatible observations, query caps, and correctness. False fitted relations are not replaced by the true ones. Below, ``expected'' sums~\eqref{eq:spanexact} over banks; ``test errors'' count abstentions as failures, out of 40,960 at every prefix.
\begin{center}\begin{tabular}{rrrrr}\toprule
$m-\ell$ & Recovered (banks of 80) & Expected (banks) & Abstentions (banks) & Test errors (contexts) \\ \midrule
-2 & 0 & 0.00 & 45 & 36544 \\
0 & 18 & 23.11 & 19 & 20480 \\
2 & 59 & 61.62 & 6 & 7008 \\
4 & 75 & 75.11 & 2 & 1799 \\
7 & 80 & 79.38 & 0 & 0 \\
10 & 80 & 79.92 & 0 & 0 \\
\bottomrule\end{tabular}\end{center}

At $m=\ell+7$, the sufficient failure bound is $2^{-7}<0.01$, with 16--27 target demonstrations and 512--1728 source calls per bank. All 80 fits recover and all held-out policies succeed. At $m=\ell$ and below, fitted supports are often incorrect and test contexts fail. The model is learned as a full uniform-linear joint distribution rather than through supplied pairs; it does not cover nonlinear, noisy, or nonuniform latent relations.

\subsection{Scaling, exact checks, and reproducibility}
The scaling grid crosses $k=16,32,64$, $J=4k$, and $h=2,4,6$, with three generated inputs per cell. The solver receives coefficient matrices in this computation-only experiment; the separate learning study measures labeled-data requirements. Timings measure the complete QSP call, including quotient preprocessing and construction of every budget witness, and exclude file serialization and data generation. The environment and raw timings are in \texttt{summary.json} and \texttt{scaling.json.gz}. They measure our Python implementation; the comparison with an independent solver is in Appendix~\ref{app:externalv5}.
\begin{center}\begin{tabular}{rrrrr}\toprule
$k$ & $J$ & $h$ & Mean seconds & Max. states \\ \midrule
16 & 64 & 2 & 0.0009 & 5 \\
32 & 128 & 2 & 0.0042 & 5 \\
64 & 256 & 2 & 0.0286 & 5 \\
16 & 64 & 4 & 0.0086 & 67 \\
32 & 128 & 4 & 0.0434 & 67 \\
64 & 256 & 4 & 0.2813 & 67 \\
16 & 64 & 6 & 0.2915 & 2543 \\
32 & 128 & 6 & 1.0044 & 2796 \\
64 & 256 & 6 & 5.1482 & 2825 \\
\bottomrule\end{tabular}\end{center}
\begin{figure}[h]
\centering\includegraphics[width=.70\linewidth]{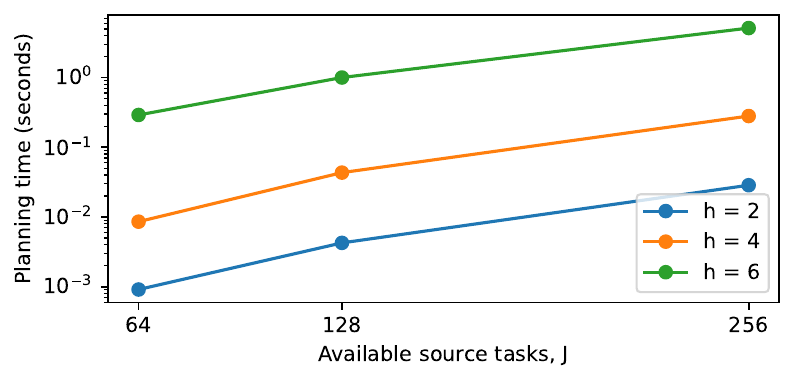}
\caption{\textbf{Scaling with available sources and nuisance dimension.} Each point averages three generated inputs. Target dimension is $k=J/4$. The number of subspace states is at most $\mathcal N_h=2^{O(h^2)}$ ($5$, $67$ and $2825$ for $h=2,4,6$; \cref{app:planner}).}\label{fig:qsptime}
\end{figure}

Small-instance checks use 240 additional random banks with latent dimension at most seven and at most eleven source tasks. Exhaustive enumeration of source subsets agrees on all 1911 budget entries, using exact integer ranks. The independent certificate checker validates 40 closure-complete exports: 4104 subspace states and 1728 profile entries. Tamper tests cover incorrect ranks and incomplete state sets. Additional regression tests compare relative-weight definitions, Gaussian subspace counts, affine decoding, learned coordinate changes, and exact Bellman values.

The planner suite (\texttt{python -m revision.run\_planner\_experiments}), the independent certificate verifier (\texttt{revision.verify\_planner\_certificate}) and the saved-label and outcome audit (\texttt{revision.audit\_learned\_plans}) produce and check these results, and \texttt{revision.make\_v4\_assets} regenerates their macros, runtime table and plots. No pretrained model is used. The audits check code and proof witnesses for finite models. The numbered results are checked separately in Lean~4 (start of the appendix).
\section{External comparison, local search and stress tests: results}\label{sec:fullexternal}
This appendix reports in full the three experiments summarized in Section~\ref{sec:planner}: a comparison with a published solver, the local-search comparison behind the best-baseline numbers, and stress tests of the learning proposition under label noise and nonuniform training. They measure, in turn, the planner's speed in the parameter regime it targets, the quality of its acquisitions against source-aware search, and the sensitivity of the learned model to its assumptions. The external comparison receives known matrices; the local-search and stress studies fit models from labels. Protocols, software environments, seeds, acquisition costs, uncertainty and the replay procedure are in Appendix~\ref{app:vfive}.

\paragraph{Published coding solver.}
We run the unmodified GHWs package of \citet{sanjose2025} at a pinned commit. Both solvers compute the complete rank profile. The kernels $K_0=\ker B\subseteq K_1=\ker N$ of~\eqref{eq:relative}, constructed in \cref{app:coding}, are nested codes, where $N$ contains nuisance-quotient columns and $B$ contains full source columns; GHWs computes their relative weight hierarchy. We use six independently generated banks in each of four $(k,h,J)$ regimes and three standard Reed--Muller constructions. Both methods run under the same SageMath 10.6 process environment on one CPU, with fresh workers, alternating order, two repetitions, and a 15-second wall cap per invocation. Timings exclude imports and worker startup but include input conversion; raw wall times and memory are also saved.
\ifdefined\MergedPaper
The table of times and completions is Table~\ref{tab:externalv5} in Section~\ref{sec:planner}.
\else
\begin{table}[t]
\centering\footnotesize
\setlength{\tabcolsep}{4pt}\begin{tabular}{lcrcrrr}\toprule
Bank family & $(k,h,J)$ & Banks & Agree & QSP (ms) & GHWs (ms) & Done\\\midrule
Low nuisance & $(4,2,12)$ & 6 & 6/6 & 0.37 & 30.91 & 6/6 \\
Wider low nuisance & $(8,3,24)$ & 6 & -- & 1.49 & \textit{timeout} & 0/6 \\
Intermediate nuisance & $(4,5,12)$ & 6 & 6/6 & 6.87 & 50.72 & 6/6 \\
High nuisance, short bank & $(3,7,12)$ & 6 & 6/6 & 35.46 & 18.25 & 6/6 \\
RM(1,3) & $(2,2,8)$ & 1 & 1/1 & 0.37 & 13.13 & 1/1 \\
RM(1,4) & $(2,3,16)$ & 1 & 1/1 & 0.78 & 16.44 & 1/1 \\
RM(2,4) & $(4,7,16)$ & 1 & 1/1 & 106.79 & 18.95 & 1/1 \\
\bottomrule\end{tabular}
\caption{\textbf{An independent implementation computes the same profiles.} Where both solvers finish, their complete rank profiles agree on every bank. Both compute the complete rank profile from identical inputs on one CPU, with two fresh-worker repetitions and a common 15-second wall cap. Times are medians over per-bank medians, excluding imports/worker startup but including input conversion. QSP completes all 27 banks; timeout entries are censored. RM rows are standard Reed--Muller constructions.}\label{tab:externalv5}
\end{table}
\fi
QSP completes all 27 banks and GHWs 21, and every profile computed by both agrees exactly. The twelve censored GHWs invocations are the two repetitions on each of the six wider low-nuisance banks. GHWs is faster on the high-nuisance short banks and on RM(2,4), so the speed advantage depends on the parameter regime. The scaling grid of Appendix~\ref{app:vfourexp}, which does not involve GHWs, reaches $k=64,J=256,h=6$ in about $\QSPLargeTime$ seconds.

\paragraph{Two-exchange search on every acquisition family.}
We extend the largest-neighborhood local-search comparator to all 40 label-fitted banks: 16 independent and 16 nuisance-clustered banks with $(k,h,J)=(16,4,48)$, and eight scramblings of one coded geometry. Both methods receive the same fitted support, and each optimizes its own source/target split. The baseline starts from one-/two-step lookahead and the best of 32 random independent-source bases, then exhaustively considers one- and two-element swaps with steepest strict rank improvement. A residual-rank implementation evaluates the same neighborhood faster and is checked against the original exhaustive implementation. Timings include initialization, which uses neither the exact optimum nor the true coefficients.
\begin{table}[t]
\centering\small
\begin{tabular}{lrrrrrr}\toprule
 & & \multicolumn{2}{c}{Two-exchange run} & \multicolumn{2}{c}{Best released baseline} & QSP\\
Family & Banks & Acc. at $C^\star$ & Gap (pp) & Acc. at $C^\star$ & Gap (pp) & Acc. at $C^\star$\\\midrule
Independent linear & 16 & 0.734 & 2.83 & 0.750 & 2.28 & 1.000 \\
Clustered linear & 16 & 0.812 & 2.36 & 0.875 & 1.10 & 1.000 \\
Coded scramblings & 8 & 1.000 & 0.01 & 1.000 & 0.00 & 1.000 \\
\bottomrule\end{tabular}
\caption{\textbf{Local search on every family, against the best released baseline.} Every method receives the same label-fitted model and optimizes its own source/target split. The two-exchange run uses a larger neighborhood than the lookahead and one-exchange heuristics of Appendix~\ref{app:vfourexp} but different random starts, and loses to one-step lookahead at $C^\star$ on three of the forty banks; the right-hand pair therefore reports the per-bank best over \emph{all} released source-aware baselines. Gap averages QSP-minus-baseline population accuracy over all integer budgets from zero through source rank plus one. QSP has accuracy one at $C^\star$ by definition of $C^\star$. The coded rows are scramblings of one geometry.}\label{tab:exchangev5}
\end{table}
At the exact zero-error cap $C^\star$, two-exchange accuracy is $\VFiveIIDExchange$ on the independent banks and $\VFiveClusterExchange$ on the clustered banks, and it matches QSP on all coded variants. The larger neighborhood does not always help, because this run's lookahead breaks ties with different seeds than in Appendix~\ref{app:vfourexp} (its random bases also differ): on $\VFiveIIDRegressions$ independent and $\VFiveClusterRegressions$ clustered banks, the one-step lookahead there reaches accuracy one at $C^\star$ and two-exchange does not. We therefore also report the per-bank best over all released source-aware baselines: $\VFiveIIDPortfolio$ and $\VFiveClusterPortfolio$ at $C^\star$. Evaluation at $C^\star$ magnifies small differences in query counts. Averaged over the complete integer-budget grid, the gaps to QSP are $\VFiveIIDRegret$ and $\VFiveClusterRegret$ percentage points for the two-exchange run and $\VFiveIIDPortfolioRegret$ and $\VFiveClusterPortfolioRegret$ for the best baseline; intervals and runtimes are in Table~\ref{tab:exchangeciv5}, and the lookahead and one-exchange results in Appendix~\ref{app:vfourexp}. Against source-aware baselines, QSP's savings are modest, not exponential.

\paragraph{Source-label noise.}
On 24 further banks with $(k,h,J)=(8,3,24)$, we fit from 18 joint labeled contexts and decode 512 independent test contexts per condition. Only source replies are flipped, independently, at rates $0,0.001,0.01,0.03,0.1$; target demonstrations and target verification remain exact. The span learner is used unchanged, with a fixed computational guard that abstains when the fitted nuisance dimension exceeds six. We report \emph{success yield} over every attempted context, counting abstention as unsuccessful, and accuracy conditional on producing a plan.

At 1\% noise, single measurements yield $\VFiveNoiseSingle$ overall: 13 of 24 banks abstain, and accuracy on the remaining eleven is $\VFiveNoiseConditional$. Three-measurement majority voting at both training and test raises yield to $\VFiveNoiseTriple$ (bank-bootstrap 95\% interval $[0.9972,0.9986]$), but increases training source calls from 432 to 1296 and the mean online physical cap to $\VFiveTripleCap$, compared with the clean single-measurement cap $\VFiveCleanCap$. All runs additionally use 18 noiseless target demonstrations. At 3\% noise the learner without repetition abstains on every bank. The span learner is thus not robust to source-label noise, and QSP's certificates hold for the \emph{fitted} model, not for the true bank.

\paragraph{Nonuniform training and distribution shift.}
Using the same 24 banks without label noise, we draw independent latent bits with training probabilities $p=0.5,0.2,0.05$ and fit at $m=18,22,44$. With $p=0.5,m=18$, all supports recover and all evaluated policies succeed. With $p=0.2$, recovery rises from 17/24 at $m=18$ to 24/24 at $m=44$. With $p=0.05,m=44$, only 9/24 recover: yield is $\VFiveBiasedMatched$ under the training prior and $\VFiveBiasedUniform$ on uniform test worlds. Span learning therefore needs the training data to cover the latent support, and the finite-sample guarantee, which assumes uniform data, does not apply to these distributions. Every training context costs one target demonstration and 24 source calls.

\paragraph{Validation and uncertainty.}
The test suite passes $\VFiveTestCount$ tests. Saved-label replay checks all 368,640 stored context outcomes across the \VFiveTotalCells{} stress cells and re-decodes the \VFivePlannedCells{} of them that produced a plan; every successful external profile is recomputed with QSP. Intervals use 10,000 bootstrap resamples of banks, not of test contexts; conditions and training prefixes are paired. Complete protocols and results, including all noise and prior-shift failures, are in Appendix~\ref{app:vfive}.
\section{External comparison, local search and stress tests: protocols}\label{app:vfive}
This appendix gives the protocols for the experiments of Appendix~\ref{sec:fullexternal}.

\subsection{Pinned external solver and equal computational task}\label{app:externalv5}
The GHWs package \citep{sanjose2025} is released under the GPL-3.0 license. We obtain it from its public repository at commit
\texttt{eec8adb39deeac01bd0701026747c945367e4501}, invoke it unmodified, and do not redistribute its source. We call
\texttt{rhierarchy(C1,C0,verbose=False)} with default algorithmic options and do not tune it or test its optional low-memory variants. Before timing, its documented six-coordinate example is run and returns the documented weights $(3,5)$.

For full source columns $B$ and quotient columns $N$, both over $\mathbb F_2$, we construct the nested codes
\[
 C_1=K_1=\ker N,\qquad C_0=K_0=\ker B\subseteq C_1.
\]
A relative generalized weight $d_r(C_1,C_0)$ equals the first source budget at which $\rho_b\ge r$. Consequently the complete relative weight hierarchy determines the same all-budget profile as QSP. GHWs timing includes code conversion, kernel computations, and hierarchy-to-profile conversion; QSP timing includes quotient preprocessing, state enumeration, and construction of its complete profile and source witnesses. Neither includes generating inputs or importing Sage and the packages. GHWs is not asked to produce QSP's planner certificates.

\paragraph{Inputs, randomness, and limits.}
The four random regimes are $(k,h,J)\in\{(4,2,12),(8,3,24),(4,5,12),(3,7,12)\}$, with seeds $i=0,\ldots,5$. For each, Python's \texttt{random.Random} receives seed $70001+1000k+100h+i$; source columns are independent draws from the nonzero binary $(k+h)$-vectors, resampled as an entire bank until they span that space. Target rows are the first $k$ coordinate vectors. Three further inputs use generator-matrix columns from Sage's binary Reed--Muller codes RM(1,3), RM(1,4), and RM(2,4), with target ranks $2,2,4$. All 27 banks are saved in \texttt{results\_v5/external/instances.json}.

Both solvers execute in fresh forked workers after imports, in alternating order for two repetitions. A parent process imposes the same 15-second wall deadline on every worker; at the deadline it terminates the worker and records a timeout. The time reported for completed runs is the internal algorithm timer, and wall time including worker overhead is saved separately. Timeouts are reported as censored: they are not converted to finite runtimes, dropped from completion counts, or counted as incorrect profiles. Group summaries take the median of the two timings for each bank, then the median over banks. The experiment has 108 solver invocations: 96 complete and 12 are censored, with no recorded solver errors. All 54 QSP invocations complete, and all 42 completed GHWs invocations agree exactly with QSP. The six GHWs-censored banks are censored in both repetitions.

The external run uses SageMath 10.6, Python 3.12.5, and the official Sage Docker image with its immutable digest recorded in \texttt{image.json}. The container has one CPU, a 5 GB memory limit, and no network access during algorithm execution. CPU information is saved in \texttt{cpu.txt}. Peak resident memory includes the Sage interpreter's footprint. Timings in the local-search and stress experiments below come from a different environment and are not compared with these.

\paragraph{Crossover and preprocessing.}
Table~\ref{tab:externalv5} reports every regime. On the high-nuisance short banks the median runtime is 35.46 ms for QSP and 18.25 ms for GHWs. On RM(2,4) it is 106.79 versus 18.95 ms. On the wider low-nuisance banks QSP takes a median 1.49 ms, while neither GHWs repetition finishes within 15 seconds. Per-invocation conversion time, total time, returned profiles, censoring flags, and memory are released. With six banks per random regime and two timing repetitions, these runs locate the crossover but do not give precise speed ratios.
\begin{figure}[ht]
\centering\includegraphics[width=.70\linewidth]{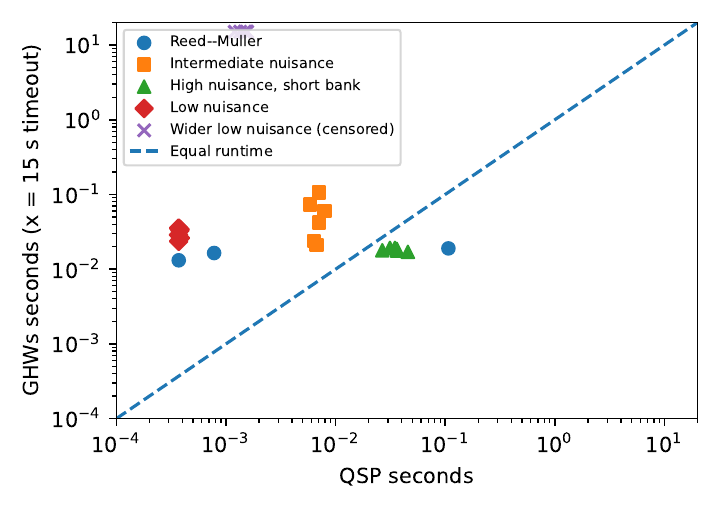}
\caption{\textbf{Every external-comparison instance.} Each point is the median of two repetitions. Crosses at 15 seconds are censored GHWs runs. Points on both sides of the diagonal show the crossover.}\label{fig:externalv5}
\end{figure}

\subsection{Expanded two-exchange local search}\label{app:exchangev5}
The 40 banks and their joint training labels are those of Appendix~\ref{app:vfourexp}: 16 independent, 16 clustered, and eight coded scramblings. We refit the model from each stored label matrix and check that its fitted support equals the saved support. QSP and the baseline receive only this fitted target/source representation; the true coefficients are used only by the evaluator. All source queries are charged, and each strategy chooses its own source/target split.

The baseline initializes a per-budget best subset from one- and two-step joint-information lookahead and 32 randomized independent-source bases. It then examines all one- and two-element replacements, selecting the largest strictly positive transfer-rank increase, and repeats until none exists, so it stops at a zero-improvement plateau. Ties have deterministic iteration order. Its timings include the lookahead and random-basis initialization.

A residual-rank implementation evaluates the same neighborhood more efficiently. After dropping one or two source indices, reduce each candidate full vector and quotient vector modulo the respective remaining row spaces. The rank increment of two binary residuals $u,v$ is
\[
 \ind\{u\ne0\text{ or }v\ne0\}+
 \ind\{u\ne0,v\ne0,u\ne v\}.
\]
Subtracting quotient-rank increment from full-rank increment gives the exact transfer-rank change. Twelve tests check on small banks that complete profiles and selected subsets agree with the brute-force neighborhood evaluator. The seeds and every fitted model, initial/final subset, runtime, and all-budget curve are in \texttt{two\_exchange\_all\_families.json}.

Let $C^\star$ be the exact model's smallest total-call cap yielding zero error; QSP has accuracy one there by construction. Each bank also records its accuracy at every integer $C=0,\ldots,\operatorname{rank}(B)+1$, and the mean gap in Table~\ref{tab:exchangev5} averages the population-accuracy difference over this grid, with equal weight on each budget. In the coded family the heuristic matches the zero-error threshold in all eight scramblings, which does not imply equality at every smaller budget.
\begin{table}[ht]\centering\small
\begin{tabular}{lrrrr}\toprule
Family & Acc. 95\% interval & Gap interval (pp) & QSP (ms) & Two-exchange (ms)\\\midrule
Independent & [0.625,0.844] & [2.13,3.50] & 10.56 & 500.65 \\
Clustered & [0.688,0.938] & [1.34,3.40] & 8.70 & 480.65 \\
Coded & [1.000,1.000] & [0.00,0.03] & 18.33 & 76.81 \\
\bottomrule\end{tabular}
\caption{Bank-bootstrap percentile intervals and mean runtimes for the expanded local-search comparison. Coded intervals reflect only eight scramblings of one geometry. These timings come from a different host than those of Table~\ref{tab:externalv5}.}\label{tab:exchangeciv5}
\end{table}

Confidence intervals use 10,000 percentile bootstrap resamples with bank as the resampling unit and seed 202609079. For the coded family they reflect only variation across scramblings of one geometry, and coded banks are not pooled with the independently generated ones.

\subsection{Source-label noise and repeated measurements}\label{app:noisev5}
The stress generator creates 24 banks with $k=8,h=3,J=24$, alternating the independent and clustered generators with seed $202609071+i$, $i=0,\ldots,23$. Every joint target/source support has rank 11. The logical source/target budget is fixed to each clean bank's exact zero-error cap $C^\star$ (mean 9.25 calls). This cap is chosen from the true bank, so the study does not test whether the learner can choose a safe budget from corrupted data. We fit from $m=18$ joint labeled contexts and draw 512 independent test contexts in each cell.

Noise flips each source measurement independently with $\epsilon\in\{0,.001,.01,.03,.1\}$. The target demonstrations in training remain noiseless, and test-time target equality queries remain exact. For repetition counts $r\in\{1,3,9\}$, the source label is the majority of $r$ independently noisy measurements in both training and test. Every physical measurement is charged: training costs $18\cdot24\cdot r$ source queries and 18 target demonstrations, and a fitted policy that selects $b$ sources and $n$ target queries has online physical cap $rb+n$.

All conditions reuse each bank's clean training and test worlds and uniform noise arrays, with repetitions using prefixes, so cells are paired rather than independent. Worlds are independent draws from a finite space, so training and test worlds can coincide. The learner sees only its training joint labels. The evaluator stores labels for all potential test sources, but decoding reads only the selected indices; unit tests check that changing unrequested test source labels leaves the outcome unchanged.

\paragraph{Model misspecification and abstention.}
We apply the empirical-span learner without a robust-fitting change. It abstains if the target rank cannot be represented or if the inferred nuisance dimension exceeds the fixed computational guard $h=6$. As $\mathcal N_h=2^{O(h^2)}$, the guard caps planning at $\mathcal N_6=2825$ states, as timed in \cref{app:vfourexp}. Failed fits are not repaired with the true coefficients: planned policies keep their fitted supports, which can be incorrect, and contradictory observed source equations also produce unsuccessful outcomes.

We report support recovery, the number of banks producing a plan, \emph{success yield} over all test attempts, and accuracy conditional on a plan. Abstentions count as unsuccessful in yield but not as incorrect answers. The online caps shown are averages over planned banks; training calls are incurred on abstaining banks too.
\begin{table}[ht]
\centering\small
\begin{tabular}{rrrrrrrr}\toprule
Noise (\%) & Reps & Recovered & Planned & Yield & Conditional & Online cap & Train calls\\\midrule
0 & 1 & 24/24 & 24/24 & 1.0000 & 1.0000 & 9.25 & 432 \\
0 & 3 & 24/24 & 24/24 & 1.0000 & 1.0000 & 25.25 & 1296 \\
0 & 9 & 24/24 & 24/24 & 1.0000 & 1.0000 & 73.25 & 3888 \\
0.1 & 1 & 14/24 & 24/24 & 0.9914 & 0.9914 & 9.25 & 432 \\
0.1 & 3 & 24/24 & 24/24 & 1.0000 & 1.0000 & 25.25 & 1296 \\
0.1 & 9 & 24/24 & 24/24 & 1.0000 & 1.0000 & 73.25 & 3888 \\
1 & 1 & 0/24 & 11/24 & 0.4020 & 0.8771 & 9.18 & 432 \\
1 & 3 & 20/24 & 24/24 & 0.9979 & 0.9979 & 25.25 & 1296 \\
1 & 9 & 24/24 & 24/24 & 1.0000 & 1.0000 & 73.25 & 3888 \\
3 & 1 & 0/24 & 0/24 & 0.0000 & -- & -- & 432 \\
3 & 3 & 5/24 & 24/24 & 0.9705 & 0.9705 & 25.17 & 1296 \\
3 & 9 & 24/24 & 24/24 & 1.0000 & 1.0000 & 73.25 & 3888 \\
10 & 1 & 0/24 & 0/24 & 0.0000 & -- & -- & 432 \\
10 & 3 & 0/24 & 0/24 & 0.0000 & -- & -- & 1296 \\
10 & 9 & 18/24 & 24/24 & 0.9931 & 0.9931 & 73.25 & 3888 \\
\bottomrule\end{tabular}
\caption{\textbf{All source-noise conditions.} Yield counts abstaining banks as unsuccessful; conditional accuracy excludes them. ``Recovered'' means the fitted support equals the true support, which is used for evaluation only. Online cap is the mean physical cap over planned banks, charging every repeated source measurement. Every row also costs 18 noiseless target demonstrations per bank.}\label{tab:noisev5}
\end{table}
\begin{figure}[ht]
\centering\includegraphics[width=.76\linewidth]{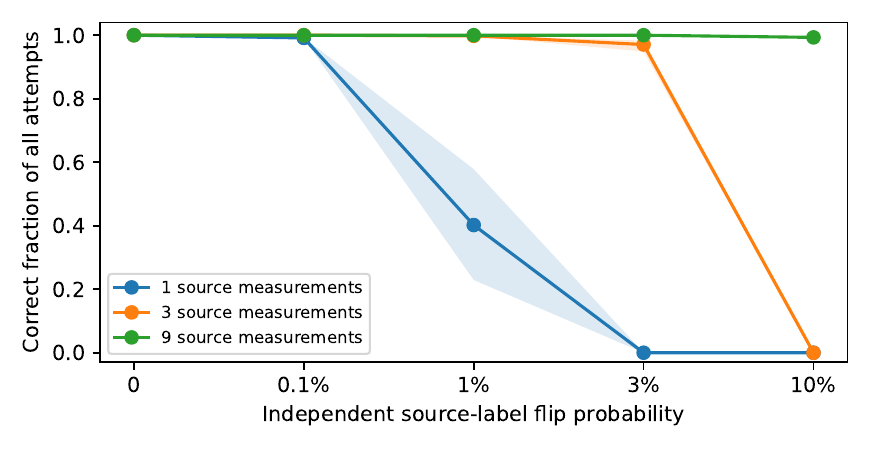}
\caption{\textbf{The span learner is not noise-robust.} Bands are bank-bootstrap intervals for yield, with abstention counted as failure. Repetition improves some conditions but multiplies acquisition cost, so the curves are not at equal physical budgets. Target feedback stays exact.}\label{fig:noisev5}
\end{figure}

The noise study has 360 bank--noise--repetition cells. At 1\% noise without repeats, 13 banks abstain and yield is 0.4020 (95\% interval $[0.2292,0.5781]$). Conditional accuracy on the eleven planned banks is 0.8771. Three repeats produce plans on every bank and yield 0.9979 ($[0.9972,0.9986]$), at a mean physical online cap of 25.25 and 1296 training source calls. At 3\% noise, one measurement produces no plans; three repeats yield 0.9705 ($[0.9484,0.9829]$). Nine repeats incur 3888 training source calls and a mean physical online cap of 73.25 on planned banks. Repetition factors were not optimized.

\subsection{Nonuniform training and prior shift}\label{app:biasv5}
The same 24 banks are evaluated without source-label noise. In the generator's latent coordinates, bits are independent Bernoulli($p$), with $p\in\{.5,.2,.05\}$. Target/source coefficient matrices remain fixed; only the latent distribution changes. Training sizes are $m\in\{18,22,44\}$. For $p<.5$, we evaluate both matched-prior test draws and a shift to independent uniform latent bits; for $p=.5$ these coincide and are counted once. There are therefore 216 fitted models and 360 evaluation cells.

Training sets use prefixes of the same bank-specific uniform arrays, and test conditions use paired independent uniform arrays. Each evaluation uses 512 test draws, with repeat values allowed. The fitting procedure still treats its estimated support as uniform and uses the same clean threshold $C^\star$, so the prior is misspecified. A nonuniform law with full support eventually reveals the correct span, but the finite-sample bound for uniform data does not apply. When the true support is recovered, a zero-error list covers that support whatever its prior weights; below zero error, the planner's optimality guarantee assumes the uniform prior.
\begin{table}[ht]
\centering\small
\begin{tabular}{rrrrrrr}\toprule
Train bit $p$ & Contexts & Recovered & Planned & Same-prior yield & Uniform-test yield & Train calls\\\midrule
0.05 & 18 & 0/24 & 4/24 & 0.1436 & 0.0405 & 432 \\
0.05 & 22 & 0/24 & 8/24 & 0.2834 & 0.0731 & 528 \\
0.05 & 44 & 9/24 & 22/24 & 0.8761 & 0.6416 & 1056 \\
0.2 & 18 & 17/24 & 24/24 & 0.9213 & 0.8516 & 432 \\
0.2 & 22 & 22/24 & 24/24 & 0.9836 & 0.9563 & 528 \\
0.2 & 44 & 24/24 & 24/24 & 1.0000 & 1.0000 & 1056 \\
0.5 & 18 & 24/24 & 24/24 & 1.0000 & 1.0000 & 432 \\
0.5 & 22 & 24/24 & 24/24 & 1.0000 & 1.0000 & 528 \\
0.5 & 44 & 24/24 & 24/24 & 1.0000 & 1.0000 & 1056 \\
\bottomrule\end{tabular}
\caption{\textbf{All nonuniform-training and shift conditions.} Each training context costs one noiseless target demonstration in addition to the source calls shown. Rows share the same 24 banks and test random numbers. The uniform-data theorem does not provide its stated finite-sample recovery guarantee at $p<1/2$.}\label{tab:biasv5}
\end{table}
\begin{figure}[ht]
\centering\includegraphics[width=.76\linewidth]{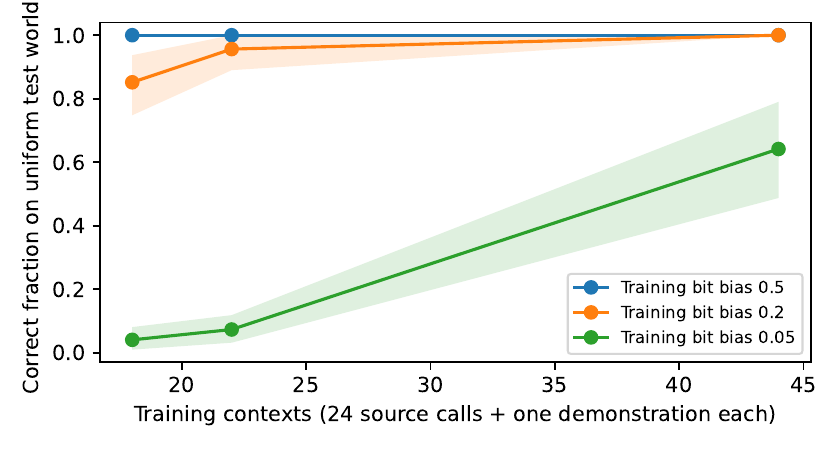}
\caption{\textbf{Coverage of the latent support matters.} Uniform-test yield after fitting under each training prior; bands resample banks. More training contexts overcome moderate bias but not the strongest bias tested. Each training context costs 24 source calls and one noiseless target demonstration.}\label{fig:biasv5}
\end{figure}

At $p=.05,m=44$, only nine supports recover and 22 banks produce a plan; yield is 0.8761 on matched-prior test worlds and 0.6416 on uniform ones. We did not add a robust span estimator or select a best test-time checkpoint.

\subsection{Saved evidence and reproduction}
\texttt{experiments\_v5/audit.py} re-verifies each saved fitted support against its training labels by a rank identity, replays selected-source decoding on the \VFivePlannedCells{} of \VFiveTotalCells{} stress cells that produced a plan (\VFiveDecodedContexts{} decoded contexts), and checks all 368,640 stored per-context outcomes; the remaining cells are recorded abstentions, for which there is no plan to replay. It independently computes the rank of each fitted support from dense binary matrices. It also recomputes the 27 external profiles with QSP and checks every successful returned profile against the released inputs. SHA256 checks identify the external input file and exported byte content. The replay checks arithmetic and data integrity only.

The \VFiveTestCount-test suite covers the planner, the rate--distortion solvers, the measurement, call-count and local-search implementations, and the scratchpad trace format. The 1,911 exhaustive profile comparisons, 5,440 exact Bellman comparisons, and 960 numerical rate--distortion points are released with their result files. The release includes all outcomes, among them timeouts, abstentions, incorrect fitted supports, and competitor wins.

Run \texttt{bash scripts/reproduce\_paper.sh} for the tests, the local experiments, the saved-external replay, the figures except \cref{fig:nonuniform,fig:noise}, and the manuscript build. By default it checks the saved external outputs without running Sage; with \texttt{RUN\_EXTERNAL=1} on a host with Docker and network access it also reruns the pinned external solver (\texttt{scripts/run\_external\_v5.sh}).
\section{Neural transfer experiments: details}\label{app:vfiveexp}
All neural experiments use the decoder-only transformer of the released code: two pre-LayerNorm blocks with four-head attention and a GELU MLP of width $4d_{\rm model}$, $d_{\rm model}=128$, learned positional embeddings, and a two-symbol output head over bits. Pretraining uses AdamW (learning rate $10^{-3}$, cosine decay, weight decay $0.01$), batch $256$, and the step counts listed with each run. Everything runs on CPU with four threads. A default pretraining run takes minutes at $d\le7$; the widest and longest in-context runs and the scratchpad executors take between one and nine hours. Throughout, ``seeds'' index secrets $W^\star$ (and curricula) evaluated on a single pretraining run per bank; variation across banks is reported wherever a fraction is quoted. Seeds, banks, pretrained checkpoints' statistics, per-run result files, and all exact distributions are in \texttt{results\_v5/}, produced by \texttt{neural/exp\_transfer.py} (in-weight learner) and \texttt{neural/exp\_incontext.py} (in-context learner).

\paragraph{Banks.}
Banks are generated by \texttt{revision.planner\_instances.random\_bank} with the seeds recorded in each result file; latent dimension $d=k+h$, independent target and nuisance coefficients, and an invertible coordinate scramble. The exact profile, $C^\star$, and $(b^\star,n^\star)$ come from the quotient-subspace planner. Bit packing follows \texttt{revision/quotient\_planner.py}: coefficient $i$ is bit $i$, and a target vector is indexed by $\sum_ia_i2^i$.

\paragraph{In-weight learner.}
Tokens are $[\mathrm{BOS},c,q]$ with a context token $c$ and a query token $q$ naming a source or the target; target answers are emitted as $k$ binary tokens fed back autoregressively. Pretraining draws a context uniformly among $n_{\rm known}$ known contexts (fixed $W_c$) and $8$ uncertain contexts ($W$ resampled per sequence), and a query that is the target with probability $1/2$; the loss is next-bit cross-entropy on answer positions. A held-out fraction of $1/4$ of the known contexts never shows its target prompt during pretraining, and target accuracy on those contexts is the structure diagnostic. Post-training uses one uncertain context and a secret $W^\star$ per seed. SFT acquisition reveals $v_jW^\star$ with one verifier call (candidate zero passes or fails) and takes Adam steps (learning rate $3\cdot10^{-4}$) on that prompt until $P(\text{bit})\ge0.9$, at most $300$ steps; GRPO acquisition draws groups of $G=8$ rollouts with group-normalized advantages, no KL term and no clipping (the GRPO + KL variant adds the penalty $0.1\,\mathrm{KL}(\pi\,\|\,\pi_{\mathrm{ref}})$ to the loss on the source prompt, with $\pi_{\mathrm{ref}}$ the pretrained policy), until the same threshold or $25$ groups, counting every rollout as a call. The target phase runs GRPO ($G=8$, Adam $3\cdot10^{-4}$) until $P(A^\star)\ge0.9$, capped at $\min(30/P_0,30\cdot2^k)+50$ rollouts where $P_0$ is the exact target probability after acquisition.

\paragraph{In-context learner.}
Sequences are $[\mathrm{BOS}]$, then $(\text{source}_j,\text{label})$ token pairs for a uniformly random ordered subset of at most $\min(J,d+2)$ sources, then a target token and the $k$ target bits. Pretraining draws $W$ uniformly per sequence and trains next-bit prediction on every source label and target bit. At test time the prompt contains exactly the labels the curriculum acquired for $W^\star$ (one call each), and the target phase is as above. The budget profile sweeps the planner's optimal subset at every source budget $b\le\operatorname{rank}(B)$ over $64$ fresh worlds each, reporting the mean bits used, mean coset mass, the fraction of worlds whose argmax lies in the coset, and the mean rank of $A^\star$ (the calls of a most-probable-first eliminating learner), against the Bayes value $(2^{k-\rho}+1)/2$.

\paragraph{Scratchpad learner.}
Banks are generated in canonical coordinates ($T$ is the identity on the first $k$ latent bits; \texttt{make\_canonical\_bank}). This is a different generator from \texttt{random\_bank}: with the same seed it draws a different bank, and at $(k,h,J)=(6,2,16)$ the scratchpad bank has $b^\star=5$ where the in-context bank~0 has $b^\star=4$. Appendix~\ref{app:controls27} runs the plain decoder on the scratchpad's bank. A row is $d+1$ bits in the column order nuisance bits, then $a_{k-1},\dots,a_0$, then the right-hand side. For each acquired source, in acquisition order, the trace writes the row, then for each column $p=0,\dots,d-1$ writes the row again after XOR with the slot-$p$ row if bit $p$ is set and slot $p$ is filled, and finally stores the reduced row in the slot of its pivot column (or writes an empty marker); slots are never modified once written. For each target bit $a_i$ in the order $a_0,\dots,a_{k-1}$ the trace writes the index tag, then, if the slot of $a_i$'s column is filled, the running parity of that row's right-hand side with the row's entries at the already generated bits, and then the answer token; if the slot is empty it writes a free marker and the answer token, which is uniform under the data distribution. The model is the same two-block transformer with an added embedding of the token's column within its row (\texttt{TraceGPT}); it is trained on the whole sequence except the prompt with batch $64$ at $d=5$ and $32$ at $d=8$. At test time the trace is generated greedily up to the target tag; each target bit's chain of thought is generated greedily given the candidate's earlier bits, and the probability of the answer token is read out, so $P(A^\star)$ and the full distribution over $2^k$ candidates use the model's own computation. Trace length is linear in the number of acquired sources and quadratic in the latent dimension: with independent acquired rows it is $3+2|S|+|S|(d+2)^2$ tokens up to the target tag ($207$ at $d=5$, $|S|=4$); a dependent row emits a single empty marker instead of a stored row. Three executors are reported: at $d=5$, width $128$, two layers, $3000$ steps, and width $256$, three layers, $8000$ steps; at $d=8$, width $256$, three layers, $6000$ steps ($2.6$M parameters, a trace of $513$ tokens per context); a first version of the small executor with inconsistent field ids between training and generation is retained in \texttt{results\_v5/} with a \texttt{buggyfields} suffix and is not used.

\paragraph{Curricula.}
QSP uses the certificate's attaining subset at budget $b$; greedy is one-step joint transfer-rank gain with seeded tie breaking (\texttt{planner\_baselines.schedule}); random is the prefix of a random independent source basis; none acquires no sources. Every method has the same $b^\star$ and the same secret, and all reported probabilities are computed exactly. \Cref{tab:curricula27} in \cref{app:controls27} reports every curriculum at $b^\star$ for $d=5$, $8$ and $11$.

\begin{table}[h]\centering\scriptsize\setlength{\tabcolsep}{4pt}\begin{tabular}{llrrrrrrr}\toprule
acquisition & curriculum & runs & calls & coset mass & bits used & up/down & wrecked & rollouts (ref.) \\ \midrule
SFT & none & 9 & 0 & 1.000 & -0.03 & 0/0 & 0 & 65.6 (64.0) \\
 & qsp & 9 & 5 & 0.054 & -2.06 & 0/9 & 3 & 26893 (2.7) \\
 & greedy & 9 & 5 & 0.049 & -0.41 & 2/4 & 2 & $3.1\times10^{11}$ (2.7) \\
 & random & 9 & 5 & 0.076 & -0.08 & 3/5 & 0 & 81.6 (8.0) \\
\midrule
GRPO & none & 9 & 0 & 1.000 & -0.03 & 0/0 & 0 & 65.6 (64.0) \\
 & qsp & 9 & 437 & 0.000 & -3.75 & 0/9 & 7 & $2.9\times10^{18}$ (2.7) \\
 & greedy & 9 & 436 & 0.000 & -3.98 & 0/9 & 7 & $9.6\times10^{17}$ (2.7) \\
 & random & 9 & 532 & 0.000 & -2.42 & 0/9 & 6 & $3.1\times10^{18}$ (8.0) \\
\midrule
GRPO + KL & none & 9 & 0 & 1.000 & -0.03 & 0/0 & 0 & 65.6 (64.0) \\
 & qsp & 9 & 400 & 0.012 & -2.71 & 0/9 & 4 & $5.7\times10^{11}$ (2.7) \\
 & greedy & 9 & 430 & 0.000 & -1.37 & 1/8 & 5 & $2.2\times10^{16}$ (2.7) \\
 & random & 9 & 453 & 0.000 & -2.19 & 0/7 & 6 & $1.0\times10^{17}$ (8.0) \\
\bottomrule\end{tabular}\caption{\textbf{In-weight acquisition at $k=6$} (three banks, three secrets each). Calls are verifier calls spent on sources (mean over runs); coset mass is the median over runs; bits used is $k+\log_2$ of the mean over runs of $P(A^\star)$ after acquisition; up/down counts runs whose own bits used moved above $+0.2$ or below $-0.2$; a run is wrecked when the coset mass falls below $10^{-2}$; rollouts is the expected number of policy samples until the first verified success, $\mathbb{E}\,[1/P(A^\star)]$ over runs, with the exact posterior in parentheses. GRPO runs whose source becomes confidently wrong receive no reward gradient afterwards (all-fail groups) and spend their remaining call budget without acquiring the label.}\label{tab:inweight}\end{table}

\begin{table}[h]\centering\footnotesize\begin{tabular}{lrrrrrr}\toprule
bank & width & layers & steps & lr & final training loss & bits used at $b^\star$ \\ \midrule
0 & 64 & 2 & 6,000 & 0.001 & 0.693 & -0.01 of 4 \\
0 & 128 & 2 & 6,000 & 0.0003 & 0.693 & -0.04 of 4 \\
0 & 128 & 2 & 6,000 & 0.001 & 0.693 & 0.00 of 4 \\
0 & 128 & 2 & 6,000 & 0.003 & 0.693 & 0.00 of 4 \\
0 & 256 & 2 & 6,000 & 0.001 & 0.692 & -0.03 of 4 \\
0 & 256 & 2 & 20,000 & 0.001 & 0.693 & 0.00 of 4 \\
0 & 256 & 4 & 6,000 & 0.001 & 0.693 & 0.01 of 4 \\
0 & 512 & 2 & 6,000 & 0.001 & 0.693 & 0.00 of 4 \\
1 & 64 & 2 & 6,000 & 0.001 & 0.617 & 1.96 of 5 \\
1 & 128 & 2 & 6,000 & 0.001 & 0.634 & 1.00 of 5 \\
1 & 256 & 2 & 6,000 & 0.001 & 0.646 & 1.00 of 5 \\
2 & 64 & 2 & 6,000 & 0.001 & 0.692 & 0.01 of 5 \\
2 & 128 & 2 & 6,000 & 0.001 & 0.685 & 0.77 of 5 \\
2 & 256 & 2 & 6,000 & 0.001 & 0.693 & 0.00 of 5 \\
\bottomrule\end{tabular}\caption{\textbf{In-context decoder capacity sweep} at $k=6$ ($d=8$): 8 configurations in 14 runs, 3 of them on all three banks and the rest on bank~0, one pretraining seed per row. The Bayes-optimal training loss for banks 0, 1, 2 is 0.477, 0.476, 0.472 nats against the uninformed value $\ln 2=0.693$, so every configuration that stays at $\ln2$ has left the whole learnable signal on the table.}\label{tab:capacity}\end{table}

\begin{figure}[h]
\centering\includegraphics[width=.8\linewidth]{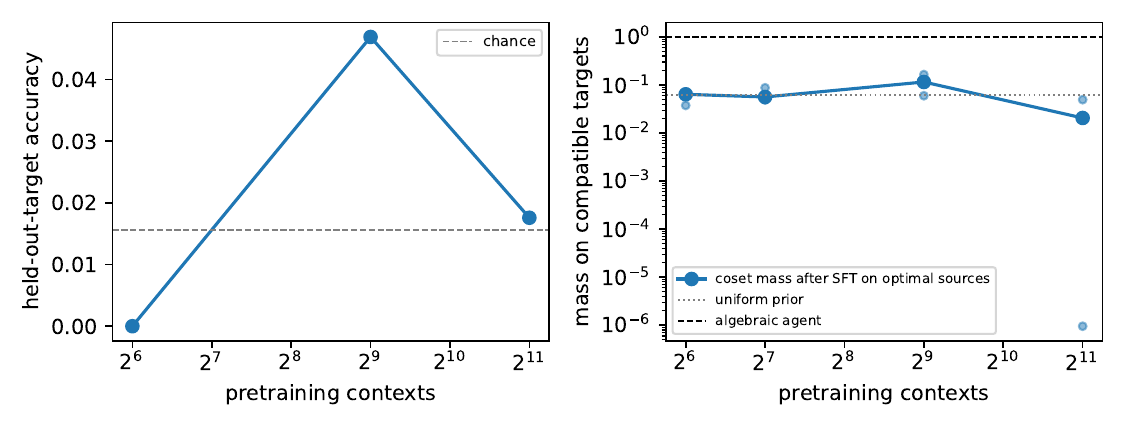}
\caption{\textbf{Diversity does not buy structure for the in-weight learner.} $k=6$, one bank; left, target accuracy on known contexts whose target prompt was held out of pretraining; right, coset mass after SFT on the planner's sources, per run and median, against the prior and the algebraic agent.}\label{fig:diversity}
\end{figure}

\begin{figure}[h]
\centering\includegraphics[width=.42\linewidth]{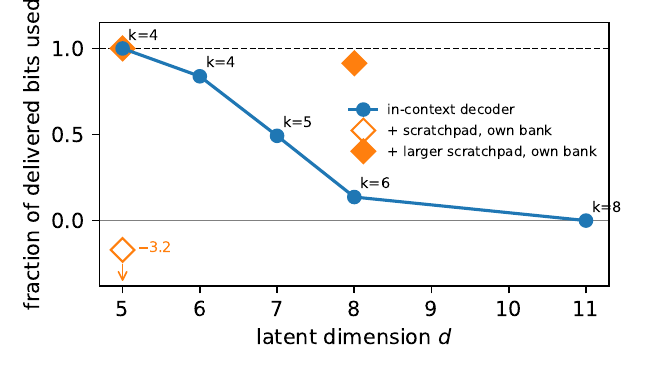}
\caption{\textbf{Fraction of delivered bits used against latent dimension,} at the zero-error split with the planner's sources. In-context decoder: three banks per point at width $128$ (one pretraining run per bank). Diamonds: the scratchpad executors, each on its own canonical bank, which differs from the in-context banks. The smaller $d=5$ executor puts less probability on $A^\star$ than the uniform prior, so its fraction is negative; it is drawn at the bottom edge with its value.}\label{fig:capacity}
\end{figure}

\begin{figure}[h]
\centering\includegraphics[width=.7\linewidth]{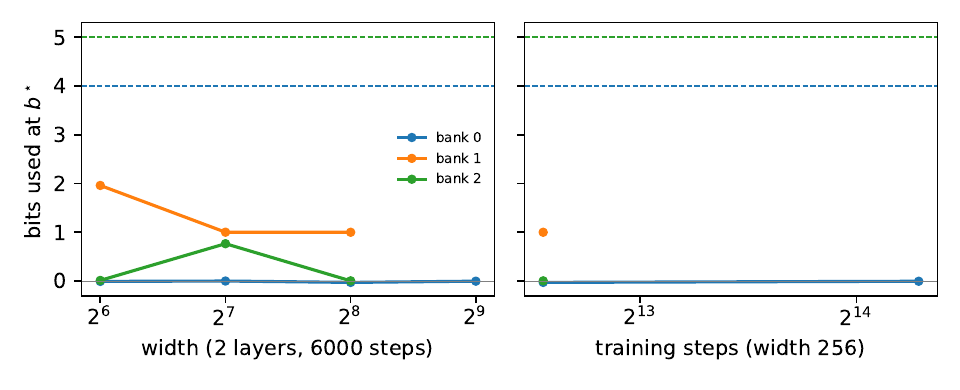}
\caption{\textbf{Capacity sweep at $d=8$.} In-context decoder on each of the three $d=8$ banks, one pretraining seed per configuration: bits used at the zero-error split against width (two layers, $6000$ steps) and against training steps (width $256$). Dashed lines mark each bank's delivered bits (4, 5 and 5). No configuration reaches the frontier, and with one seed per configuration there is no clear trend in width or steps.}\label{fig:capacitysweep}
\end{figure}

\paragraph{Bayes-optimal training loss.}
For the in-context task, the optimal predictor of each supervised bit is the exact posterior given the labels seen so far, computed by enumerating all $2^d$ worlds (\texttt{neural/bayes\_loss.py}, $1000$ sampled training sequences per bank). Its average cross-entropy is \BayesLossDFive{} nats at $d=5$, \BayesLossDSix{} at $d=6$, \BayesLossDSeven{} at $d=7$, \BayesLossDEight{} at $d=8$, and \BayesLossDEleven{} at $d=11$, against $\ln 2=0.693$ for an uninformed predictor; a model whose training loss stays at $\ln2$ has captured none of the available signal.

\paragraph{GRPO lock-in.}
A source whose answer has become confidently wrong through earlier updates receives only all-fail groups, whose group-normalized advantage is zero, and the label is never acquired within the cap of $25$ groups. The implementation still takes an optimizer step on such groups, and one Adam state is shared across sources, so momentum keeps moving the parameters without a policy gradient. We count a source as locked in when its final probability is below $10^{-2}$. Under GRPO, \IWgrpoStuckRuns{} of the \IWgrpoRuns{} runs with a curriculum end with a locked-in source, and \IWgrpoStuckCallsPct{} of the source calls of all \IWgrpoRuns{} runs go to sources that end locked in (\IWklStuckRuns{} runs and \IWklStuckCallsPct{} with the KL term). The damage to the target prior is partly an artifact of this implementation, so these runs say nothing general about policy-gradient methods.

\paragraph{Undiscovered runs.}
A target run that reaches its cap without a reward is reported as undiscovered and counted at its cap in call totals. GRPO acquisition on source prompts often drives the target probability below $10^{-6}$; those runs are retained.

\section{Curricula, degree analysis and controls}\label{app:controls27}
\paragraph{Setup of \cref{sec:neuralexp}.} The \emph{in-context} learner is a two-layer decoder-only transformer ($d_{\rm model}=128$, about $0.4$M parameters) trained from scratch. It is pretrained on sequences that list the acquired (source, label) pairs for a random subset of sources, followed by the $k$ target bits. At test time the planner chooses which labels appear in the prompt, one call each. We use $(k,h,J)\in\{(4,1,8),(4,2,12),(5,2,20),(6,2,16),(8,3,32)\}$, so the latent dimension is $d=k+h\in\{5,6,7,8,11\}$, with three random banks for each and four secrets per bank. Each bank has one pretraining run of $6{,}000$ steps ($8{,}000$ at $d=11$), and bank~0 at $d\in\{6,7,8\}$ is repeated with three training seeds (\cref{app:vfiveexp}). We evaluate at the \emph{zero-error split}, with the $b^\star$ sources of an optimal split of $C^\star$ (\cref{sec:planner}).

\paragraph{Spread over banks and seeds.} At the zero-error split the fraction used ranges over banks from \ICfracDSixMin{} to \ICfracDSixMax{} at $d=6$, \ICfracDSevenMin{} to \ICfracDSevenMax{} at $d=7$ and \ICfracDEightMin{} to \ICfracDEightMax{} at $d=8$. On bank~0, three training seeds span \SeedMinSix--\SeedMaxSix{} at $d=6$, \SeedMinSeven--\SeedMaxSeven{} at $d=7$ and \SeedMinEight--\SeedMaxEight{} at $d=8$. Bayes-optimal pretraining losses are in \cref{app:vfiveexp}. Post hoc per-bit scores (\cref{tab:perbit}) show that at $6{,}000$ steps each determined bit is used almost fully or almost not at all.

\paragraph{What choosing sources is worth.} \Cref{tab:curricula27} compares curricula at the zero-error split. At $d=5$, where the decoder decodes, the learner reproduces the ordering that the exact references fix. The planner's sources need \CurFiveBlindQsp{} expected samples against the posterior's \CurFiveBlindRefQsp; a random basis needs \CurFiveBlindRandom{} against \CurFiveBlindRefRandom; no sources need \CurFiveBlindNone. At $d=8$ the exact references still differ by a factor of \CurEightSpreadRef{} between the planner's sources and none, but the learner's costs differ by a factor of \CurEightSpreadLearner. Counted in distinct verifications instead, as in \cref{thm:rank}, the planner's sources leave the $d=8$ learner \CurEightElimQsp{} guesses against the posterior's \CurEightBayesQsp{}, and no sources \CurEightElimNone.

\paragraph{Supervised traces.} A variant pretrained to write out the elimination, reducing each acquired row against a table of pivots, uses \SPbigSixDEightfracQsp{} of the delivered bits with the planner's sources on a separately drawn $d=8$ bank. The variant is larger ($2.6$M parameters) and supervised at every step, so this separates neither size nor serial computation from process supervision \citep{wies2023,abbe2024far,kim2025transformers}.

\paragraph{Curricula.} \Cref{tab:curricula27} gives the costs quoted in \cref{sec:neuralexp}. Guesses count distinct verifications, as in \cref{thm:rank}; samples count draws with replacement, as a policy makes them.
\begin{table}[h]\centering\small\setlength{\tabcolsep}{4pt}
\caption{\textbf{Curricula at the zero-error split} (in-context decoder, three banks, four secrets each; $|S|$ ranges over banks, whose $b^\star$ differ). Bits are $k+\log_2\E$ over contexts of $2^{\rho-k}$ (delivered) and of $P(A^\star)$ (used). On frontier is the fraction of contexts whose own bits used are within $0.2$ of their own $\rho(S)$; coset mass is the mean probability the learner puts on the targets compatible with the acquired labels. Samples is the expected number of draws, with replacement, from the unchanged policy until a verification accepts, $\E[1/P(A^\star)]$; guesses is the number of distinct verifications made by a learner that verifies its most probable answers first, as in Theorem~\ref{thm:rank}, averaged over contexts. The exact posterior is in parentheses: $\E[2^{k-\rho}]$ samples and $\E[(2^{k-\rho}+1)/2]$ guesses. Means are over twelve contexts and are therefore noisy: without sources every learner has the same expected number of guesses as the exact posterior, $(2^k+1)/2$, so the difference in those rows is sampling noise.}\label{tab:curricula27}
\begin{tabular}{llrrrrrrr}\toprule
$d$ & sources & $|S|$ & delivered & used & on frontier & coset mass & samples (exact) & guesses (exact) \\ \midrule
5 & none & 0 & 0.00 & 0.00 & 1.00 & 1.00 & 16.0 (16.0) & 8.0 (8.5) \\
 & random basis & 2--3 & 1.74 & 1.40 & 0.75 & 0.86 & 7.8 (5.3) & 4.2 (3.2) \\
 & greedy & 2--3 & 2.74 & 2.65 & 0.92 & 0.96 & 2.8 (2.7) & 1.8 (1.8) \\
 & planner & 2--3 & 2.74 & 2.74 & 1.00 & 1.00 & 2.7 (2.7) & 1.6 (1.8) \\
\midrule
8 & none & 0 & 0.00 & 0.00 & 1.00 & 1.00 & 64.0 (64.0) & 34.8 (32.5) \\
 & random basis & 4--6 & 3.58 & 0.44 & 0.00 & 0.14 & 51.8 (8.0) & 24.0 (4.5) \\
 & greedy & 4--6 & 4.74 & 0.63 & 0.00 & 0.06 & 45.6 (2.7) & 23.6 (1.8) \\
 & planner & 4--6 & 4.74 & 0.65 & 0.00 & 0.06 & 45.2 (2.7) & 20.9 (1.8) \\
\midrule
11 & none & 0 & 0.00 & 0.00 & 1.00 & 1.00 & 256 (256) & 105.2 (128.5) \\
 & random basis & 8 & 5.00 & 0.00 & 0.00 & 0.03 & 256 (8.0) & 96.6 (4.5) \\
 & greedy & 8 & 6.58 & 0.00 & 0.00 & 0.01 & 256 (3.0) & 103.2 (2.0) \\
 & planner & 8 & 7.00 & 0.00 & 0.00 & 0.01 & 256 (2.0) & 104.1 (1.5) \\
\bottomrule\end{tabular}\end{table}
\paragraph{Degree.}
Target bits are emitted in the order $a_0,\dots,a_{k-1}$ and teacher-forced, so when the decoder predicts $a_i$ it has seen the acquired labels and $a_0,\dots,a_{i-1}$. Bit $a_i$ is determined when $T_i$ lies in the span of the acquired rows $B_S$ and $T_0,\dots,T_{i-1}$, and its degree is the fewest of those rows whose XOR is $T_i$. The degree is computed from the bank and the planner's subset alone (\texttt{neural/select\_degree\_banks.py}). Table~\ref{tab:degrees} lists every bank. On the released banks, a bank's largest degree separates the five banks on the frontier (at least \DegLowMin{} of the delivered bits used) from the ten that are not (at most \DegHighMax); no released bank has largest degree three. The per-bit account is weaker: at $d\ge7$ some banks use none of their degree-two bits.

\paragraph{Degree-controlled test.}
Because degree grows with $d$ in the released banks, we fixed $(k,h,J)=(4,2,12)$, drew banks from a seed range disjoint from all other runs, and, among banks with at least three determined bits, kept the first three whose determined bits all have degree at most two and the first three with a bit of degree four or more (\texttt{results\_v6/degree\_banks.json}). Before training we recorded two predictions: (P1) at least $95\%$ of the delivered bits are used on every low bank and less than $95\%$ on every high bank; (P2) on every high bank the bits used are at most the number of determined bits of degree at most two, plus $0.5$. Each bank was trained with the default configuration and two training seeds.
\CtrlResultsParagraph{} \DegreeConclusion

\paragraph{Which bit is lost (post hoc).}
To see which delivered bit the decoder loses, we reran the first training seed of the controlled banks with the same seeds, which reproduces every recorded probability exactly, stored the full distribution over targets, and scored the budget curve on all $2^d=64$ secrets (\cref{tab:perbit}; \texttt{results\_v6/run\_perbit2.sh}). On all \PerBitN{} banks the decoder uses each determined bit either almost fully (at least \PerBitUsedMin{} bit) or almost not at all (at most \PerBitLostMax{} bit). At $b^\star$ the lost bits are \PerBitLostList. With every source given, so that every target bit is determined, it uses \PerBitFullByPos: at this budget use depends on the position of a bit in the answer, and the first bit is never used. We have no explanation for this pattern. Degree does not predict it, nor does how often a bit is determined during pretraining. The fewest acquired labels a bit needs, counting earlier target bits as free, separates used from lost bits on 17 of the 18 determined bits here, but it fails on the released banks at $d\in\{5,6\}$, where bits that need two labels are used (\texttt{neural/perbit\_diagnostics.py}). Scoring all 64 secrets instead of four changes the bits used by at most \PerBitSecretsGap{} bit.

\paragraph{Four times the training budget (post hoc).}
After bank 202609503 reached the frontier when trained for $24{,}000$ steps instead of $6{,}000$, we recorded two predictions for the remaining five banks before any of their results were seen (\texttt{results\_v6/long\_predictions.json}): (L1) at least $95\%$ of the delivered bits used at $b^\star$ on every low-degree bank and less on every high-degree bank; (L2) on every high-degree bank, bits used at most the number of determined bits of degree at most two, plus $0.5$. \LongSentence{} \LongTestSentence{} Longer training lowers the loss on every bank, but whether a bit is recovered is not predicted by its degree: the recovered bits have degrees two and three, one of degree four is recovered partly (0.72 bit), and a degree-two bit ($a_1$) stays lost on bank 202609512.
\begin{table}[h]\centering\footnotesize\setlength{\tabcolsep}{3pt}
\caption{\textbf{Which bit is lost (post hoc).} Controlled banks, first training seed, $6{,}000$ pretraining steps unless stated, rerun with the same seeds (reproducing the recorded runs exactly) and scored on all $2^d=64$ secrets. Per-bit use of $a_i$ is $1+\E\log_2P(a_i\mid a_{<i},\text{labels})$; the four terms add up to $k+\E\log_2P(A^\star)$, which is at most the bits used $k+\log_2\E P(A^\star)$ by Jensen's inequality. Bits used over all 64 secrets are compared with the four-secret estimate of \cref{tab:degrees}. The column for all sources scores the largest budget on the planner's profile, where every target bit is determined; the last column repeats $b^\star$ after $24{,}000$ steps.}\label{tab:perbit}
\begin{tabular}{llllrrll}\toprule
set & bank & degrees at $b^\star$ & per-bit use at $b^\star$ & used (64) & used (4) & per-bit use, all sources & at $b^\star$, $24{,}000$ steps \\ \midrule
low & 202609500 & $(2,2,\cdot,1)$ & $0.13,1.00,0.00,1.00$ & 2.17 & 1.97 & $0.02,1.00,0.03,1.00$ & $1.00,1.00,0.00,1.00$ \\
low & 202609503 & $(2,2,\cdot,2)$ & $0.00,1.00,0.00,1.00$ & 2.00 & 2.00 & $0.00,1.00,1.00,1.00$ & $1.00,1.00,0.00,1.00$ \\
low & 202609512 & $(\cdot,2,1,2)$ & $0.00,0.00,1.00,1.00$ & 2.00 & 2.00 & $0.00,0.00,1.00,1.00$ & $0.00,0.00,1.00,1.00$ \\
high & 202609506 & $(\cdot,3,4,2)$ & $0.00,0.02,0.11,1.00$ & 1.16 & 1.27 & $0.01,0.02,0.03,1.00$ & $0.00,1.00,0.72,1.00$ \\
high & 202609518 & $(\cdot,4,2,1)$ & $0.00,0.00,1.00,1.00$ & 2.00 & 2.00 & $0.00,0.00,1.00,1.00$ & $0.00,0.01,1.00,1.00$ \\
high & 202609534 & $(\cdot,4,2,2)$ & $0.00,0.00,1.00,1.00$ & 2.00 & 2.00 & $0.02,0.02,1.00,1.00$ & $0.00,0.00,1.00,1.00$ \\
\bottomrule\end{tabular}\end{table}

\paragraph{Same-bank control for the scratchpad.}
The $d=8$ scratchpad executor was trained and evaluated on its own bank, drawn by \texttt{make\_canonical\_bank}. \CanonicalSentence{} The comparison with the scratchpad therefore concerns one bank, twelve contexts, and learners that differ in supervision, architecture and size, as well as in serial computation.

\begin{table}[h]\centering\small\setlength{\tabcolsep}{5pt}
\caption{\textbf{XOR degrees and bits used at the zero-error split.} Degree of target bit $a_i$: the fewest rows among the planner's acquired source rows and the earlier target rows whose XOR is $T_i$ ($\cdot$ = not determined). Bits are $k+\log_2\E$ of $2^{\rho-k}$ (delivered) and of $P(A^\star)$ (used) over four secrets with the planner's sources. Released banks list one training seed, except bank 0 at $d\in\{6,7,8\}$, which lists three; controlled banks list both training seeds.}\label{tab:degrees}
\begin{tabular}{llllrr}\toprule
set & $d$ & bank & degrees $(a_0,\dots,a_{k-1})$ & delivered & used \\ \midrule
released & 5 & 0 & $(\cdot,1,\cdot,2)$ & 2.00 & 2.01 \\
released & 5 & 1 & $(\cdot,2,2,2)$ & 3.00 & 2.99 \\
released & 5 & 2 & $(\cdot,2,2,2)$ & 3.00 & 3.00 \\
released & 6 & 0 & $(\cdot,1,4,3)$ & 3.00 & 1.46, 1.98, 1.90 \\
released & 6 & 1 & $(\cdot,\cdot,2,2)$ & 2.00 & 1.99 \\
released & 6 & 2 & $(\cdot,2,2,2)$ & 3.00 & 3.00 \\
released & 7 & 0 & $(2,\cdot,4,2,3)$ & 4.00 & 1.03, 0.03, 1.01 \\
released & 7 & 1 & $(\cdot,5,2,2,2)$ & 4.00 & 3.12 \\
released & 7 & 2 & $(\cdot,4,3,2,2)$ & 4.00 & 0.02 \\
released & 8 & 0 & $(\cdot,\cdot,3,3,2,4)$ & 4.00 & 0.00, -0.02, 0.48 \\
released & 8 & 1 & $(1,4,\cdot,2,3,3)$ & 5.00 & 1.00 \\
released & 8 & 2 & $(3,\cdot,3,3,4,3)$ & 5.00 & 0.77 \\
released & 11 & 0 & $(\cdot,6,3,5,3,3,2,3)$ & 7.00 & 0.00 \\
released & 11 & 1 & $(\cdot,5,2,3,3,4,4,3)$ & 7.00 & 0.00 \\
released & 11 & 2 & $(6,2,\cdot,3,3,3,5,4)$ & 7.00 & 0.00 \\
\midrule
controlled (low) & 6 & 202609500 & $(2,2,\cdot,1)$ & 3.00 & 1.97, 2.06 \\
controlled (low) & 6 & 202609503 & $(2,2,\cdot,2)$ & 3.00 & 2.00, 2.08 \\
controlled (low) & 6 & 202609512 & $(\cdot,2,1,2)$ & 3.00 & 2.00, 2.01 \\
controlled (high) & 6 & 202609506 & $(\cdot,3,4,2)$ & 3.00 & 1.27, 1.25 \\
controlled (high) & 6 & 202609518 & $(\cdot,4,2,1)$ & 3.00 & 2.00, 1.99 \\
controlled (high) & 6 & 202609534 & $(\cdot,4,2,2)$ & 3.00 & 2.00, 2.01 \\
\bottomrule\end{tabular}\end{table}
\section{Numerical methods, certificates, and reproducibility}\label{app:compute}
\subsection{Two independently formulated optimizations}
For fixed $\beta$, enumerate coverage columns and maximize
$F_\beta(\mu)=\sum_\theta q_\theta\ln\sum_L\mu_Le^{\beta c_{\theta L}}$ over the simplex, using analytic gradients and a sequential quadratic programming optimizer. A multiplicative Blahut--Arimoto polishing step is available. The returned Gibbs channel is evaluated directly, using its \emph{actual} marginal $\mu'=qW$ rather than assuming that the optimization variable is already its marginal.

There is a globally valid certificate even before convergence. For the current $\mu$, define $Z$, $g_L=\sum_\theta q_\theta e^{\beta c_{\theta L}}/Z_\theta$, and $g_{\max}=\max_Lg_L$. At the channel's achieved success $s$, a lower bound on its optimal information cost is
\[
 \frac{\beta s-F_\beta(\mu)-\ln g_{\max}}{\ln2}.
\]
The actual channel information equals
$[\beta s-F_\beta(\mu)-D_{\mathrm{nat}}(\mu'\|\mu)]/\ln2$.
Their difference is a primal--dual gap; failed certificates raise an error rather than being reported as optima. An explicit reference prior $r_\theta\propto q_\theta/Z_\theta$ can further strengthen the lower bound via~\eqref{eq:dual}.

The second optimization does not parameterize a list marginal or construct a Gibbs channel. It maximizes
\[
 \frac{\sum_\theta q_\theta t_\theta-\sum_\theta q_\theta\ln q_\theta+\beta s}{\ln2}
 \quad\text{subject to}\quad
 \operatorname{logsumexp}_\theta(t_\theta+\beta c_{\theta L})\le0\quad\forall L,
 \quad\beta\ge0.
\]
This is a classical rate--distortion geometric-programming dual in logarithmic coordinates \citep{chiang2004}. After optimization, subtract the largest constraint violation, plus $10^{-13}$, from every $t_\theta$ to restore feasibility before reporting the bound. Both formulations use floating-point arithmetic, and the release reports their numerical tolerances.

\subsection{Experimental configurations}
The certificate generator uses NumPy seed $202609051$. There are 240 prior--geometry settings with $M\in\{3,\ldots,9\}$ worlds, local budgets in $\{0,1,2\}$, and either unique-answer identity geometry or overlapping binary acceptance matrices with three to seven responses. Each world is made coverable. Priors are drawn from a Dirichlet distribution with concentration $0.7$. Four natural-unit multipliers $\beta\in\{0.15,0.7,2,6\}$ produce 960 points. Success targets are those attained by the primal channel; the independent dual is solved at exactly those targets. A separate program computes the zero-error value on every setting.

The source-protocol audits recursively enumerate every finite transcript of fixed-seed adaptive policies. They vary priors, reference priors, acceptance geometry, queries, Bernoulli source kernels, and stopping decisions. The 2,000-tree suite is run, and its first 240 trees are also checked using independently optimized rate--distortion duals for their joint success--budget probabilities. The coverage check uses both exhaustive subsets and an independent Bellman recursion. All branches of each generated tree are summed; no trajectories are sampled.

For the noisy theorem, 81 values of $\epsilon$ are logarithmically spaced between $.002$ and $.45$, with $\delta=\epsilon^2$. The explicit conditional test-channel joint distributions are independently evaluated for error and information. Unit tests compare the upfront formula to a posterior-mixture linear program on a 1,001-point grid augmented with the analytic tangent and target-error points, at four noise levels and four error budgets each.

\paragraph{Computational limits.}
Response-list enumeration is exponential, and the information ledger requires the true probabilistic model. Released channel matrices and independent dual variables can be checked without optimization using \texttt{python -m revision.verify\_saved\_certificates}.

\paragraph{Commands.}
Run \texttt{bash scripts/reproduce\_paper.sh} for the test suite, the theory checks (including \texttt{python -m theory\_checks.check\_bridge}), the stress and local-search experiments, the replay of the saved external-solver outputs, the number macros and all figures except \cref{fig:nonuniform,fig:noise} (redrawn by \texttt{python -m revision.run\_frontier\_experiments}), and the manuscript. With \texttt{RUN\_NEURAL=1} it also reruns every transformer run, listed in \texttt{scripts/reproduce\_v5\_neural.sh}; in sequence they take one to two days of CPU time.
\section{Extended related work}\label{app:relatedfull}
\paragraph{Information frontier.}
Static list-Fano functions, rate--distortion duals and convex-corner entropies underlie the frontier \citep{csiszar1974extremum,sakai2020,blahut1972,chiang2004,csiszar1990}. Guessing with side information has been priced for a helper of fixed rate, a compressed description and a distortion criterion \citep{arikan1998guessing,bunte2014encoding,weinberger2020guessing,saito2018nonasymptotic,graczyk2021}; \citet{saito2018nonasymptotic} already allow queries of the form ``is this answer acceptable'', which corresponds to overlapping acceptance sets here. The causal charge~\eqref{eq:budget} is directed information \citep{massey1990causality}. The converse uses change of measure \citep{kaufmann2016,simchowitz2017} and the fundamental inequality of \citet{garivier2019explore} in the interactive-Fano form \citep{chen2024assouad}; related Bayes-risk lower bounds via small-ball probabilities are \citet{chen2016bayes,xu2017information,gerchinovitz2020fano}, where the small-ball probability plays the role of $V_n$. The achievability part of \cref{prop:general} is channel simulation with common randomness through the strong functional representation lemma \citep{li2018strong}.

\paragraph{Queries and value of information.}
Verification is an equivalence query and source calls are membership-style queries \citep{angluin1988queries}; twenty questions with equality and comparison queries \citep{dagan2017twenty} counts every query, whereas here verifications are capped but free and only auxiliary calls are counted. For $n=0$ and unique answers, $\mathcal R_0(q,s)$ is Erokhin's $\epsilon$-entropy, and the expected-call results relate to variable-length compression allowing errors \citep{kostina2015variable}. Deterministic source-first protocols at zero error are minimum-entropy covers, whose gap to hypergraph entropy is studied by \citet{alon1996source,cardinal2008minimum}. The terminal utility of \cref{thm:fixed} is classical value of information \citep{krause2009}, and XOR complementarity is the textbook failure of one-step information gain \citep{krause2005,golovin2011adaptive}. Source-versus-target data trade-offs in transfer learning \citep{hanneke2019value} concern statistical estimation rather than a verifier budget.

\paragraph{Linear banks.}
The rank profile is a relative dimension/length profile \citep{wei1991generalized,forney1994dimension,luo2005,kurihara2012,zhuang2014relative}, with existing solvers \citep{sanjose2025}, one of which we run directly. Support recovery is closure learning of subspaces \citep{helmbold1992learning} with random-matrix counting \citep{fulman2015}.

\paragraph{Reinforcement learning from verifiable rewards.}
Recent analyses study base-model coverage \citep{foster2025,chen2026coverage}, sharpening \citep{huang2025self}, compositional learnability \citep{barzilai2026}, generation budgets \citep{wachi2026}, autocurricula \citep{rajaraman2026,rajaraman2026learning}, implicit curricula \citep{huang2026implicit}, outcome-based learning of reasoning \citep{ranmilo2026outcome}, the limits of outcome feedback \citep{chen2025outcome}, query complexity with a verifier \citep{botta2025query}, whether such training extends or sharpens a base model \citep{yue2025does}, how it compares with supervised fine-tuning \citep{chu2025sft}, and why it forgets less than supervised fine-tuning \citep{shenfeld2025razor}. We price the information and calls that source tasks deliver about one instance; we do not model generalization across prompts.

\paragraph{Parities and learned decoders.}
The in-context decoder is in-context learning of a function class \citep{garg2022,akyurek2023}, here linear systems over $\F$. Gradient descent learns sparse parities slowly \citep{barak2022}, transformers fail at in-context parities \citep{bhattamishra2024understanding}, and curricula or intermediate supervision make parities learnable \citep{abbe2023provable,cornacchia2023mathematical,wang2025learning,wies2023,abbe2024far,kim2025transformers,malach2024autoregressive,joshi2025theory,wen2025sparse}. Expressivity results for chains of thought \citep{feng2023,li2024cot,merrill2024} concern representation, not learnability. Usable versus Shannon information is formalized by $\mathcal V$-information \citep{xu2020theory}. The in-weight learner's failure to compose stored facts is the known difficulty of knowledge manipulation \citep{allenzhu2023,berglund2023}, and its failure to gain structure from more contexts echoes the diversity threshold for in-context learning \citep{raventos2023}. The policy-gradient learner is GRPO \citep{shao2024}.
\end{document}